\documentclass[twoside,11pt]{article}

\usepackage[nohyperref]{jmlr2e}

\usepackage{lastpage}
\jmlrheading{26}{2025}{1-\pageref{LastPage}}{5/24}{1/25}{24-0737}{Rahul Parhi, Pakshal Bohra, Ayoub El Biari, Mehrsa Pourya, and Michael Unser}

\ShortHeadings{Random ReLU Neural Networks as Non-Gaussian Processes}{Parhi, Bohra, {El Biari}, Pourya, and Unser}
\firstpageno{1}

\usepackage{macros}
\allowdisplaybreaks[4]

\usepackage{hyperref}
\usepackage[capitalise,nameinlink]{cleveref}

\hypersetup{bookmarksopen=true}

\usepackage{crossreftools}
\pdfstringdefDisableCommands{%
    \let\Cref\crtCref
    \let\cref\crtcref
}

\crefformat{equation}{\textup{#2(#1)#3}}
\crefrangeformat{equation}{\textup{#3(#1)#4--#5(#2)#6}}
\crefmultiformat{equation}{\textup{#2(#1)#3}}{ and \textup{#2(#1)#3}}
{, \textup{#2(#1)#3}}{, and \textup{#2(#1)#3}}
\crefrangemultiformat{equation}{\textup{#3(#1)#4--#5(#2)#6}}%
{ and \textup{#3(#1)#4--#5(#2)#6}}{, \textup{#3(#1)#4--#5(#2)#6}}{, and \textup{#3(#1)#4--#5(#2)#6}}

\Crefformat{equation}{#2Equation~\textup{(#1)}#3}
\Crefrangeformat{equation}{Equations~\textup{#3(#1)#4--#5(#2)#6}}
\Crefmultiformat{equation}{Equations~\textup{#2(#1)#3}}{ and \textup{#2(#1)#3}}
{, \textup{#2(#1)#3}}{, and \textup{#2(#1)#3}}
\Crefrangemultiformat{equation}{Equations~\textup{#3(#1)#4--#5(#2)#6}}%
{ and \textup{#3(#1)#4--#5(#2)#6}}{, \textup{#3(#1)#4--#5(#2)#6}}{, and \textup{#3(#1)#4--#5(#2)#6}}

\crefname{remark}{Remark}{Remarks}
\crefname{lemma}{Lemma}{Lemmas}
\crefname{defn}{Definition}{Definitions}
\crefname{corollary}{Corollary}{Corollaries}
\crefname{prop}{Proposition}{Propositions}
\crefname{figure}{Figure}{Figures}

\hypersetup{hidelinks}

\begin{document}

\title{Random ReLU Neural Networks as Non-Gaussian Processes}

\author{%
\name \name Rahul Parhi \email rahul@ucsd.edu \\
       \addr Department of Electrical and Computer Engineering \\
        University of California, San Diego \\
       La Jolla, CA 92093, USA
       \AND
\name{Pakshal Bohra} \email{pakshalbohra@gmail.com}\\
\name{Ayoub {El Biari}} \email{ayoubelbiari@gmail.com} \\
\name{Mehrsa Pourya} \email{mehrsa.pourya@epfl.ch} \\
\name{Michael Unser} \email{michael.unser@epfl.ch} \\
\addr Biomedical Imaging Group \\
\'Ecole polytechnique f\'ed\'erale de Lausanne \\
CH-1015 Lausanne, Switzerland}

\editor{Mohammad Emtiyaz Khan}

\maketitle

\begin{abstract}%
We consider a large class of shallow neural networks with randomly initialized parameters and rectified linear unit activation functions. We prove that these random neural networks are well-defined non-Gaussian processes. As a by-product, we demonstrate that these networks are solutions to stochastic differential equations driven by impulsive white noise (combinations of random Dirac measures). These processes are parameterized by the law of the weights and biases as well as the density of activation thresholds in each bounded region of the input domain. We prove that these processes are isotropic and wide-sense self-similar with Hurst exponent $3/2$. We also derive a remarkably simple closed-form expression for their autocovariance function. Our results are fundamentally different from prior work in that we consider a non-asymptotic viewpoint: The number of neurons in each bounded region of the input domain (i.e., the width) is itself a random variable with a Poisson law with mean proportional to the density parameter. Finally, we show that, under suitable hypotheses, as the expected width tends to infinity, these processes can converge in law not only to Gaussian processes, but also to non-Gaussian processes depending on the law of the weights. Our asymptotic results provide a new take on several classical results (wide networks converge to Gaussian processes) as well as some new ones (wide networks can converge to non-Gaussian processes).

\end{abstract}

\begin{keywords}%
    Gaussian processes,
    non-Gaussian processes,
    random initialization,
    random neural networks,
    stochastic processes.
\end{keywords}

\section{Introduction} \label{sec:intro}

A shallow (single-hidden-layer) neural network is a function of the form
\begin{equation}
    \vec{x} \mapsto \sum_{k=1}^N v_k \sigma(\vec{w}_k^\T\vec{x} - b_k), \quad \vec{x} \in \R^d,
    \label{eq:shallow-nn}
\end{equation}
where $\sigma:\R \rightarrow \R$ is the activation function, $N$ is the width of the network, and, for $k = 1, \ldots, N$, $v_k \in \R$ and $\vec{w}_k \in \R^d \setminus \curly{\vec{0}}$ are the weights and $b_k \in \R$ are the biases of the network. It is well-known that, as $N \to \infty$, several such networks with i.i.d. random weights and biases are equivalent to a Gaussian process~\citep{neal1996bayesian}. This result was extended to deep neural networks with i.i.d.\ random parameters by \citet{lee2018deep}. This correspondence enables exact Bayesian inference for regression using wide neural networks~\citep{williams1996computing, lee2018deep}.

Motivated by the tight link between wide neural networks and stochastic processes, we study properties of shallow rectified linear unit (ReLU) neural networks with randomly initialized parameters, henceforth referred to as \emph{random (ReLU) neural networks}. We study Poisson-type random functions of the form
\begin{equation}
    s_{\ReLU}(\vec{x}) = \sum_{k \in \Z} v_k \sq*{\ReLU(\vec{w}_k^\T\vec{x} - b_k) + \vec{c}_k^\T\vec{x} + c_{0, k}}, \quad \vec{x} \in \R^d,
    \label{eq:random-nn}
\end{equation}
where $\ReLU(t) \coloneqq  t_+ = \max\{0, t\}$, the $v_k$ are drawn i.i.d.\ with respect to the law $\Pr_V$ and the $(\vec{w}_k, b_k)$ are drawn such that
\begin{enumerate}
    \item the activation thresholds\footnote{The \emph{activation threshold} of the neuron $\vec{x} \mapsto \ReLU(\vec{w}^\T\vec{x} - b)$ is the hyperplane $H_{\vec{w}, b} = \curly{\vec{x} \in \R^d \st \vec{w}^\T\vec{x} = b}$.}
    are mutually independent;

    \item in expectation, the number of thresholds that intersect a finite volume in $\R^d$ is a constant (proportional to the product of a parameter $\lambda > 0$ and a property related to the geometry of the volume); and
    
    \item for every finite volume in $\R^d$, the thresholds are i.i.d.\ uniformly in the volume.
\end{enumerate}

The randomness that generates the $(\vec{w}_k, b_k)$ motivates the denomination \emph{Poisson} as it mimics the randomness in the jumps found in a unit interval of a compound Poisson process~\citep{daley2007introduction}. The parameter $\lambda > 0$ plays the role of the rate parameter of a compound Poisson process and controls the density of activation thresholds in each finite volume.
The correction terms $(\vec{x} \mapsto \vec{c}_k^\T\vec{x} + c_{0, k})_{k \in \Z}$ that appear in the sum are affine functions that ensure that the sum in \cref{eq:random-nn} converges almost surely. This is equivalent to imposing boundary conditions on $s_{\ReLU}$. These boundary conditions are crucial in proving that, under suitable hypotheses on $\Pr_V$, $s_{\ReLU}$ is a well-defined stochastic process. This is one of the primary technical contributions of this paper. Similar correction terms/boundary conditions appear in the definition of fractional Brownian motion~\citep{mandelbrot1968fractional} and L\'evy processes~\citep{ken1999levy,Jacob2001}.

By restricting our attention to compact subsets $\Omega \subset \R^d$, say, to the unit ball  $\B_1^d = \curly{\vec{x} \in \R^d \st \norm{\vec{x}}_2 \leq 1}$, we have that (see \cref{subsec:bounded-domain}) the process \cref{eq:random-nn} is realized by a random Poisson sum of the form
\begin{equation}
    \eval{s_{\ReLU}}_{\B_1^d} (\vec{x}) 
    = \vec{w}_0^\T\vec{x} + b_0 + \sum_{k = 1}^{N_\lambda} v_k \ReLU(\vec{w}_k^\T\vec{x} - b_k),
    \label{eq:random-nn-Poi-sum}
\end{equation}
where the width $N_\lambda$ is a Poisson random variable with mean $\lambda S$, where $S$ is proportional to the surface area of $\B_1^d$, and $\vec{w}_0^\T\vec{x} + b_0$ is an affine function. Thus, the form in the right-hand side of \cref{eq:random-nn-Poi-sum} is a \emph{finite-width} neural network with random parameters (including the width). The affine function $\vec{x} \mapsto \vec{w}_0^\T\vec{x} + b_0$ is a \emph{skip connection} in neural network parlance.
As $\lambda \to \infty$, we have that the expected value of the width satisfies $\E\sq{N_\lambda} \to \infty$.
Therefore, this limiting scenario corresponds to the asymptotic (i.e., infinite-width) regime.

\subsection{Contributions}
The purpose of this paper is to study the properties of random neural networks as in \cref{eq:random-nn,eq:random-nn-Poi-sum} for the class of admissible laws $\Pr_V$ (in the sense of \cref{defn:admissible}) which, for example, includes the Gaussian law. As these networks are completely specified by the law $\Pr_V$ and the rate parameter $\lambda > 0$, we let
\begin{equation}
    s_{\ReLU}(\dummy) \sim \ReLUP(\lambda; \Pr_V)
\end{equation}
denote that $s_{\ReLU}$ is generated according to the randomness described above, where $\ReLUP$ stands for \emph{ReLU process}. The main contributions of this paper are outlined below.

    \paragraph{Random ReLU Networks as Stochastic Processes} In \cref{sec:well-defined}, we prove that $s_{\ReLU}$ is a well-defined stochastic process. In doing so, we derive the so-called \emph{characteristic functional}\footnote{The characteristic functional of a stochastic process is analogous to the characteristic function of a random variable. See \cref{sec:gsp} for a detailed discussion.} of the process, which provides us with a complete characterization of its statistical distribution. Further, we show that $s_{\ReLU}$ is the unique continuous piecewise linear (CPwL) solution to the stochastic differential equation (SDE)
    \begin{equation}
        \TOp_{\ReLU} s \overset{\Law}{=} w \quad\subj\quad 
        \partial^\vec{m}s(\vec{0}) = 0, \abs{\vec{m}} \leq 1,
        \label{eq:SDE}
    \end{equation}
    where $\overset{\Law}{=}$ denotes equality in law and $\TOp_{\ReLU} = \KOp \RadonOp \Delta$ is the  \emph{whitening operator} for ReLU neurons. The driving term $w$ of the SDE is an \emph{impulsive white noise process} which is constructed from combinations of random Dirac measures.
    The boundary conditions $\partial^\vec{m}s(\vec{0}) = 0$, $\abs{\vec{m}} \leq 1$, are crucial in guaranteeing the existence of solutions to this SDE.
    In the form of the whitening operator, $\KOp$ is the filtering operator of computed tomography, $\RadonOp$ is the Radon transform, and $\Delta$ is the Laplacian (see \cref{sec:Radon} for a precise definition of these operators). The operator $\TOp_{\ReLU}$ was proposed by \citet{ongie2019function} to study the capacity of bounded-norm infinite-width ReLU networks. 

    \paragraph{Properties of Random ReLU Networks} In \cref{sec:properties}, we derive the first- and second-order statistics of $s_{\ReLU}$. Specifically, we present a remarkably simple closed-form expression for its autocovariance function. With the help of these statistics and the characteristic functional, we show that $s_{\ReLU}$ is a non-Gaussian process. We then show that $s_{\ReLU}$ is isotropic and wide-sense self-similar with Hurst exponent $H = 3/2$.
        
    \paragraph{Asymptotic Results} In \cref{sec:asymptotic}, we show that in the infinite-width regime ($\lambda \rightarrow \infty$), $s_{\ReLU}$ converges in law to a Gaussian process when $\Pr_V$ is a Gaussian law with a variance that is inversely proportional to $\lambda$. On the other hand, when $\Pr_V$ is a symmetric $\alpha$-stable (S$\alpha$S) law with $\alpha \in (1,2)$ and scaling parameter proportional to $\lambda^{-1/\alpha}$, $s_{\ReLU}$ converges in law to a non-Gaussian process.

\subsection{Related Work}
There is a large body of work that investigates the connections between neural networks with random initialization and stochastic processes. Early work in this direction is due to~\citet{neal1996bayesian} who proved that wide limits of shallow neural networks with bounded activation functions are Gaussian processes when the $(\vec{w}_k, b_k)$ are drawn i.i.d.\ with respect to any law and the $v_k$ are drawn i.i.d.\ with respect to a law that has zero mean and finite variance. More recently, it has been argued by many authors, with varying degrees of mathematical rigor, that deep neural networks with i.i.d.\ random initialization are Gaussian processes in wide limits~\citep{lee2018deep,matthews2018gaussian, garriga-alonso2018deep,novak2019bayesian,yang2019wide,dyer2019asymptotics,hanin2021random}.

Another line of work that is closely related to our setting is that of~\citet{yaida2020non}, who studies the stochastic processes realized by \emph{finite-width} random neural networks and shows that such processes are \emph{non-Gaussian}. The results of this paper are complementary to that of~\cite{yaida2020non} in that our finite-width networks as in \cref{eq:random-nn-Poi-sum} also correspond to non-Gaussian processes. However, our work is fundamentally different as we use the framework of generalized stochastic processes (see \cref{sec:gsp}). This allows us to derive the characteristic functional of the random neural network, which provides a complete description of its statistical distribution (i.e., the law of the process). The characteristic functional also allows us to easily study the limiting processes as the expected width $\E\sq{N_\lambda} \to \infty$, which \citet{yaida2020non} does not investigate.

In particular, we derive a novel and remarkably simple closed-form expression of the autocovariance function of the ReLU processes. Another important distinction of our asymptotic results compared to prior work on wide networks is that, in the asymptotic regime ($\lambda \to \infty$), the neural networks as in \cref{eq:random-nn,eq:random-nn-Poi-sum} can converge not only to Gaussian processes, but also to non-Gaussian processes, depending on the specific choice of $\Pr_V$. This type of result was alluded to by \citet{neal1996bayesian} in the case of S$\alpha$S initialization, although theoretical arguments were not carried out. Thus, this paper is the first, to the best of our knowledge, to carry out a rigorous investigation of the convergence of wide networks to non-Gaussian processes.

\section{Generalized Stochastic Processes}\label{sec:gsp}
The mathematical framework used in this paper is based on the theory of \emph{generalized stochastic processes}~\citep{MR0065060, MR0068769, gelfand1964generalized,ito1984foundations} as opposed to the more common ``time-series'' approach to studying stochastic processes. In this section, we present the relevant background on generalized stochastic processes. We also refer the reader to the book of \cite{unser2014introduction} for further background.
While this theory relies on some rather heavy concepts from functional analysis, it allows for elegant arguments to investigate the properties of the stochastic processes realized by the random neural networks in \cref{eq:random-nn,eq:random-nn-Poi-sum}.

Throughout this paper, we fix a complete probability space $(\Omega, \F, \Pr)$. Before we introduce this theory, we first recall some results from classical probability theory. A real-valued random vector $\vec{X}$ is a measurable function from the probability space $(\Omega, \F, \Pr)$ to $(\R^d, \cB(\R^d))$, where $\cB(\R^d)$ denotes the Borel $\sigma$-algebra on $\R^d$. The $\emph{law}$ of $\vec{X}$ is the \emph{pushforward measure}
\begin{equation}
    \Pr_\vec{X}(A) \coloneqq (\vec{X}_\sharp\Pr)(A) \coloneqq \Pr\paren*{\vec{X}^{-1}(A)} = \Pr\paren*{\curly{\omega \in \Omega \st \vec{X}(\omega) \in A}} = \Pr\paren*{\vec{X} \in A},
\end{equation}
for all $A \in \cB(\R^d)$. Consequently, the \emph{characteristic function} of $\vec{X}$ is the (conjugate) Fourier transform of $\Pr_\vec{X}$, given by
\begin{equation}
    \hat{\Pr}_\vec{X}(\vec{\xi}) = \E\sq{\e^{\imag \vec{X}^\T\vec{\xi}}}, \quad \vec{\xi} \in \R^d,
\end{equation}
where $\imag^2 = -1$.

Generalized stochastic processes are random variables that take values in the (continuous) dual of a \emph{nuclear space}. In the remainder of this section, let $\Nuc$ denote a nuclear space and $\Nuc'$ denote its dual. If $u \in \Nuc'$ and $\varphi \in \Nuc$, we let $\ang{u, \varphi}_{\Nuc' \times \Nuc}$ denote the the \emph{duality pairing} of $u$ and $\varphi$ (i.e., the evaluation of $u$ at $\varphi$). A prototypical example of a nuclear space is the Schwartz space $\Sch(\R^d)$ of smooth and rapidly decreasing test functions. Its dual $\Sch'(\R^d)$ is the space of tempered generalized functions.\footnote{This space is often referred to as the space of tempered \emph{distributions}. We adopt the nomenclature of tempered \emph{generalized functions} in this paper so as to not cause confusion with probability distributions.} In order to discuss random variables that take values in the dual of a nuclear space, we must equip that space with a $\sigma$-algebra.
\begin{definition}
    The \emph{cylindrical $\sigma$-algebra on $\Nuc'$}, denoted by $\cB_c(\Nuc')$, is the $\sigma$-algebra generated by cylinders of the form $\curly{u \in \Nuc' \st (\ang{u, \varphi_1}_{\Nuc' \times \Nuc}, \ldots, \ang{u, \varphi_N}_{\Nuc' \times \Nuc}) \in A}$, where $N \in \N \setminus \curly{0}$, $\varphi_1, \ldots, \varphi_N \in \Nuc$, and $A \in \cB(\R^N)$.
\end{definition}

We remark that when $\Nuc$ is not only nuclear, but also Fréchet, such as $\Sch(\R^d)$, the cylindrical $\sigma$-algebra $\cB_c(\Nuc')$ coincides with the Borel $\sigma$-algebra $\cB(\Nuc')$~\citep[see][]{fernique1967processus,ito1984foundations}.
\begin{definition}
    \label{def:gen_sto_pro}
    A \emph{generalized stochastic process} is a measurable mapping 
    \begin{equation}
        s: (\Omega, \F, \Pr) \to (\Nuc', \cB_c(\Nuc')).
    \end{equation}
    The \emph{law} of $s$ is then the probability measure $\Pr_s \coloneqq s_\sharp \Pr$ which is defined on $\cB_c(\Nuc')$. The \emph{characteristic functional}\footnote{The characteristic functional of a generalized stochastic process was introduced by \citet{kolmogorov1935transformation}.} of $s$ is the (conjugate) Fourier transform of $\Pr_s$, given by
    \begin{equation}
        \hat{\Pr}_s(\varphi) = \E\sq{\e^{\imag\ang{s, \varphi}_{\Nuc' \times \Nuc}}}, \quad \varphi \in \Nuc.
    \end{equation}
\end{definition}

Observe that this definition recovers the classical characteristic function for random vectors that take values in $\R^d$. Indeed, $\R^d$ is a nuclear space whose dual is $\R^d$. Furthermore, for any $(\vec{x}, \vec{\xi}) \in \R^d \times \R^d$, we have that $\ang{\vec{x}, \vec{\xi}}_{\R^d \times \R^d} = \vec{x}^\T\vec{\xi}$. %
The characteristic functional of a generalized stochastic process contains all statistical information of the process in the same way that the characteristic function of a classical random variable contains all statistical information of that random variable. Analogous to the finite-dimensional case, the Bochner--Minlos theorem (see \citet{minlos1959generalized}) says that a functional $\hat{\Pr}: \Nuc \to \mathbb{C}$ is the characteristic functional of a generalized stochastic process if and only if $\hat{\Pr}$ is continuous, positive definite, and satisfies $\hat{\Pr}(0) = 1$.

The attractive feature of the framework of generalized stochastic processes is that it covers not only classical stochastic processes, but also processes that do not admit a pointwise interpretation such as white noise processes. For example, a generalized Gaussian process is defined as follows.
\begin{definition} \label[definition]{defn:Gauss-process}
A \emph{generalized stochastic process} $s$ that takes values in $\Nuc'$ is said to be \emph{Gaussian} if its characteristic functional is of the form
\begin{equation}
    \hat{\Pr}_s(\varphi) = \exp\left(\imag \mu_s(\varphi) - \frac{1}{2}\Sigma_s(\varphi, \varphi)\right),
\end{equation}
where $\varphi \in \Nuc$, $\mu_s: \Nuc \rightarrow \R$ is the mean functional of the process, given by
\begin{equation}
\mu_s(\varphi) = \E\sq{\ang{s, \varphi}_{\Nuc' \times \Nuc}},     
\end{equation}
and $\Sigma_s: \Nuc \times \Nuc \rightarrow \R$ is the covariance functional of the process, given by
\begin{equation}
    \Sigma_s(\varphi_1, \varphi_2) = \E\sq{\left(\ang{s, \varphi_1}_{\Nuc' \times \Nuc} - \mu_s(\varphi_1)\right) \left(\ang{s, \varphi_2}_{\Nuc' \times \Nuc} - \mu_s(\varphi_2)\right)}.
\end{equation}
\end{definition}

The above definition is backwards compatible with 
classical Gaussian processes that are space-indexed, as shown by \citet{duttweiler1973rkhs}, yet it also includes Gaussian white noise~\citep{hida1967analysis}.

With this machinery in hand, the primary technical contributions of this paper are (i) to prove that, for any $\lambda > 0$ and any admissible $\Pr_V$ (in the sense of \cref{defn:admissible}), the random neural network $s_{\ReLU} \sim \ReLUP(\lambda; \Pr_V)$ is a generalized stochastic process that takes values in $\Sch'(\R^d)$, and (ii) to provide an explicit form of its (non-Gaussian) characteristic functional (\cref{sec:well-defined}). With the help of the latter, we then derive various properties of the stochastic process in the non-asymptotic regime (\cref{sec:properties}) and also study its asymptotic ($\lambda \to \infty$) behavior (\cref{sec:asymptotic}) for various $\Pr_V$.

\section{The Radon Transform and Related Operators} \label{sec:Radon} 
Our characterization of random ReLU neural networks as stochastic processes hinges on the whitening operator that appears in the SDE \cref{eq:SDE}. This operator is based on the Radon transform. In this section we introduce the relevant background on the Radon transform and related operators. We refer the reader to the books of \citet{RammRadonBook} and \citet{HelgasonIntegralGeometry} for an in depth treatment of the Radon transform. The Radon transform of $\varphi \in L^1(\R^d)$ is given by
\begin{equation}
    \RadonOp\curly{\varphi}(\Sphvar, t)  = \int_{\Sphvar^\T\vec{x} = t} \varphi(\vec{x}) \dd\vec{x}, \quad (\Sphvar, t) \in \cyl,
    \label{eq:Radon}
\end{equation}
where $\mathrm{d}\vec{x}$ denotes the integration against the $(d-1)$-dimensional Lebesgue measure on the hyperplane $\curly{\vec{x} \in \R^d \st \Sphvar^\T\vec{x} = t}$ and $\Sph^{d-1} = \curly{\vec{x} \in \R^d \st \norm{\vec{x}}_2 = 1}$ denotes the unit sphere in $\R^d$. Observe that the Radon transform of $\varphi$ is a \emph{even} since $(\Sphvar, t)$ and $(-\Sphvar, -t)$ parametrize the same hyperplane. The adjoint operator, or dual Radon transform, applied to $\phi \in L^\infty(\cyl)$ is given by
\begin{equation}
    \RadonOp^*\curly{\phi}(\vec{x}) = \int_{\Sph^{d-1}} \phi(\Sphvar, \Sphvar^\T\vec{x}) \dd\Sphvar, \quad \vec{x} \in \R,
\end{equation}
where $\mathrm{d}\Sphvar$ denotes integration against the surface measure of $\Sph^{d-1}$.

Let $\Sch(\cyl)$ denote the Schwartz space of smooth and rapidly decreasing functions on $\cyl$. The range of the Radon transform on $\Sch(\R^d)$, defined by $\Sch_{\RadonOp} \coloneqq \RadonOp\paren*{\Sch(\R^d)}$, is a closed subspace of $\Sch(\cyl)$~\citep[p.~60]{HelgasonIntegralGeometry}. Therefore, since $\Sch(\cyl)$ is nuclear, $\Sch_{\RadonOp}$ is also nuclear. The next proposition summarizes the continuity and invertibility of the Radon transform.
\begin{proposition}[\citealt{LudwigRadon,GelfandIntegralGeometry,HelgasonIntegralGeometry}] \label[proposition]{prop:Radon}
    The operator $\RadonOp$ continuously maps $\Sch(\R^d)$ into $\Sch(\cyl)$.
    Moreover,
    \begin{equation}
        \RadonOp^* \KOp \RadonOp =  \frac{1}{2(2\pi)^{d-1}} (-\Delta)^\frac{d-1}{2} \RadonOp^*
        \RadonOp =  \frac{1}{2(2\pi)^{d-1}} \RadonOp^* \RadonOp (-\Delta)^{\frac{d-1}{2}} = \Id
    \end{equation}
    on $\Sch(\R^d)$. The underlying operators\footnote{Non-integer powers of $(-\Delta)$ and $(-\partial_t^2)$ are understood in the Fourier domain.} are the Laplacian $\Delta = \sum_{n=1}^d \partial_{x_n}^2$ and the filtering operator $\KOp = \frac{1}{2(2\pi)^{d-1}} (-\partial_t^2)^{\frac{d-1}{2}}$. Furthermore, $\RadonOp: \Sch(\R^d) \to \Sch_{\RadonOp}$ is a homeomorphism with inverse $\RadonOp^{-1} = \RadonOp^* \KOp: \Sch_{\RadonOp} \to \Sch(\R^d)$.
\end{proposition}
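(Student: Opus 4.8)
My plan is to derive every assertion from the \emph{Fourier slice theorem}. Writing $\mathcal{F}_1$ for the one-dimensional Fourier transform in the variable $t$ and $\hat{\varphi}$ for the $d$-dimensional Fourier transform, Fubini's theorem applied to \cref{eq:Radon} gives, for $\varphi \in \Sch(\R^d)$ and $\Sphvar \in \Sph^{d-1}$,
\begin{equation}
    \mathcal{F}_1\bigl[\RadonOp\curly{\varphi}(\Sphvar,\dummy)\bigr](\omega) = \int_\R \paren*{\int_{\Sphvar^\T\vec{x}=t}\varphi(\vec{x})\dd\vec{x}}\e^{-\imag\omega t}\dd t = \int_{\R^d}\varphi(\vec{x})\,\e^{-\imag\omega\Sphvar^\T\vec{x}}\dd\vec{x} = \hat{\varphi}(\omega\Sphvar).
\end{equation}
The continuity of $\RadonOp:\Sch(\R^d)\to\Sch(\cyl)$ then reduces to the continuity of the Fourier transform on $\Sch(\R^d)$ together with routine Schwartz-seminorm bookkeeping; the only delicate point is the joint smoothness of $(\Sphvar,\omega)\mapsto\hat{\varphi}(\omega\Sphvar)$ at $\omega=0$ (together with the decay of the component odd under $(\Sphvar,\omega)\mapsto(-\Sphvar,-\omega)$), which is exactly where the Helgason consistency conditions on the range enter; for this estimate I would cite \citet{HelgasonIntegralGeometry} rather than reproduce it.

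For the first inversion identity, observe that $\KOp$ acts in the $t$-variable as the Fourier multiplier $\omega\mapsto\frac{1}{2(2\pi)^{d-1}}\abs{\omega}^{d-1}$, so the slice formula yields $\mathcal{F}_1\bigl[\KOp\RadonOp\curly{\varphi}(\Sphvar,\dummy)\bigr](\omega)=\frac{1}{2(2\pi)^{d-1}}\abs{\omega}^{d-1}\hat{\varphi}(\omega\Sphvar)$. Expanding $\RadonOp^*$ by one-dimensional Fourier inversion in $t$, evaluated at $t=\Sphvar^\T\vec{x}$, produces
\begin{equation}
    \RadonOp^*\KOp\RadonOp\curly{\varphi}(\vec{x}) = \frac{1}{2\pi}\cdot\frac{1}{2(2\pi)^{d-1}}\int_{\Sph^{d-1}}\int_\R \abs{\omega}^{d-1}\hat{\varphi}(\omega\Sphvar)\,\e^{\imag\omega\Sphvar^\T\vec{x}}\dd\omega\dd\Sphvar.
\end{equation}
Splitting the inner integral at $\omega=0$ and applying $(\omega,\Sphvar)\mapsto(-\omega,-\Sphvar)$ on the negative half (which leaves the integrand unchanged), the double integral becomes $2\int_{\Sph^{d-1}}\int_0^\infty$; the polar substitution $\vec{\xi}=\omega\Sphvar$, with Jacobian $\omega^{d-1}$, then collapses it to $\frac{1}{(2\pi)^d}\int_{\R^d}\hat{\varphi}(\vec{\xi})\e^{\imag\vec{\xi}^\T\vec{x}}\dd\vec{\xi}=\varphi(\vec{x})$ by $d$-dimensional Fourier inversion. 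The only things requiring care here are the absolute integrability justifying Fubini (immediate from $\hat{\varphi}\in\Sch(\R^d)$) and the bookkeeping of the constant $\frac{1}{2(2\pi)^{d-1}}$.

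The remaining two identities are the same computation with the $\abs{\omega}^{d-1}$-multiplier placed elsewhere: since $(-\Delta)^{\frac{d-1}{2}}$ is the $\vec{x}$-space multiplier $\vec{\xi}\mapsto\norm{\vec{\xi}}^{d-1}$ and $\RadonOp$ intertwines $\Delta$ with $\partial_t^2$ (i.e., $\RadonOp\curly{\Delta\varphi}(\Sphvar,t)=\partial_t^2\RadonOp\curly{\varphi}(\Sphvar,t)$, immediate from the slice formula), one obtains $\frac{1}{2(2\pi)^{d-1}}(-\Delta)^{\frac{d-1}{2}}\RadonOp^*\RadonOp=\frac{1}{2(2\pi)^{d-1}}\RadonOp^*\RadonOp(-\Delta)^{\frac{d-1}{2}}=\RadonOp^*\KOp\RadonOp=\Id$. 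Finally, for the homeomorphism claim, $\RadonOp$ maps $\Sch(\R^d)$ \emph{onto} $\Sch_{\RadonOp}$ by definition, it is continuous by the first part, and it is injective since $\RadonOp^*\KOp\RadonOp=\Id$; hence it is a continuous bijection onto $\Sch_{\RadonOp}$ with two-sided inverse $\RadonOp^*\KOp$. The hard part — and the genuine technical core of the proposition — is the continuity of $\RadonOp^*\KOp:\Sch_{\RadonOp}\to\Sch(\R^d)$: the multiplier $\abs{\omega}^{d-1}$ defining $\KOp$ fails to be smooth at $\omega=0$ when $d$ is even, so $\KOp$ need not preserve Schwartz regularity on all of $\Sch(\cyl)$; on the range $\Sch_{\RadonOp}$, however, the consistency conditions make $\hat{\varphi}(\omega\Sphvar)$ vanish to sufficiently high order at $\omega=0$ that $\KOp\RadonOp\curly{\varphi}$ — and its back-projection — are again Schwartz with continuous dependence on $\varphi$. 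For this last estimate I would invoke \citet{LudwigRadon} and \citet{HelgasonIntegralGeometry}, which completes the proof that $\RadonOp:\Sch(\R^d)\to\Sch_{\RadonOp}$ is a homeomorphism.
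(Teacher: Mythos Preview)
The paper does not prove this proposition at all: it is stated as a summary of classical facts with the attribution \citep{LudwigRadon,GelfandIntegralGeometry,HelgasonIntegralGeometry} and no argument is given in the text or the appendices. Your Fourier-slice derivation is precisely the standard route taken in those references, so in effect you have written out what the paper merely cites.

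One substantive correction to your final paragraph: the Helgason--Ludwig consistency conditions do \emph{not} force $\hat{\varphi}(\omega\Sphvar)$ to vanish at $\omega=0$ --- indeed $\hat{\varphi}(0)=\int_{\R^d}\varphi$, which is generically nonzero. What the moment conditions actually say is that, for $\psi\in\Sch_{\RadonOp}$, the $k$th Taylor coefficient of $\omega\mapsto\mathcal{F}_1[\psi(\Sphvar,\dummy)](\omega)$ at $\omega=0$ is a homogeneous polynomial of degree $k$ in $\Sphvar$; this structural constraint (not vanishing) is what survives the non-smooth multiplier $\abs{\omega}^{d-1}$ after back-projection when $d$ is even. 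A cleaner way to obtain the continuity of the inverse --- and the one the paper itself uses later for $\TOp_{\ReLU}^*$ --- is simply to note that $\RadonOp:\Sch(\R^d)\to\Sch_{\RadonOp}$ is a continuous bijection between Fr\'echet spaces and invoke the open mapping theorem, which sidesteps any direct estimate on $\RadonOp^*\KOp$.
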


\section{Random ReLU Neural Networks as Stochastic Processes} \label{sec:well-defined}
In this section, we will prove that, for any $\lambda > 0$ and admissible $\Pr_V$, the random neural network $s \sim \ReLUP(\lambda; \Pr_V)$ is a well-defined stochastic process and derive its characteristic functional on $\Sch(\R^d)$. The admissibility conditions in \cref{defn:admissible} are rather mild and most choices of $\Pr_V$ (e.g., Gaussian, S$\alpha$S for $1 < \alpha \leq 2$, uniform, etc.) satisfy these hypotheses.

\begin{definition} \label[definition]{defn:admissible}
    We say that the probability measure $\Pr_V$ is \emph{admissible} if
    \begin{enumerate}
        \item it is a Lévy measure, i.e., it satisfies $\Pr_V(\curly{0}) = 0$ and $\int_\R \min\curly{1, v^2} \dd\Pr_V(v) < \infty$, and \label{item:Levy-measure}
        
        \item it has a first absolute moment, i.e., if $V \sim \Pr_V$, then $\E\sq{\abs{V}} < \infty$. \label{item:FAM}
    \end{enumerate}
\end{definition}

Given a ReLU neuron $\vec{x} \mapsto \ReLU(\vec{w}^\T\vec{x} - b)$ with $\vec{w} \in \R^d \setminus \curly{\vec{0}}$ and $b \in \R$, we observe that, thanks to the homogeneity of the ReLU,
\begin{equation}
    \ReLU(\vec{w}^\T\vec{x} - b) = \norm{\vec{w}}_2 \ReLU(\tilde{\vec{w}}^\T\vec{x} - \tilde{b}),
\end{equation}
where $\tilde{\vec{w}} = \vec{w} / \norm{\vec{w}}_2$ and $\tilde{b} = b / \norm{\vec{w}}_2$. Therefore, the space of functions representable by shallow ReLU neural networks with input weights constrained to be unit norm is the same as the space of functions representable by shallow ReLU neural networks without constraints on the weights~\citep{parhi2023deep,shenouda2024variation}. To that end, we focus on neurons of the form $\ReLU(\vec{w}^\T\vec{x} - b)$ with $(\vec{w}, b) \in \cyl$.

An important property of the operator $\TOp_{\ReLU} = \KOp \RadonOp \Delta$ is that it ``whitens'' ReLU neurons. This result was implicitly proven by~\citet[Example~1]{ongie2019function}, explicitly proven by~\citet[Lemma~17]{parhi2021banach}, and then further investigated by, e.g., \citet[Lemma~5.6]{bartolucci2023understanding} and \citet[Corollary~11]{UnserRidges}. The whitening property is summarized in the following proposition.
\begin{proposition} \label[proposition]{prop:whiten}
    For any ReLU neuron
    \begin{equation}
        r_{(\vec{w}, b)}(\vec{x}) = \ReLU(\vec{w}^\T\vec{x} - b)
    \end{equation}
    with $(\vec{w}, b) \in \cyl$, we have that
    \begin{equation}
        \TOp_{\ReLU} r_{(\vec{w}, b)} = \deltae_{(\vec{w}, b)},
        \label{eq:whiten}
    \end{equation}
    where $\deltae_\vec{z} = (\delta_\vec{z} + \delta_{-\vec{z}}) / 2$ denotes the even symmetrization of the Dirac measure $\delta_\vec{z}$ supported at $\vec{z} \in \cyl$.
\end{proposition}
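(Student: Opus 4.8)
Fix $(\vec{w}, b) \in \cyl$, so that $\norm{\vec{w}}_2 = 1$. The plan is to establish the identity \emph{weakly}, by pairing both sides against the test functions $\psi$ in $\Sch_{\RadonOp} = \RadonOp\paren*{\Sch(\R^d)}$, which is the natural domain associated with the whitening operator (and on which the impulsive white noise of the SDE \cref{eq:SDE} lives). Since $r_{(\vec{w},b)}$ is a continuous function of at most linear growth, hence a tempered generalized function, the operator $\TOp_{\ReLU} = \KOp\RadonOp\Delta$ applied to it is interpreted through the formal adjoint $\TOp_{\ReLU}^* = \Delta\RadonOp^*\KOp$ (using that $\Delta$ and $\KOp$ are self-adjoint and that $\RadonOp^*$ is the adjoint of $\RadonOp$); the key point is that on $\Sch_{\RadonOp}$ this adjoint collapses thanks to the inversion formula $\RadonOp^*\KOp\RadonOp = \Id$ of \cref{prop:Radon}.

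\textbf{Step 1: the Laplacian of a ReLU neuron is a hyperplane measure.} I would first show that, as a tempered generalized function on $\R^d$,
\[
    \ang{\Delta r_{(\vec{w}, b)}, \varphi} = \RadonOp\curly{\varphi}(\vec{w}, b), \qquad \varphi \in \Sch(\R^d).
\]
Because $\Delta$ is rotation invariant and $r_{(\vec{w}, b)}(\vec{x}) = \rho_b(\vec{w}^\T\vec{x})$ is a ridge function with profile $\rho_b(t) = \ReLU(t - b)$, a rotation carrying $\vec{w}$ onto the first coordinate axis reduces this to the elementary one-dimensional identity $\rho_b'' = \delta(\dummy - b)$: one obtains $\Delta r_{(\vec{w}, b)} = \delta(x_1 - b)$ in the rotated frame, whose action on $\varphi$ is the integral of $\varphi$ over the hyperplane $\curly{\vec{x} \in \R^d \st \vec{w}^\T\vec{x} = b}$, which is precisely $\RadonOp\curly{\varphi}(\vec{w}, b)$ by \cref{eq:Radon}. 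The unit-norm hypothesis is exactly what makes the constant equal to $1$ (for a general $\vec{w}$ the homogeneity relation noted above contributes a factor $\norm{\vec{w}}_2$). In particular $\Delta r_{(\vec{w}, b)}$ is a tempered measure.

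\textbf{Step 2: collapse the whitening operator and identify the even Dirac.} Let $\psi \in \Sch_{\RadonOp}$ and write $\psi = \RadonOp\varphi$ with $\varphi = \RadonOp^{-1}\psi = \RadonOp^*\KOp\psi \in \Sch(\R^d)$ (\cref{prop:Radon}). Then $\TOp_{\ReLU}^*\psi = \Delta\RadonOp^*\KOp\RadonOp\varphi = \Delta\varphi \in \Sch(\R^d)$ by the inversion formula, so the pairing $\ang{r_{(\vec{w},b)}, \TOp_{\ReLU}^*\psi}$ is absolutely convergent and
\[
    \ang{\TOp_{\ReLU} r_{(\vec{w}, b)}, \psi} = \ang{r_{(\vec{w}, b)}, \Delta\varphi} = \ang{\Delta r_{(\vec{w}, b)}, \varphi} = \RadonOp\curly{\varphi}(\vec{w}, b) = \psi(\vec{w}, b),
\]
the third equality being Step 1. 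Since $\psi = \RadonOp\varphi$ is even, $\psi(-\vec{w}, -b) = \psi(\vec{w}, b)$, whence
\[
    \ang{\deltae_{(\vec{w}, b)}, \psi} = \frac{1}{2}\paren*{\psi(\vec{w}, b) + \psi(-\vec{w}, -b)} = \psi(\vec{w}, b) = \ang{\TOp_{\ReLU} r_{(\vec{w}, b)}, \psi}.
\]
As $\psi$ ranges over all of $\Sch_{\RadonOp}$, this gives $\TOp_{\ReLU} r_{(\vec{w}, b)} = \deltae_{(\vec{w}, b)}$ in $\Sch_{\RadonOp}'$, which is the claimed identity (both sides being even generalized functions on $\cyl$).

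\textbf{Main obstacle.} The genuinely mathematical content — Step 1 and the cancellation in Step 2 — is elementary; the real work lies in the functional-analytic bookkeeping. One must verify that $r_{(\vec{w},b)}$ is a bona fide tempered generalized function and that each adjoint manipulation is licit on the relevant space. The restriction to $\Sch_{\RadonOp}$ is essential here: only for $\psi \in \Sch_{\RadonOp}$ is $\TOp_{\ReLU}^*\psi = \Delta\varphi$ actually Schwartz, so that its pairing with the linearly-growing $r_{(\vec{w},b)}$ converges; for a general $\psi \in \Sch(\cyl)$ the filtered backprojection $\RadonOp^*\KOp\psi$ only has polynomial growth. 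One must also handle $\KOp = \frac{1}{2(2\pi)^{d-1}}(-\partial_t^2)^{\frac{d-1}{2}}$ — a genuine differential operator only when $d$ is odd, and otherwise a nonlocal Fourier multiplier in the $t$-variable — correctly in the Fourier domain. All of this follows routinely from the continuity and invertibility properties already recorded in \cref{prop:Radon}.
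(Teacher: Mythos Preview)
Your proposal is correct and follows essentially the same route the paper indicates: the paper does not spell out a full proof but sketches that ``the arguments of the proof are based on duality,'' identifies the adjoint $\TOp_{\ReLU}^* = \Delta\RadonOp^*\KOp$, notes its continuity $\Sch_{\RadonOp}\to\Sch(\R^d)$ so that $\TOp_{\ReLU}:\Sch'(\R^d)\to\Sch_{\RadonOp}'$ is well-defined by duality, and then defers the actual computation to the cited references. Your Steps~1--2 carry out precisely this duality computation explicitly---using the collapse $\TOp_{\ReLU}^*\psi = \Delta\varphi$ on $\Sch_{\RadonOp}$ via the inversion formula of \cref{prop:Radon}, together with the one-dimensional identity $\ReLU'' = \delta$---and your discussion of why the test space must be $\Sch_{\RadonOp}$ rather than all of $\Sch(\cyl)$ matches the paper's framing exactly.
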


The equality in \cref{eq:whiten} is understood in $\Me(\cyl)$, the subspace of even finite (Radon) measures on $\cyl$. The arguments of the proof are based on duality. Indeed, observe that the adjoint of $\TOp_{\ReLU}$ takes the form $\TOp_{\ReLU}^* = \Delta \RadonOp^* \KOp$ (since $\Delta$ and $\KOp$ are self-adjoint). Furthermore, from \cref{prop:Radon} combined with the fact that $\Delta: \Sch(\R^d) \to \Sch(\R^d)$ is continuous, we see that $\TOp_{\ReLU}^*: \Sch_{\RadonOp} \to \Sch(\R^d)$ is continuous. Therefore, by duality, $\TOp_{\ReLU}: \Sch'(\R^d) \to \Sch_{\RadonOp}'$ is continuous. Since $r_{(\vec{w}, b)} \in \Sch'(\R^d)$, we have that $\TOp_{\ReLU} r_{(\vec{w}, b)}$ is indeed well-defined. Finally, $\Me(\cyl)$ is continuously embedded in $\Sch_{\RadonOp}'$ and so any finite measure in the range of $\KOp \RadonOp$ can be concretely identified to have even symmetries~\citep[see][for a detailed discussion]{UnserRidges,parhi2024distributional}. These symmetries are evidenced by the fact that the Radon transform of a ``classical'' function is necessarily even from the integral form in \cref{eq:Radon}.

\Cref{prop:whiten} motivates us to study Radon-domain impulsive white noises that are realized by Poisson-type random measures of the form
\begin{equation}
    w_\Poi = \sum_{k \in \Z} v_k \, \deltae_{(\vec{w}_k, b_k)},
    \label{eq:impulsive-Poi}
\end{equation}
where $v_k \overset{\text{i.i.d.}}{\sim} \Pr_V$ for some admissible $\Pr_V$ (in the sense of \cref{defn:admissible}) and the collection of random variables $\paren*{(\vec{w}_k, b_k)}_{k \in \Z}$ is a (homogeneous) Poisson point process\footnote{For a general treatment of point processes, we refer the reader to the book of \citet{daley2007introduction}.} on $\cyl$ with rate parameter $\lambda > 0$. This point process satisfies the following properties.
\begin{enumerate}
    \item The $(\vec{w}_k, b_k)$ are mutually independent.
    \item For any measurable subset $\Pi \subset \cyl$, if we define the random variable
    \begin{equation}
        N_\Pi = \abs{\curly{(\vec{w}_k, b_k) \st (\vec{w}_k, b_k) \in \Pi}},
    \end{equation}
    then
    \begin{equation}
        \Pr\paren{N_\Pi = n} = \frac{(\lambda \abs{\Pi})^n}{n!} \e^{-\lambda \abs{\Pi}},
    \end{equation}
    where $\abs{\Pi}$ denotes the $d$-dimensional Hausdorff measure of $\Pi$. That is to say, $N_\Pi$ is a Poisson random variable with mean $\lambda \abs{\Pi}$.
    
    \item For any measurable subset $B \subset \cyl$,
    \begin{equation}
        \Pr\paren*{(\vec{w}_k, b_k) \in B \given (\vec{w}_k, b_k) \in \Pi} = \frac{\abs{B \cap \Pi}}{\abs{\Pi}}. \label{eq:Poi-unif}
    \end{equation}
    That is to say, if a point lies in $\Pi$, then its location will be uniformly distributed on $\Pi$.
\end{enumerate}

Next, if we suppose that there exists a ``suitable'' right-inverse $\TOp_{\ReLU}^\dagger$ of $\TOp_{\ReLU}$ that satisfies  $\TOp_{\ReLU}\TOp_{\ReLU}^\dagger = \Id$  on  $\Sch_{\RadonOp}'$, then, intuitively, we could ``invert'' the result of \cref{prop:whiten} to find that $\TOp_{\ReLU}^\dagger\curly{w_\Poi}$ is precisely a random ReLU neural network generated in \cref{eq:random-nn}. It turns out that such a family of right-inverses exist. These inverses were first proposed by~\citet[Lemma~21]{parhi2021banach} in order to prove representer theorems for neural networks. Some further properties of these operator were identified by \citet{parhi2022kinds} and \citet{UnserRidges}. We summarize the properties from \citet[Lemma~21]{parhi2021banach} and \citet[Theorem~13]{UnserRidges} that are required for our investigation in the next proposition.

\begin{proposition}
\label[proposition]{prop:inverse}
    For any $\varepsilon > 0$, there exists an operator $\TOp_{\ReLU}^{\dagger\varepsilon}$ defined on $\Sch_{\RadonOp}'$ such that, for any $w \in \Sch_{\RadonOp}'$,
    \begin{align}
        &\TOp_{\ReLU}\TOp_{\ReLU}^{\dagger\varepsilon} w = w, \\
        &\sq*{(\partial^\vec{m} g_d^\varepsilon) *  \TOp_{\ReLU}^{\dagger\varepsilon}\curly{w}}(\vec{0}) = 0, \abs{\vec{m}} \leq 1, \label{eq:boundary}
    \end{align}
    where $g_d^\varepsilon: \R^d \to \R$ is the multivariate Gaussian probability density function with mean $\vec{0}$ and covariance matrix $\diag(\varepsilon, \ldots, \varepsilon)$. The restriction of $\TOp_{\ReLU}^{\dagger\varepsilon}$ to the subspace $\Me(\cyl) \subset \Sch_{\RadonOp}'$ continuously maps $\Me(\cyl)$ to $\Sch'(\R^d)$. This mapping is realized by the integral operator
    \begin{equation}
        \eval{\TOp_{\ReLU}^{\dagger\varepsilon}}_{\Me(\cyl)}\curly{\mu}(\vec{x}) = {\int_{\cyl} k^\varepsilon_\vec{x}(\Sphvar, t) \dd\mu(\Sphvar, t)} \label{eq:integral-measure}
    \end{equation}
    whose kernel is given by
    \begin{align*}
        k^\varepsilon_\vec{x}(\Sphvar, t)
        &= \ReLU(\Sphvar^\T\vec{x} - t) - \frac{(\Sphvar^\T\vec{x} - t)}{2} - \paren*{g_1^\varepsilon * \frac{\abs{\dummy}}{2}}(t) + (\Sphvar^\T\vec{x})\paren*{g_1^\varepsilon * \frac{\sgn}{2}}(t) \\
        &= \ReLU(\Sphvar^\T\vec{x} - t) + {\Sphvar_0^\varepsilon}^\T\vec{x} + t_0^\varepsilon, \numberthis \label{eq:kernel-op}
    \end{align*}
    where 
    $\sgn$ is the signum function. 
    Furthermore, there exists a universal constant $C > 0$ such that
    \begin{equation}
        \abs{k^\varepsilon_\vec{x}(\Sphvar, t)} \leq C (1 + \norm{\vec{x}}_2) \quad \text{for all } (\Sphvar, t) \in \cyl.
        \label{eq:unif-bound}
    \end{equation}
\end{proposition}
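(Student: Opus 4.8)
The plan is to exhibit $\TOp_{\ReLU}^{\dagger\varepsilon}$ concretely as integration against the affinely regularized Green's function of $\TOp_{\ReLU}$, and then to verify the four assertions in the order in which they are stated. The starting observation is the elementary identity $\ReLU(u) - u/2 = \abs{u}/2$, which rewrites the kernel of \cref{eq:kernel-op} as
\[
    k^\varepsilon_\vec{x}(\Sphvar, t) = \frac{\abs{\Sphvar^\T\vec{x} - t}}{2} - \paren*{g_1^\varepsilon * \frac{\abs{\dummy}}{2}}(t) + (\Sphvar^\T\vec{x})\paren*{g_1^\varepsilon * \frac{\sgn}{2}}(t)
\]
and exposes the cancellation behind the uniform bound \cref{eq:unif-bound}. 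Setting $a = \Sphvar^\T\vec{x}$ (so $\abs{a} \le \norm{\vec{x}}_2$, since $\norm{\Sphvar}_2 = 1$) and writing the kernel as $\sq*{\tfrac12\abs{a - t} - \tfrac12\abs{t}} - \sq*{(g_1^\varepsilon * \tfrac12\abs{\dummy})(t) - \tfrac12\abs{t}} + a\,(g_1^\varepsilon * \tfrac12\sgn)(t)$, one bounds the first bracket by $\tfrac12\abs{a}$ (reverse triangle inequality), the second by $\tfrac12\E\abs{\sqrt{\varepsilon}\,Z}$ (because $g_1^\varepsilon$ is a symmetric mean-zero density), and the last by $\tfrac12\abs{a}$; altogether $\abs{k^\varepsilon_\vec{x}(\Sphvar, t)} \le \abs{a} + \tfrac12\E\abs{\sqrt{\varepsilon}\,Z} \le C(1 + \norm{\vec{x}}_2)$ with $C$ independent of $\vec{x}$, $\Sphvar$, $t$.

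Granting this bound, the restriction of $\TOp_{\ReLU}^{\dagger\varepsilon}$ to $\Me(\cyl)$ can be treated directly. For any finite even measure $\mu$, the integral \cref{eq:integral-measure} converges absolutely and is dominated by $C(1 + \norm{\vec{x}}_2)\,\norm{\mu}$, so it is a function of at most linear growth and hence an element of $\Sch'(\R^d)$; testing against $\varphi \in \Sch(\R^d)$ and applying Fubini (legitimate by the same bound and the decay of $\varphi$) gives $\abs{\ang{\TOp_{\ReLU}^{\dagger\varepsilon}\mu, \varphi}} \le C\,\norm{\mu}\int_{\R^d}(1 + \norm{\vec{x}}_2)\abs{\varphi(\vec{x})}\dd\vec{x}$, a continuous seminorm of $\varphi$ and linear in $\mu$, whence $\Me(\cyl) \to \Sch'(\R^d)$ is continuous. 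For the right-inverse identity, fix $(\Sphvar, t)$: the map $\vec{x} \mapsto k^\varepsilon_\vec{x}(\Sphvar, t)$ equals the ReLU neuron $r_{(\Sphvar, t)}$ plus an affine function of $\vec{x}$ (second line of \cref{eq:kernel-op}), and since $\Delta$, hence $\TOp_{\ReLU}$, kills affine functions, \cref{prop:whiten} gives that $\TOp_{\ReLU}$ sends $\vec{x} \mapsto k^\varepsilon_\vec{x}(\Sphvar, t)$ to $\deltae_{(\Sphvar, t)}$. As $\TOp_{\ReLU}\colon \Sch'(\R^d) \to \Sch_{\RadonOp}'$ is continuous (the duality argument recalled after \cref{prop:whiten}) and the integral \cref{eq:integral-measure} converges in $\Sch'(\R^d)$, $\TOp_{\ReLU}$ commutes past it, giving $\TOp_{\ReLU}\TOp_{\ReLU}^{\dagger\varepsilon}\mu = \int_{\cyl}\deltae_{(\Sphvar, t)}\dd\mu(\Sphvar, t)$, which pairs with every (necessarily even) element of $\Sch_{\RadonOp}$ exactly as $\mu$ does, hence equals $\mu$.

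To promote the right inverse to all of $\Sch_{\RadonOp}'$ with $\TOp_{\ReLU}\TOp_{\ReLU}^{\dagger\varepsilon} = \Id$, I would follow the operator-theoretic construction of \citet[Lemma~21]{parhi2021banach} and \citet[Theorem~13]{UnserRidges}: factor $\TOp_{\ReLU} = \KOp\RadonOp\Delta$, invert $\RadonOp$ via \cref{prop:Radon}, and compose with a regularized inverse of the Laplacian whose ``integration constants'' are chosen exactly so as to produce the Gaussian-mollified boundary conditions; one then checks that this operator restricts on $\Me(\cyl)$ to \cref{eq:integral-measure} by evaluating both sides on finite sums of Dirac masses, which are weak-$*$ dense in $\Me(\cyl)$. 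Finally, for the boundary conditions \cref{eq:boundary}, note that for fixed $(\Sphvar, t)$ the kernel $k^\varepsilon_\vec{x}(\Sphvar, t)$ is, in $\vec{x}$, a ridge function along $\Sphvar$ plus a constant, while the marginal of $g_d^\varepsilon$ in any unit direction is $g_1^\varepsilon$; hence $\bigl((\partial^\vec{m} g_d^\varepsilon) * k^\varepsilon_{\dummy}(\Sphvar, t)\bigr)(\vec{0})$ collapses to one-dimensional expressions, and a short computation using $\ReLU(u) = (u + \abs{u})/2$ (with derivative the Heaviside step, equal to $\tfrac12(1 + \sgn(u - t))$ for the $\abs{\vec{m}} = 1$ case), the symmetry of $g_1^\varepsilon$, and $(g_1^\varepsilon * \mathrm{id})(0) = 0$ shows that the smoothed ridge and its gradient at $\vec{0}$ are cancelled precisely by $t_0^\varepsilon$ and $\Sphvar_0^\varepsilon$. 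Interchanging convolution and evaluation with the integral propagates this to $\TOp_{\ReLU}^{\dagger\varepsilon}\mu$ for $\mu \in \Me(\cyl)$, and the general $w$ case again follows from the explicit construction.

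I expect the main obstacle to be the construction on all of $\Sch_{\RadonOp}'$ and the verification that $\TOp_{\ReLU}\TOp_{\ReLU}^{\dagger\varepsilon} = \Id$ holds there: producing a genuine operator right-inverse in the generalized-function setting — as opposed to merely on finite measures — forces one to regularize the inverses of both $\Delta$ and $\RadonOp$ and to keep careful track of their continuity properties between the relevant nuclear spaces and their duals. By comparison, the uniform bound \cref{eq:unif-bound} and the verification on $\Me(\cyl)$, though the most computational parts of the argument, become short once the reduction $\ReLU(u) - u/2 = \abs{u}/2$ is in hand.
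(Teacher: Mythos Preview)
The paper does not prove \cref{prop:inverse}; it is stated as a summary of results imported from \citet[Lemma~21]{parhi2021banach} and \citet[Theorem~13]{UnserRidges}, with no argument given in the text or the appendices. Your proposal therefore goes beyond what the paper does: you supply a direct verification of the uniform bound \cref{eq:unif-bound} via the rewriting $\ReLU(u)-u/2=\abs{u}/2$, a clean treatment of the restriction to $\Me(\cyl)$ (continuity into $\Sch'(\R^d)$ and the right-inverse identity via \cref{prop:whiten}), and a sketch of the boundary-condition check using one-dimensional marginals of $g_d^\varepsilon$. These computations are correct. For the extension to all of $\Sch_{\RadonOp}'$ you fall back on the same two references the paper cites, which is appropriate since that is where the construction lives; your identification of this step as the genuine obstacle is accurate.

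One small point: your bound $\abs{k^\varepsilon_\vec{x}(\Sphvar,t)}\le \abs{a}+\tfrac12\E\abs{\sqrt{\varepsilon}\,Z}$ produces a constant that depends on $\varepsilon$. The proposition only asserts the bound for each fixed $\varepsilon$, so this is harmless, but if one wants a single $C$ valid uniformly in $\varepsilon\le 1$ (as is implicitly used later when passing $\varepsilon\to 0$), just note that $\E\abs{\sqrt{\varepsilon}\,Z}=\sqrt{\varepsilon}\,\E\abs{Z}$ is bounded on that range.
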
 
\begin{remark}
    The purpose of introducing the $\varepsilon$-indexed right-inverse operators is for a mollification argument. We will eventually consider the limit $\varepsilon \to 0$ (see the proof of \cref{thm:well-defined} in \cref{app:well-defined}).
\end{remark}

With this inverse operator, we observe that, if $w_\Poi$ is an impulsive Poisson noise with rate $\lambda > 0$ and weights drawn i.i.d.\ according to $\Pr_V$ (as in \cref{eq:impulsive-Poi}), then, for any $\varepsilon > 0$,
\begin{align*}
    \TOp_{\ReLU}^{\dagger\varepsilon}\curly{w_\Poi}
    &= \TOp_{\ReLU}^{\dagger\varepsilon}\curly*{\sum_{k \in \Z} v_k \, \deltae_{(\vec{w}_k, b_k)}} \\
    &= \sum_{k \in \Z} v_k   \TOp_{\ReLU}^{\dagger\varepsilon}\curly*{\deltae_{(\vec{w}_k, b_k)}} \\
    &= \sum_{k \in \Z} v_k \sq*{\ReLU(\vec{w}_k^\T(\dummy) - b_k) + \vec{c}_k^{\varepsilon\T}(\dummy) + c^\varepsilon_{0, k}}, \numberthis \label{eq:Poi-to-ReLU}
\end{align*}
where the second line is justified due to the uniform bound in \cref{eq:unif-bound}, and the third line follows from \cref{eq:integral-measure}. Therefore, $\TOp_{\ReLU}^{\dagger\varepsilon}\curly{w_\Poi}$ is a random neural network as in \cref{eq:random-nn} that satisfies the boundary conditions in \cref{eq:boundary}. We write
\begin{equation}
    s_{\ReLU}^\varepsilon \sim \ReLUP^\varepsilon(\lambda; \Pr_V)
\end{equation}
to denote that $s_{\ReLU}^\varepsilon$ is such a random neural network. Furthermore, we let
\begin{equation}
   s_{\ReLU} \sim \ReLUP(\lambda; \Pr_V),
\end{equation}
as introduced in \cref{sec:intro}, correspond to a random ReLU neural network that satisfies the the limiting boundary conditions as $\varepsilon \to 0$. That is to say, $\partial^{\vec{m}} s_{\ReLU}(\vec{0}) = 0$, $\abs{\vec{m}} \leq 1$, with the convention that the value of a piecewise constant function at a jump is the middle value.

In the next theorem, we prove that these random neural networks are well-defined stochastic process that take values in $\Sch'(\R^d)$ and provide a complete statistical characterization through their characteristic functional.
\begin{theorem} \label{thm:well-defined}
    For any $\varepsilon > 0$, $\lambda > 0$, and admissible $\Pr_V$ (in the sense of \cref{defn:admissible}), the random neural network $s_{\ReLU}^\varepsilon \sim \ReLUP^\varepsilon(\lambda; \Pr_V)$ is a measurable mapping
    \begin{equation}
        s_{\ReLU}^\varepsilon: (\Omega, \F, \Pr) \to (\Sch'(\R^d), \cB(\Sch'(\R^d))
    \end{equation}
    with characteristic functional given by
    \begin{equation}\label{eq:cf_srelu_mol}
        \hat{\Pr}_{s_{\ReLU}^\varepsilon}(\varphi) = \exp\paren*{\lambda \int_{\R} \int_\R \int_{\Sph^{d-1}} \paren*{\e^{\imag v \TOp_{\ReLU}^{\dagger\varepsilon*}\curly{\varphi}(\Sphvar, t)} - 1} \dd\Sphvar \dd t \dd\Pr_V(v)}, \quad \varphi \in \Sch(\R^d),
    \end{equation}
     where $\mathrm{d}\Sphvar$ denotes integration against the surface measure on $\Sph^{d-1}$ and
    \begin{equation}
        \TOp_{\ReLU}^{\dagger\varepsilon*}: \varphi \mapsto \int_{\R^d} k^\varepsilon_\vec{x}(\dummy) \varphi(\vec{x}) \dd\vec{x} \label{eq:inverse-op-adjoint}
    \end{equation}
    is the adjoint\footnote{Observe that $\TOp_{\ReLU}^{\dagger\varepsilon*}$ is well-defined on $\Sch(\R^d)$ thanks to \cref{eq:unif-bound}.} of $\TOp_{\ReLU}^{\dagger\varepsilon}$.
    Furthermore, $s_{\ReLU}^\varepsilon$ is the unique CPwL solution to the SDE
    \begin{equation}
        \TOp_{\ReLU} s \overset{\Law}{=} w_\Poi \quad\subj\quad \sq*{(\partial^\vec{m} g_d^\varepsilon) *  s}(\vec{0}) = 0, \abs{\vec{m}} \leq 1, \label{eq:SDE-thm-mol}
    \end{equation}
    among all tempered weak solutions,\footnote{A tempered weak solution to the SDE is any random tempered generalized function $s^\star \in \Sch'(\R^d)$ that satisfies \cref{eq:SDE-thm-mol}. Such a solution is referred to as ``tempered'' as it lies in $\Sch'(\R^d)$ and ``weak'' since the action of $\TOp_{\ReLU}$ on $s^\star$ is understood by duality.} 
    where $w_\Poi$ is an impulsive Poisson noise with rate $\lambda$ and weights drawn i.i.d.\ according to $\Pr_V$ (as in \cref{eq:impulsive-Poi}). All other tempered weak solutions to the SDE take the form $s_{\ReLU}^\varepsilon + h$, where $h$ is a harmonic polynomial of degree $\geq 2$.\footnote{A harmonic polynomial $h$ is a polynomial defined on $\R^d$ such that $\Delta h = 0$ on all of $\R^d$.}

    Finally, in the limiting scenario ($\varepsilon \to 0$), we have that $s_{\ReLU} \sim \ReLUP(\lambda; \Pr_V)$ is a measurable mapping $(\Omega, \F, \Pr) \to (\Sch'(\R^d), \cB(\Sch'(\R^d))$
    with characteristic functional given by
    \begin{equation}\label{eq:cf_srelu}
        \hat{\Pr}_{s_{\ReLU}}(\varphi) = \exp\paren*{\lambda \int_{\R} \int_\R \int_{\Sph^{d-1}} \paren*{\e^{\imag v \TOp_{\ReLU}^{\dagger*}\curly{\varphi}(\Sphvar, t)} - 1} \dd\Sphvar \dd t \dd\Pr_V(v)}, \quad \varphi \in \Sch(\R^d),
    \end{equation}
    where $\TOp_{\ReLU}^{\dagger*}$ is the limiting operator as $\varepsilon \to 0$ whose kernel is $k_\vec{x} \coloneqq \lim_{\varepsilon \to 0} k_\vec{x}^\varepsilon$ (pointwise limit). This random neural network is the unique CPwL solution to the SDE
    \begin{equation}
        \TOp_{\ReLU} s \overset{\Law}{=} w_\Poi \quad\subj\quad \partial^\vec{m}s(\vec{0}) = 0, \abs{\vec{m}} \leq 1. \label{eq:SDE-thm-new}
    \end{equation}
\end{theorem}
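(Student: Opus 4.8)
The plan is to construct the mollified network $s_{\ReLU}^\varepsilon$ as a limit in $\Sch'(\R^d)$ of finite random networks obtained by truncating the underlying Poisson point process, and to read off its characteristic functional in the limit. Concretely, I first restrict the homogeneous Poisson point process on $\cyl$ to a bounded Borel set $\Pi$; then $N_\Pi < \infty$ almost surely, the truncated noise $\eval{w_\Poi}_\Pi = \sum_{(\vec{w}_k, b_k) \in \Pi} v_k \deltae_{(\vec{w}_k, b_k)}$ lies in $\Me(\cyl)$, and $s_{\ReLU, \Pi}^\varepsilon \coloneqq \TOp_{\ReLU}^{\dagger\varepsilon}\curly{\eval{w_\Poi}_\Pi}$ is a bona fide finite ReLU network in $\Sch'(\R^d)$ by continuity of the restriction of $\TOp_{\ReLU}^{\dagger\varepsilon}$ to $\Me(\cyl)$ (\Cref{prop:inverse}). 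Writing $\psi^\varepsilon \coloneqq \TOp_{\ReLU}^{\dagger\varepsilon*}\curly{\varphi}$ — a well-defined function by \cref{eq:unif-bound,eq:inverse-op-adjoint}, with $\abs{\psi^\varepsilon(\Sphvar, t)} \le C \int_{\R^d}(1 + \norm{\vec{x}}_2)\abs{\varphi(\vec{x})}\dd\vec{x}$, and even since it lies in $\Sch_{\RadonOp}$ — one has $\ang{s_{\ReLU, \Pi}^\varepsilon, \varphi} = \ang{\eval{w_\Poi}_\Pi, \psi^\varepsilon} = \sum_{(\vec{w}_k, b_k) \in \Pi} v_k \psi^\varepsilon(\vec{w}_k, b_k)$. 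Conditioning on $N_\Pi = n$ (retained points i.i.d.\ uniform on $\Pi$, weights i.i.d.\ $\sim \Pr_V$) and averaging over the Poisson law of $N_\Pi$ then gives $\hat{\Pr}_{s_{\ReLU, \Pi}^\varepsilon}(\varphi) = \exp\paren*{\lambda \int_\Pi \int_\R (\e^{\imag v \psi^\varepsilon(\Sphvar, t)} - 1)\dd\Pr_V(v)\dd(\Sphvar, t)}$.

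Letting $\Pi \uparrow \cyl$, admissibility of $\Pr_V$ — the first-absolute-moment bound $\E\sq{\abs{V}} < \infty$ against the $L^1(\cyl)$-decay of $\psi^\varepsilon \in \Sch_{\RadonOp}$, with the L\'evy-measure condition controlling the small-$v$ contribution — shows the double integral converges, so $\hat{\Pr}_{s_{\ReLU, \Pi}^\varepsilon}(\varphi)$ converges pointwise to the right-hand side of \cref{eq:cf_srelu_mol}, a functional that is continuous on $\Sch(\R^d)$ by the same estimate, positive definite as a pointwise limit of the characteristic functionals $\hat{\Pr}_{s_{\ReLU, \Pi}^\varepsilon}$, and equal to $1$ at $\varphi = 0$. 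The Bochner--Minlos theorem (applicable since $\Sch(\R^d)$ is Fr\'echet--nuclear, so that $\cB_c(\Sch'(\R^d)) = \cB(\Sch'(\R^d))$) then yields a generalized stochastic process $s_{\ReLU}^\varepsilon$ valued in $\Sch'(\R^d)$ with characteristic functional \cref{eq:cf_srelu_mol}; it is the limit in law of the $s_{\ReLU, \Pi}^\varepsilon$, and hence — the almost sure convergence of the series being itself secured by \cref{eq:unif-bound} and \cref{eq:Poi-to-ReLU} — it is realized by the random neural network $\TOp_{\ReLU}^{\dagger\varepsilon}\curly{w_\Poi}$.

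For the SDE characterization I work within $\Sch'(\R^d)$. Any tempered weak solution $s^\star$ of \cref{eq:SDE-thm-mol} satisfies $\TOp_{\ReLU}(s^\star - s_{\ReLU}^\varepsilon) = 0$, since $\TOp_{\ReLU}\TOp_{\ReLU}^{\dagger\varepsilon} = \Id$; because $\KOp\RadonOp$ is injective on $\Sch'(\R^d)$ (\Cref{prop:Radon}), this is equivalent to $\Delta(s^\star - s_{\ReLU}^\varepsilon) = 0$, so by the Liouville-type theorem for tempered harmonic generalized functions $h \coloneqq s^\star - s_{\ReLU}^\varepsilon$ is a harmonic polynomial. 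Since $g_d^\varepsilon * (\dummy)$ is the heat semigroup, which fixes harmonic functions, $g_d^\varepsilon * h = h$, so the boundary conditions of \cref{eq:SDE-thm-mol} reduce to $\partial^{\vec{m}} h(\vec{0}) = 0$ for $\abs{\vec{m}} \le 1$: the only such $h$ of degree $\le 1$ is $h \equiv 0$, whereas degree-$\geq 2$ harmonic polynomials vanishing to first order at the origin do satisfy them, which is the asserted ambiguity. Uniqueness of the CPwL solution is then immediate, since a polynomial of degree $\geq 2$ is affine on no open set, so adding a nonzero such $h$ to the CPwL function $s_{\ReLU}^\varepsilon$ destroys piecewise linearity.

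Finally, for the limit $\varepsilon \to 0$, I pass to the pointwise limit $k_{\vec{x}} = \lim_{\varepsilon \to 0} k_{\vec{x}}^\varepsilon$: in \cref{eq:kernel-op} the mollifications $g_1^\varepsilon * \tfrac{\abs{\dummy}}{2}$ and $g_1^\varepsilon * \tfrac{\sgn}{2}$ converge pointwise to $\tfrac{\abs{t}}{2}$ and $\tfrac{\sgn(t)}{2}$ (the latter taking value $0$ at $t = 0$, consistent with the midpoint-at-jumps convention), and \cref{eq:unif-bound} is preserved in the limit, so the whole argument above runs verbatim with $\TOp_{\ReLU}^{\dagger}$ and $\TOp_{\ReLU}^{\dagger*}$, giving the measurability claim, the characteristic functional \cref{eq:cf_srelu}, and the SDE \cref{eq:SDE-thm-new} (the midpoint convention making $\partial^{\vec{m}} s(\vec{0})$ well-defined for CPwL $s$). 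Alternatively, dominated convergence in \cref{eq:cf_srelu_mol} — the dominating integrand supplied by \cref{eq:unif-bound} and admissibility — shows $\hat{\Pr}_{s_{\ReLU}^\varepsilon}(\varphi) \to \hat{\Pr}_{s_{\ReLU}}(\varphi)$ for every $\varphi$, so by the generalized L\'evy continuity theorem $s_{\ReLU}^\varepsilon$ converges in law to $s_{\ReLU}$. I expect the genuine obstacle — the step that really consumes both admissibility hypotheses and, above all, the uniform linear-growth bound \cref{eq:unif-bound} (equivalently, the affine boundary-correction terms) — to be the passage $\Pi \uparrow \cyl$ from truncated to full noise: without the correction terms the raw ReLU neurons indexed by $\abs{t} \to \infty$ fail to sum, and it is exactly \cref{eq:unif-bound} that tames the tails of the compound-Poisson series in $\Sch'(\R^d)$ and of the characteristic-functional integral.
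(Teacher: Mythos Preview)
Your proposal is correct and follows essentially the same route as the paper: the compound-Poisson conditioning calculation, the passage from a bounded window to the full cylinder controlled by admissibility, and Bochner--Minlos for existence are exactly what the paper does (the paper separates out the noise $w_\Poi$ as a preliminary lemma on $\Sch_{\RadonOp}'$ and then pushes forward by $\TOp_{\ReLU}^{\dagger\varepsilon}$, whereas you truncate the network directly---but this is only a cosmetic difference). Your SDE argument is actually a bit cleaner than the paper's: where the paper invokes the open mapping theorem and works in the quotient $\Sch'(\R^d)/\Null_\Delta$, you go straight to $\Delta(s^\star - s_{\ReLU}^\varepsilon)=0$ via injectivity of $\KOp\RadonOp$ on $\Sch'(\R^d)$ (which does follow from \cref{prop:Radon}, since $\RadonOp^*\KOp:\Sch_{\RadonOp}\to\Sch(\R^d)$ is surjective), and your observation that the Gaussian mollifier fixes harmonic polynomials ($g_d^\varepsilon * h = h$) is a slick way to transfer the mollified boundary conditions to pointwise ones on $h$.

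One small imprecision: you assert $\psi^\varepsilon = \TOp_{\ReLU}^{\dagger\varepsilon*}\{\varphi\} \in \Sch_{\RadonOp}$ for general $\varphi\in\Sch(\R^d)$, but the paper only establishes this for $\varphi\in\Sch_\Delta(\R^d)$; for arbitrary $\varphi$ one must argue separately that $\psi^\varepsilon$ is continuous with rapid decay in $t$ (hence in $L^1(\cyl)$), which is what the paper does by inspecting the kernel. This does not affect the validity of your argument, since $L^1$-integrability is all you actually use.
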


While the proof of the theorem is rather technical, the main ingredients can be divided into two steps. The first is to prove that $w_\Poi$ is a well-defined stochastic process that takes values in $\Sch_{\RadonOp}'$. The second is to invoke the computation in \cref{eq:Poi-to-ReLU} which linearly and continuously transforms $w_\Poi$ into a random ReLU neural network. This transformation allows us to derive the characteristic functional of $s_{\ReLU}$ in terms of the characteristic functional of $w_\Poi$. The proof appears in \cref{app:well-defined}.

\subsection{Restrictions to Compact Domains} \label{subsec:bounded-domain}
Recall from \cref{eq:random-nn-Poi-sum} that, for any $\lambda > 0$ and admissible $\Pr_V$ (in the sense of \cref{defn:admissible}), the restriction of the random neural network $s_{\ReLU} \sim \ReLUP(\lambda; \Pr_V)$ to a compact domain, say, the unit ball $\B_1^d$ is a random Poisson sum of the form
\begin{equation} \label{eq:s_relu_restricted}
    \eval{s_{\ReLU}}_{\B_1^d} (\vec{x}) 
    = \vec{w}_0^\T\vec{x} + b_0 + \sum_{k = 1}^{N_\lambda} v_k \ReLU(\vec{w}_k^\T\vec{x} - b_k),
\end{equation}
where the width $N_\lambda$ is a Poisson random variable. The reader can quickly check that the activation thresholds that intersect $\B_1^d$ correspond to Poisson points that lie in $\Sph^{d-1} \times [-1, 1]$. 
Thus, the number of neurons $N_\lambda$ is a Poisson random variable with mean $\lambda \abs{\Sph^{d-1} \times [-1, 1]}$, which is $\lambda$ multiplied by twice the surface area of the $(d-1)$-sphere. For general compact domains $\Omega \subset \R^d$, following \citet[Section~IV]{parhi2022near}, we define
\begin{equation}
    \mathcal{Z}_\Omega \coloneqq \curly{(\vec{w}, b) \in \cyl \st \curly{\vec{x} \st \vec{w}^\T\vec{x} = b} \cap \Omega \neq \varnothing}.
    \label{eq:Z-Omega}
\end{equation}
Then, the restriction $\eval{s_{\ReLU}}_{\Omega}$ is a random neural network whose width $N_{\lambda, \Omega}$ is a Poisson random variable with mean $\lambda \abs{\mathcal{Z}_{\Omega}}$. As $\lambda \to \infty$, we see that $\E\sq{N_{\lambda, \Omega}} \to \infty$. Therefore, the asymptotic setting ($\lambda \to \infty$) corresponds to the infinite-width regime.

\section{Properties of Random ReLU Neural Networks} \label{sec:properties}
The characteristic functional \cref{eq:cf_srelu} allows us to derive the first- and second-order statistics of $s_{\ReLU}$ as well as infer some of its other properties such as isotropy and wide-sense self-similarity. We summarize these properties in \cref{th:properties}.

\begin{theorem}\label{th:properties}
    For $\lambda > 0$ and admissible $\Pr_V$ (in the sense of \cref{defn:admissible}), let $s_{\ReLU} \sim \ReLUP(\lambda; \Pr_V)$. Then, the following statements hold.
    \begin{enumerate}
        \item The mean of $s_{\ReLU}$ is given by
        \begin{equation}\label{eq:mean}
            \E\sq{s_{\ReLU}(\vec{x})} = \lambda \E\sq{V} \int_{\R} \int_{\Sph^{d-1}} k_{\vec{x}}(\vec{u}, t) \dd\vec{u} \dd t,
        \end{equation}
        where $k_{\vec{x}}$ is defined in \cref{thm:well-defined}.

        \item If $\Pr_V$ has a finite second moment, then the autocovariance of $s_{\ReLU}$ is given by
        \begin{align}
            C_{s_{\ReLU}}(\vec{x}, \vec{y}) &= \E\sq{(s_{\ReLU}(\vec{x}) - \E\sq{s_{\ReLU}(\vec{x})}) (s_{\ReLU}(\vec{y}) - \E\sq{s_{\ReLU}(\vec{y})})} \nonumber \\
            &=  A \lambda \E\sq{V^2} \paren*{\norm{\vec{x} - \vec{y}}_2^3 - \norm{\vec{x}}_2^3 - \norm{\vec{y}}_2^3 + 3 \vec{x}^\T\vec{y}\paren*{\norm{\vec{x}}_2 + \norm{\vec{y}}_2}}, \label{eq:autocov}
        \end{align}
        where $A = \frac{\Gamma(-3/2)}{2^{d+3} \pi^{d/2} \Gamma((d+3)/2)}$ and $\Gamma(\dummy)$ is Euler's gamma function.

        \item The process $s_{\ReLU}$ is isotropic, i.e., it has the same probability law as its rotated version $s_{\ReLU}(\mat{U}^\T \cdot)$, where $\mat{U}$ is any $(d \times d)$ rotation matrix.

        \item If $\Pr_V$ has zero mean and a finite second moment, then $s_{\ReLU}$ is wide-sense self-similar with Hurst exponent $H=3/2$, i.e., it has the same second-order moments as its scaled and renormalized version $a^{H}s_{\ReLU}(\cdot/a)$ with $a>0$.

        \item The process $s_{\ReLU}$ is non-Gaussian.
    \end{enumerate}
\end{theorem}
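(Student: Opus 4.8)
The plan is to show that a single scalar marginal $\ang{s_{\ReLU}, \varphi}$ of $s_{\ReLU}$ is not a Gaussian random variable; this suffices, since if $s_{\ReLU}$ were Gaussian in the sense of \cref{defn:Gauss-process} then, by the linearity of $\mu_s$ and the bilinearity of $\Sigma_s$, for every $\varphi \in \Sch(\R^d)$ the map $\tau \mapsto \hat{\Pr}_{s_{\ReLU}}(\tau\varphi) = \E\sq{\e^{\imag\tau\ang{s_{\ReLU},\varphi}}}$ would be the characteristic function of a (possibly degenerate) real Gaussian variable, i.e.\ of the form $\exp\paren*{\imag a\tau - \tfrac{b}{2}\tau^2}$ with $a \in \R$ and $b \geq 0$. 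I would contradict this for one well-chosen test function. Fix $\varphi \in C_c^\infty(\R^d) \subset \Sch(\R^d)$ and write $\psi \coloneqq \TOp_{\ReLU}^{\dagger*}\curly{\varphi}$; since $\TOp_{\ReLU}^{\dagger*}\curly{\tau\varphi} = \tau\psi$, \cref{eq:cf_srelu} yields
\begin{align*}
    \E\sq*{\e^{\imag\tau\ang{s_{\ReLU},\varphi}}}
    &= \exp\paren*{\lambda \int_{\R}\int_\R\int_{\Sph^{d-1}} \paren*{\e^{\imag\tau v\psi(\Sphvar,t)} - 1}\dd\Sphvar \dd t \dd\Pr_V(v)} \\
    &= \exp\paren*{\int_{\R\setminus\curly{0}}\paren*{\e^{\imag\tau u} - 1}\dd\nu_\varphi(u)},
\end{align*}
where $\nu_\varphi$ is the image of $\lambda\,\dd\Sphvar\,\dd t\,\dd\Pr_V(v)$ under $(\Sphvar,t,v) \mapsto v\psi(\Sphvar,t)$, restricted to $\R\setminus\curly{0}$ (the integrand vanishes wherever $v\psi = 0$, so the mass at the origin is irrelevant). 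This is exactly the characteristic function of the compound-Poisson-type marginal of the finite-width representation \cref{eq:s_relu_restricted}.

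The crux is that $\nu_\varphi$ is a \emph{finite} and \emph{nonzero} measure on $\R\setminus\curly{0}$. A short computation from the first line of \cref{eq:kernel-op}, in the limit $\varepsilon \to 0$ and using $\ReLU(s) - s/2 = \abs{s}/2$, shows that the limiting kernel is $k_\vec{x}(\Sphvar,t) = \ReLU(\Sphvar^\T\vec{x} - t)$ for $t > 0$ and $k_\vec{x}(\Sphvar,t) = \ReLU(t - \Sphvar^\T\vec{x})$ for $t < 0$; in particular $k_\vec{x}(\Sphvar,t) = 0$ whenever $\abs{t} > \norm{\vec{x}}_2$. Hence, taking $\varphi$ supported in $\B_R^d$, the function $\psi(\Sphvar,t) = \int_{\R^d}k_\vec{x}(\Sphvar,t)\varphi(\vec{x})\dd\vec{x}$ is bounded (by \cref{eq:unif-bound}) and vanishes for $\abs{t} > R$, so $\nu_\varphi(\R\setminus\curly{0}) \leq \lambda\,\abs{\Sph^{d-1}}\cdot 2R < \infty$. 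For nonvanishing, pick $\varphi \geq 0$ with $\varphi \not\equiv 0$ concentrated near a fixed $\Sphvar_0 \in \Sph^{d-1}$; then $\psi(\Sphvar_0, t) = \int \ReLU(\Sphvar_0^\T\vec{x} - t)\varphi(\vec{x})\dd\vec{x} > 0$ for all small $t > 0$, so $\psi \not\equiv 0$, and since $\psi$ is continuous away from $\curly{t = 0}$ the set $\curly{\psi \neq 0}$ has positive $(\dd\Sphvar\,\dd t)$-measure. Because $\Pr_V(\curly{0}) = 0$ (\cref{defn:admissible}), the set $\curly{(\Sphvar,t,v) \st v\psi(\Sphvar,t) \neq 0}$ then has positive $\lambda\,\dd\Sphvar\,\dd t\,\dd\Pr_V$-measure, whence $\nu_\varphi(\R\setminus\curly{0}) > 0$.

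Granting this, set $c \coloneqq \nu_\varphi(\R\setminus\curly{0}) \in (0,\infty)$ and $\hat{G} \coloneqq c^{-1}\nu_\varphi$, the characteristic function of a probability measure $G$ supported in $\R\setminus\curly{0}$; the marginal then has characteristic function $\tau \mapsto \exp\paren*{c\paren*{\hat{G}(\tau) - 1}}$. Were this equal to $\exp\paren*{\imag a\tau - \tfrac{b}{2}\tau^2}$ for all $\tau \in \R$, then (taking logarithms --- both sides are continuous, non-vanishing, and agree at $\tau = 0$) $\hat{G}(\tau) = 1 + c^{-1}\paren*{\imag a\tau - \tfrac{b}{2}\tau^2}$ would be a polynomial in $\tau$; being bounded by $1$ on $\R$ it must be constant, hence $\hat{G} \equiv 1$ and $G = \delta_0$, contradicting $G(\curly{0}) = 0$. (Equivalently, one may appeal to the uniqueness of the L\'evy--Khintchine representation: $\int_{\R\setminus\curly{0}}\paren*{\e^{\imag\tau u} - 1}\dd\nu_\varphi(u)$ is the characteristic exponent of a nondegenerate pure-jump infinitely divisible law, which cannot coincide with a Gaussian exponent.) Thus $\ang{s_{\ReLU},\varphi}$ is non-Gaussian, so $s_{\ReLU}$ is not a Gaussian process.

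The only genuine work, I expect, is the middle step --- reading off the closed form and $t$-support of the kernel $k_\vec{x}$ from \cref{eq:kernel-op} and checking that $\nu_\varphi$ is a bona fide finite, nontrivial measure --- the rest being the elementary fact that a bounded polynomial is constant (or L\'evy--Khintchine uniqueness), plus the standard observation that scalar marginals of a generalized Gaussian process are Gaussian. As a side remark, when $\Pr_V$ additionally has a finite fourth moment, non-Gaussianity also follows from a one-line cumulant computation: differentiating $\log\hat{\Pr}_{s_{\ReLU}}(\tau\varphi)$ four times at $\tau = 0$ gives the fourth cumulant $\lambda\E\sq{V^4}\int_\R\int_{\Sph^{d-1}}\paren*{\TOp_{\ReLU}^{\dagger*}\curly{\varphi}(\Sphvar,t)}^4 \dd\Sphvar\,\dd t$, which is strictly positive for a suitable $\varphi$, whereas all cumulants of order $\geq 3$ of a Gaussian vanish; the compound-Poisson argument above is what removes any moment assumption on $\Pr_V$.
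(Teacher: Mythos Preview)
Your argument is correct and considerably more detailed than the paper's own. The paper dispatches item~5 in a single sentence: having computed the mean and covariance functionals in items~1 and~2, it simply observes that the characteristic functional \cref{eq:cf_srelu} is not of the Gaussian form $\exp\paren*{\imag\mu_s(\varphi) - \tfrac{1}{2}\Sigma_s(\varphi,\varphi)}$ prescribed by \cref{defn:Gauss-process}, ``even when $\Pr_V$ has a finite second moment.'' Your route---reducing to a single scalar marginal, recognising it as a compound-Poisson law with finite nonzero jump measure $\nu_\varphi$, and then invoking either the bounded-polynomial trick or L\'evy--Khintchine uniqueness---is a genuinely self-contained proof that does not lean on the computations of items~1--2 and works uniformly for every admissible $\Pr_V$, with no second-moment hypothesis. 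The explicit compact-support analysis of the limiting kernel ($k_\vec{x}(\Sphvar,t)=0$ for $\abs{t}>\norm{\vec{x}}_2$) is a concrete ingredient the paper does not isolate, and your cumulant side remark recovers, as a special case, exactly the kind of comparison the paper's one-line argument is implicitly making. One small notational slip: where you write ``$\hat{G}\coloneqq c^{-1}\nu_\varphi$, the characteristic function of a probability measure $G$'' you presumably mean $G\coloneqq c^{-1}\nu_\varphi$ with $\hat{G}$ its characteristic function; the subsequent manipulation is correct with that reading.
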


The proof of \cref{th:properties} can be found in \cref{app:properties}. We mention that the expression of the autocovariance in \cref{eq:autocov} is remarkably simple. This is in contrast to prior works that either (i) do not provide a closed-form expression~\citep{lee2018deep,yaida2020non,hanin2021random}, or (ii) provide a closed-form expression, but do not consider the ReLU activation function~\citep{williams1996computing}. Furthermore, other than the work of \citet{yaida2020non}, these prior works only consider the infinite-width regime.

\section{Asymptotic Results} \label{sec:asymptotic}
In the literature, there has been a lot of work on studying the wide limits of random neural networks. Here, we present an asymptotic result for random ReLU neural networks with i.i.d. weights drawn from an S$\alpha$S law. The proof appears in \cref{app:asymptotic}.

\begin{theorem}\label{th:asymptotic}
    For $n \in \mathbb{N}$, let $s_{\ReLU}^{n} \sim \ReLUP(\lambda=n; \Pr_V)$ with $\Pr_V$ being a symmetric $\alpha$-stable law with scale parameter $b n^{(-1/\alpha)}$,\footnote{Similar to \citet{neal1996bayesian, lee2018deep}, the scale parameter inversely depends on the expected width of the network.} where $\alpha \in (1,2]$ and $b \in \R_{+}$, that is, $\hat{\Pr}_{V}(\xi) = \exp\left(-\frac{|b \xi|^{\alpha}}{n}\right)$. Then, we have 
    \begin{equation}
        s_{\ReLU}^{n} \xrightarrow[n \to \infty]{\Law} s_{\ReLU}^{\infty},
    \end{equation}
    where $s_{\ReLU}^{\infty}$ is a well-defined generalized stochastic process that takes values in $\Sch'(\R^d)$ and has the characteristic functional
    \begin{equation}
        \hat{\Pr}_{s_{\ReLU}^{\infty}}(\varphi) = \exp\left( - |b|^{\alpha} \|\TOp_{\ReLU}^{\dagger*}\curly{\varphi} \|_{L^{\alpha}}^{\alpha} \right), \quad \varphi \in \Sch(\R^d).
        \label{eq:wide-cf}
    \end{equation}
\end{theorem}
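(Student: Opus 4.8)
The plan is to take the closed-form characteristic functional of $s_{\ReLU}^{n}$ supplied by \cref{thm:well-defined}, carry out the weight-integration explicitly for the prescribed S$\alpha$S law, pass to the limit $n \to \infty$ under the integral, and then upgrade the resulting pointwise convergence of characteristic functionals to convergence in law via a Lévy-type continuity theorem on the nuclear space $\Sch(\R^d)$. Throughout, write $u_\varphi \coloneqq \TOp_{\ReLU}^{\dagger*}\curly{\varphi}$ for $\varphi \in \Sch(\R^d)$.

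The key preliminary estimate is that $\TOp_{\ReLU}^{\dagger*}$ maps $\Sch(\R^d)$ continuously into $L^p(\cyl)$ for every $p \in [1, \infty]$, which I would read off from the explicit kernel of \cref{prop:inverse}. Letting $\varepsilon \to 0$ in \cref{eq:kernel-op} gives
\[
    k_\vec{x}(\Sphvar, t) = \ReLU(\Sphvar^\T\vec{x} - t) - \frac{\Sphvar^\T\vec{x} - t}{2} - \frac{\abs{t}}{2} + \frac{(\Sphvar^\T\vec{x})\sgn(t)}{2},
\]
and a one-line case check shows that this vanishes whenever $\abs{t} > \abs{\Sphvar^\T\vec{x}}$, so $k_\vec{x}(\Sphvar, \cdot)$ is supported in $\curly{\abs{t} \le \norm{\vec{x}}_2}$. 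Combining this support property with the uniform bound \cref{eq:unif-bound} yields $\abs{u_\varphi(\Sphvar, t)} \le C \int_{\norm{\vec{x}}_2 \ge \abs{t}} (1 + \norm{\vec{x}}_2)\abs{\varphi(\vec{x})} \dd\vec{x}$, which decays faster than any power of $\abs{t}$ when $\varphi \in \Sch(\R^d)$; hence $u_\varphi \in L^1(\cyl) \cap L^\infty(\cyl) \subset L^p(\cyl)$ for every $p \ge 1$, with $\norm{u_\varphi}_{L^p}$ dominated by a continuous seminorm of $\varphi$ on $\Sch(\R^d)$. In particular $\varphi \mapsto \norm{\TOp_{\ReLU}^{\dagger*}\curly{\varphi}}_{L^\alpha}$ is well-defined and continuous on $\Sch(\R^d)$, so the right-hand side of \cref{eq:wide-cf} is meaningful.

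Next, since the S$\alpha$S law with $\alpha \in (1, 2]$ is admissible in the sense of \cref{defn:admissible}, \cref{eq:cf_srelu} applies to each $s_{\ReLU}^{n}$. Using $\abs{\e^{\imag a} - 1} \le \min\curly{2, \abs{a}}$, the finiteness of $\E\sq{\abs{V}}$ (which holds for S$\alpha$S precisely because $\alpha > 1$), and $u_\varphi \in L^1(\cyl)$, Fubini's theorem lets me move the $\dd\Pr_V(v)$ integration inside; performing it via $\int_\R \e^{\imag v u_\varphi(\Sphvar, t)}\dd\Pr_V(v) = \hat{\Pr}_V(u_\varphi(\Sphvar, t)) = \e^{-\abs{b\,u_\varphi(\Sphvar, t)}^\alpha/n}$ gives
\[
    \hat{\Pr}_{s_{\ReLU}^{n}}(\varphi) = \exp\paren*{n \int_{\cyl} \paren*{\e^{-\abs{b\,u_\varphi(\Sphvar, t)}^\alpha/n} - 1} \dd\Sphvar\dd t}.
\]
Since $n(\e^{-c/n} - 1) \to -c$ pointwise for every $c \ge 0$ and $\abs{n(\e^{-c/n} - 1)} = n(1 - \e^{-c/n}) \le c$, the integrand is dominated uniformly in $n$ by $\abs{b}^\alpha \abs{u_\varphi(\Sphvar, t)}^\alpha \in L^1(\cyl)$, so dominated convergence yields
\[
    \hat{\Pr}_{s_{\ReLU}^{n}}(\varphi) \xrightarrow[n \to \infty]{} \exp\paren*{-\abs{b}^\alpha \int_{\cyl} \abs{u_\varphi(\Sphvar, t)}^\alpha \dd\Sphvar\dd t} = \exp\paren*{-\abs{b}^\alpha \norm{\TOp_{\ReLU}^{\dagger*}\curly{\varphi}}_{L^\alpha}^\alpha}
\]
for every $\varphi \in \Sch(\R^d)$, which is exactly the functional in \cref{eq:wide-cf}.

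Finally, each $\hat{\Pr}_{s_{\ReLU}^{n}}$ is the characteristic functional of a generalized process taking values in the same space $\Sch'(\R^d)$ (\cref{thm:well-defined}), they converge pointwise to $\hat{\Pr}_{s_{\ReLU}^{\infty}}$, and the limit satisfies $\hat{\Pr}_{s_{\ReLU}^{\infty}}(0) = 1$, is positive definite (being a pointwise limit of positive-definite functionals), and is continuous at $0$ by the continuity of $\varphi \mapsto \norm{\TOp_{\ReLU}^{\dagger*}\curly{\varphi}}_{L^\alpha}$ established above. By the Bochner--Minlos theorem it is therefore the characteristic functional of a well-defined generalized stochastic process $s_{\ReLU}^{\infty}$ on $\Sch'(\R^d)$, and by the Lévy continuity theorem for generalized stochastic processes on a nuclear space \citep[see, e.g.,][]{unser2014introduction, fernique1967processus} we conclude $s_{\ReLU}^{n} \xrightarrow{\Law} s_{\ReLU}^{\infty}$. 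I expect the $L^\alpha$-mapping property of $\TOp_{\ReLU}^{\dagger*}$ (equivalently, making the limiting kernel $k_\vec{x} = \lim_{\varepsilon \to 0} k_\vec{x}^\varepsilon$ precise and controlling its tails in $t$) to be the main obstacle; the remaining steps are the routine Fubini/dominated-convergence manipulations sketched above.
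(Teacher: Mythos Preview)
Your proposal is correct and follows essentially the same route as the paper: write the characteristic functional of $s_{\ReLU}^n$ via \cref{thm:well-defined}, integrate out the S$\alpha$S weight to obtain $\Psi_n(\xi)=n(\e^{-|b\xi|^\alpha/n}-1)$, pass to the limit under the integral by dominated convergence using an $L^\alpha$-dominating function, and conclude via a Lévy continuity theorem on $\Sch(\R^d)$. The only cosmetic differences are that the paper cites the bound $|\Psi_n(\xi)|\le\sqrt{2}|b\xi|^\alpha$ from \citet{fageot2020gaussian} where you use the sharper $|n(\e^{-c/n}-1)|\le c$, and the paper invokes the version of Lévy continuity from \citet{bierme2018generalized} that simultaneously yields existence of the limit process, so your separate appeal to Bochner--Minlos is not needed.
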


When $\alpha = 2$, the S$\alpha$S law is the Gaussian law. In this case, we can deduce that $s_{\ReLU}^{\infty}$ is indeed a Gaussian process (see \cref{app:asymptotic}). On the other hand, for $\alpha \in (1, 2)$, we can readily see that $s_{\ReLU}^{\infty}$ is non-Gaussian. Therefore, we have rigorously shown that wide limits of random neural networks are not necessarily Gaussian processes.

We illustrate these observations numerically in \cref{fig:Gauss-2D,fig:stable-2D}, where we generated random neural networks with $\Pr_V$ being Gaussian ($\alpha = 2$) and non-Gaussian ($\alpha=1.25$), respectively. There, we plot a top-down view of realizations of random neural networks for $\lambda \in \curly{1, 10, 100, 1000}$ where we color the linear regions with the magnitude of the gradient of the function. \Cref{fig:Gauss-2D}(d) looks like a two-dimensional Gaussian process, while \cref{fig:stable-2D}(d) remains to look CPwL (non-Gaussian). Discussion on how we generated the random neural networks numerically along with some additional figures appear in \cref{app:more-figs}.

\begin{figure}[t!]
    \centering
    \begin{minipage}[b]{0.24\linewidth}
        \centering
        \centerline{\includegraphics[width=\textwidth]{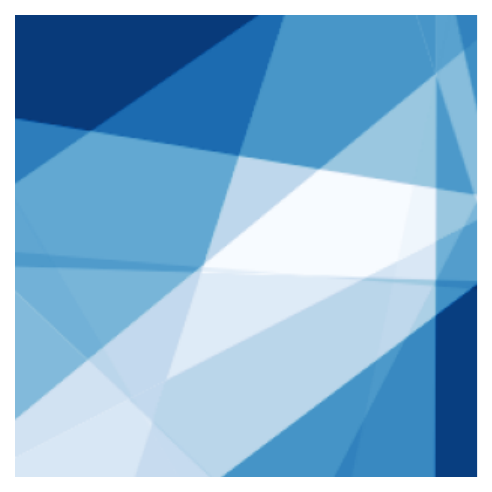}}
        (a) $\lambda = 1$
    \end{minipage}
    \begin{minipage}[b]{0.24\linewidth}
        \centering
        \centerline{\includegraphics[width=\textwidth]{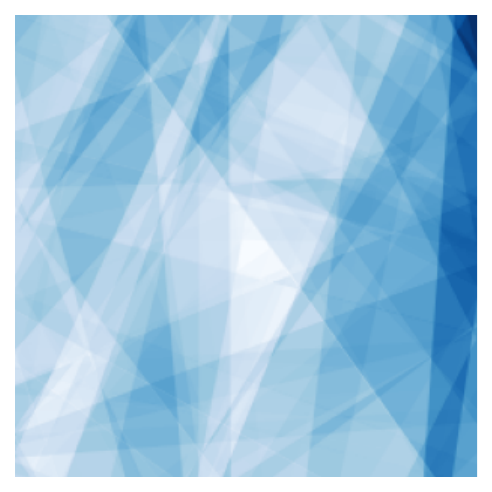}}
        (b) $\lambda = 10$
    \end{minipage}
    \begin{minipage}[b]{0.24\linewidth}
        \centering
        \centerline{\includegraphics[width=\textwidth]{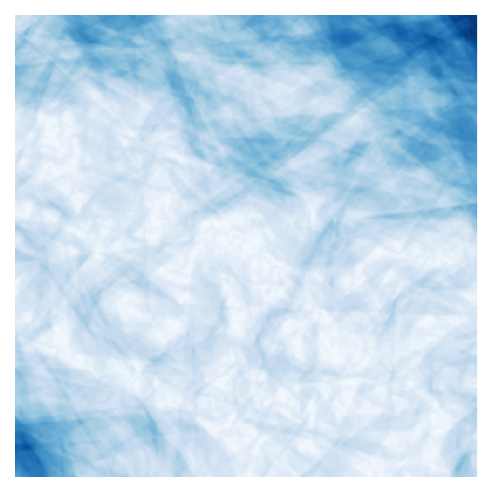}}
        (c) $\lambda = 100$
    \end{minipage}
    \begin{minipage}[b]{0.24\linewidth}
        \centering
        \centerline{\includegraphics[width=\textwidth]{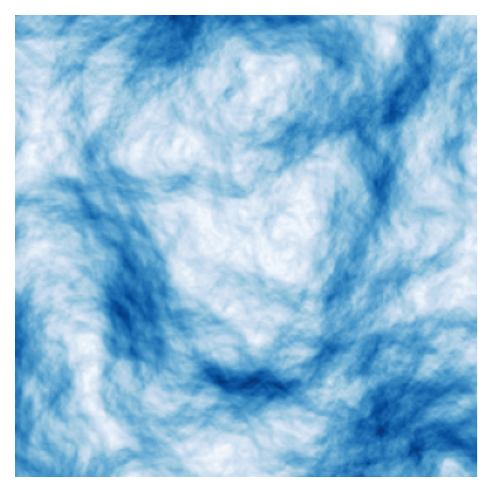}}
        (d) $\lambda = 1000$
    \end{minipage}
    \caption{$\Pr_V$ is Gaussian.}
    \label{fig:Gauss-2D}
\end{figure}
\begin{figure}[t!]
    \centering
    \begin{minipage}[b]{0.24\linewidth}
        \centering
        \centerline{\includegraphics[width=\textwidth]{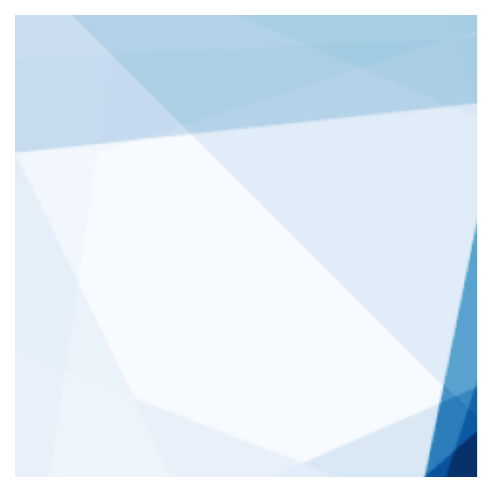}}
        (a) $\lambda = 1$
    \end{minipage}
    \begin{minipage}[b]{0.24\linewidth}
        \centering
        \centerline{\includegraphics[width=\textwidth]{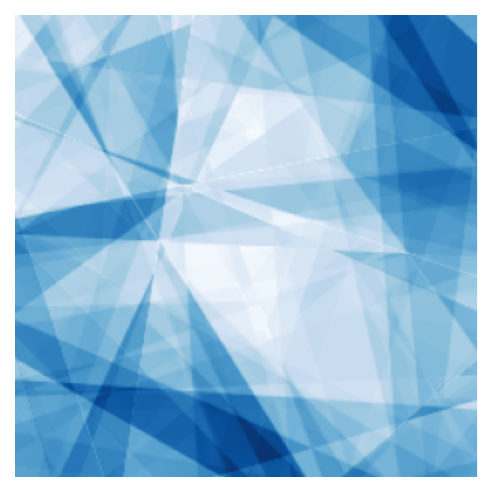}}
        (b) $\lambda = 10$
    \end{minipage}
    \begin{minipage}[b]{0.24\linewidth}
        \centering
        \centerline{\includegraphics[width=\textwidth]{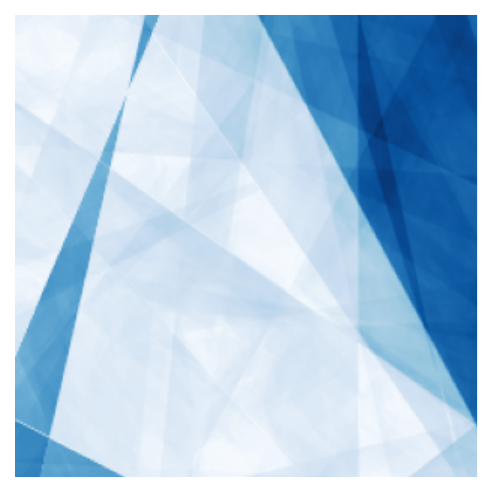}}
        (c) $\lambda = 100$
    \end{minipage}
    \begin{minipage}[b]{0.24\linewidth}
        \centering
        \centerline{\includegraphics[width=\textwidth]{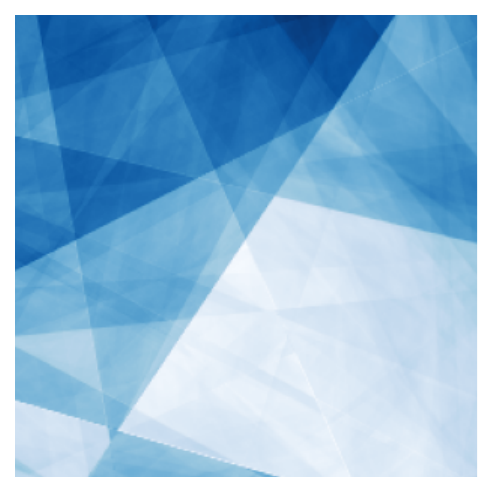}}
        (d) $\lambda = 1000$
    \end{minipage}
    \caption{$\Pr_V$ is symmetric ($\alpha = 1.25$)-stable.}
    \label{fig:stable-2D}
\end{figure}

\section{Conclusion}

We have investigated the statistical properties of random ReLU neural networks. We proved that these networks are well-defined non-Gaussian processes in the non-asymptotic regime. We showed that these processes are isotropic and wide-sense self-similar with Hurst exponent $3/2$. Remarkably, the autocovariances of these processes have simple closed-form expressions. Finally, we showed that, under suitable hypotheses, as the expected width tends to infinity, these processes can converge in law not only to Gaussian processes, but also to non-Gaussian processes depending on the law of the weights. These asymptotic results recover the classical observation that wide networks converge to Gaussian processes as well as prove that wide networks can converge to non-Gaussian processes. Although the presented investigation only considered shallow random ReLU neural networks, an important direction of future work would be to generalize our exact characterizations to deeper networks. To that end, the techniques developed by \cite{zavatone2021exact} could provide a starting point for that investigation.

\acks{%
The authors would like to thank the anonymous reviewers and the action editor for their careful reading of the manuscript.
This work was supported in part by the Swiss National Science Foundation under Grant 200020\_219356 / 1 and in part by the European Research Council (ERC Project FunLearn) under Grant 101020573.
}

\appendix

\section{Proof of \Cref{thm:well-defined}} \label{app:well-defined}
As preparation before the proof of \cref{thm:well-defined}, we collect and prove some intermediary results. To begin, we shall first prove that $w_\Poi$ is a well-defined stochastic process taking values in $\Sch_{\RadonOp}'$. Recall that $w_\Poi$ is an impulsive white noise that is realized by a Poisson-type random measure of the form
\begin{equation}
    w_\Poi = \sum_{k \in \Z} v_k \, \deltae_{(\vec{w}_k, b_k)},
\end{equation}
where $v_k \overset{\text{i.i.d.}}{\sim} \Pr_V$ for some admissible $\Pr_V$ (in the sense of \cref{defn:admissible}) and the collection of random variables $\paren*{(\vec{w}_k, b_k)}_{k \in \Z}$ is a (homogeneous) Poisson point process on $\cyl$ with rate parameter $\lambda > 0$. This point process satisfies the following properties.
\begin{enumerate}
    \item The $(\vec{w}_k, b_k)$ are mutually independent.
    \item For any measurable subset $\Pi \subset \cyl$, if we define the random variable
    \begin{equation}
        N_\Pi = \abs{\curly{(\vec{w}_k, b_k) \st (\vec{w}_k, b_k) \in \Pi}},
    \end{equation}
    then
    \begin{equation}
        \Pr\paren{N_\Pi = n} = \frac{(\lambda \abs{\Pi})^n}{n!} \e^{-\lambda \abs{\Pi}},
    \end{equation}
    where $\abs{\Pi}$ denotes the $d$-dimensional Hausdorff measure of $\Pi$. That is to say, $N_\Pi$ is a Poisson random variable with mean $\lambda \abs{\Pi}$.
    
    \item For any measurable subset $B \subset \cyl$,
    \begin{equation}
        \Pr\paren*{(\vec{w}_k, b_k) \in B \given (\vec{w}_k, b_k) \in \Pi} = \frac{\abs{B \cap \Pi}}{\abs{\Pi}}.
    \end{equation}
    That is to say, if a point lies in $\Pi$, then its location will be uniformly distributed on $\Pi$.
\end{enumerate}

\begin{lemma} \label[lemma]{lemma:Poi-cf}
    The random measure $w_\Poi$ can be viewed as a measurable mapping
    \begin{equation}
        w_\Poi: (\Omega, \F, \Pr) \to (\Sch_{\RadonOp}', \cB(\Sch_{\RadonOp}'))
    \end{equation}
    with characteristic functional given by
    \begin{equation}
        \hat{\Pr}_{w_\Poi}(\psi) = \exp\paren*{\lambda \int_{\R} \int_{\R} \int_{\Sph^{d-1}} \paren*{\e^{\imag v \psi(\Sphvar, t)} - 1} \dd\Sphvar \dd t \dd \Pr_V(v)},
    \end{equation}
    where $\mathrm{d}\Sphvar$ denotes integration against the surface measure on $\Sph^{d-1}$.
\end{lemma}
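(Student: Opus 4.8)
The plan is to realize $w_\Poi$ as the integral of a test function against a Poisson random measure on an augmented space, and then to read off both its measurability into $\Sch_\RadonOp'$ and the form of its characteristic functional from the exponential formula for Poisson random measures, exploiting the rapid decay of Schwartz functions on $\cyl$ together with the admissibility of $\Pr_V$. Concretely, I would set $E \coloneqq \cyl \times \R$ and let $\mathcal N = \sum_{k \in \Z} \delta_{((\vec w_k, b_k),\, v_k)}$ be the Poisson random measure on $E$ with intensity $\nu \coloneqq \lambda\,(\dd\Sphvar \otimes \dd t \otimes \dd\Pr_V(v))$, where $\dd\Sphvar$ is the surface measure on $\Sph^{d-1}$; that $\mathcal N$ is Poisson with this intensity is exactly the restatement of the three properties of the point process listed above as a marked point process with i.i.d.\ marks $v_k \sim \Pr_V$ \citep{daley2007introduction}. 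Since every $\psi \in \Sch_\RadonOp$ is even, the symmetrization in $\deltae$ is invisible to it, namely $\ang{\deltae_{(\vec w_k, b_k)}, \psi} = \tfrac{1}{2}\bigl(\psi(\vec w_k, b_k) + \psi(-\vec w_k, -b_k)\bigr) = \psi(\vec w_k, b_k)$, so that formally $\ang{w_\Poi, \psi} = \sum_{k \in \Z} v_k\,\psi(\vec w_k, b_k) = \int_E v\,\psi(\Sphvar, t)\,\dd\mathcal N(\Sphvar, t, v)$. I would also note that $\Sch_\RadonOp$, being a closed subspace of the Fréchet nuclear space $\Sch(\cyl)$, is itself Fréchet and nuclear, so $\cB_c(\Sch_\RadonOp') = \cB(\Sch_\RadonOp')$ and it suffices to verify measurability against the cylindrical structure.

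\emph{Well-definedness.} The workhorse estimate is Campbell's formula: for any $\psi \in \Sch(\cyl)$,
\[
    \E\sq*{\sum_{k \in \Z} \abs{v_k\,\psi(\vec w_k, b_k)}} = \int_E \abs{v\,\psi(\Sphvar, t)}\,\dd\nu = \lambda\,\E\sq{\abs{V}}\int_\cyl \abs{\psi(\Sphvar, t)}\,\dd\Sphvar\,\dd t < \infty,
\]
which is finite because $\Pr_V$ has a finite first absolute moment (\cref{defn:admissible}) and $\psi$ is in particular integrable on $\cyl$; hence the series defining $\ang{w_\Poi, \psi}$ converges absolutely for a.e.\ $\omega$. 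To upgrade this to a.s.\ membership of $w_\Poi$ in $\Sch_\RadonOp'$, I would apply Campbell's formula once more to $\sum_k \abs{v_k}(1 + \abs{b_k}^{d+1})^{-1}$, whose expectation $\lambda\,\E\sq{\abs{V}}\abs{\Sph^{d-1}}\int_\R(1+\abs{t}^{d+1})^{-1}\,\dd t$ is finite, to obtain on a full-measure set the bound
\[
    \abs{\ang{w_\Poi, \psi}} \leq \biggl(\sum_{k} \abs{v_k}\,(1+\abs{b_k}^{d+1})^{-1}\biggr)\sup_{(\Sphvar, t) \in \cyl}(1 + \abs{t}^{d+1})\,\abs{\psi(\Sphvar, t)},
\]
in which the prefactor is a.s.\ finite and the supremum is a continuous seminorm on $\Sch_\RadonOp$. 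Since $\Sch_\RadonOp$ is barrelled, $w_\Poi$ is for a.e.\ $\omega$ a continuous linear functional on $\Sch_\RadonOp$ (set $w_\Poi \coloneqq 0$ on the exceptional null set), and measurability of $\omega \mapsto \ang{w_\Poi, \psi}$ follows by writing it as the a.s.\ limit of the finite truncated sums over Poisson points in $\Sph^{d-1} \times [-n, n]$, each of which is a measurable functional of the point configuration.

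\emph{Characteristic functional.} Since $v\,\psi \in L^1(\nu)$, the exponential formula for Poisson random measures --- obtained by conditioning the restriction of $\mathcal N$ to $\Sph^{d-1} \times [-n, n] \times \R$ on its Poisson number of points, where it is an honest compound-Poisson sum, and passing to the limit $n \to \infty$ by dominated convergence with dominant $\abs{v\,\psi} \in L^1(\nu)$ --- yields
\[
    \hat\Pr_{w_\Poi}(\psi) = \E\sq*{\e^{\imag\ang{w_\Poi, \psi}}} = \exp\paren*{\lambda\int_\R \int_\R \int_{\Sph^{d-1}} \paren*{\e^{\imag v\,\psi(\Sphvar, t)} - 1}\,\dd\Sphvar\,\dd t\,\dd\Pr_V(v)},
\]
which is the claimed formula; the integral in the exponent converges absolutely because $\abs{\e^{\imag v\psi} - 1} \leq \abs{v\psi} \in L^1(\nu)$. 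As a consistency check, the functional on the right is continuous, positive definite, and equal to $1$ at $\psi = 0$, so Bochner--Minlos independently confirms it is the characteristic functional of a generalized process on $\Sch_\RadonOp'$.

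\emph{Main obstacle.} The one genuinely delicate point is the a.s.\ continuity of $w_\Poi$ on $\Sch_\RadonOp$ --- exhibiting a single continuous seminorm that dominates $\psi \mapsto \ang{w_\Poi, \psi}$ for almost every realization. The difficulty is structural rather than computational: $\cyl$ is non-compact in the $t$-direction, so $\nu$ has infinite total mass and the sum defining $w_\Poi$ runs over infinitely many jumps; it is the rapid decay of Schwartz functions on $\cyl$ (which makes $\psi$ and its polynomial-weighted versions integrable against the $t$-marginal of $\nu$), together with the finite first moment of $\Pr_V$, that tames the tail and delivers absolute convergence. The remainder is the routine Poisson-process bookkeeping recorded above, and this lemma then feeds into the proof of \cref{thm:well-defined} through the continuous map $\TOp_{\ReLU}^{\dagger\varepsilon}$ of \cref{prop:inverse}.
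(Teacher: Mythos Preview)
Your proof is correct and proceeds by a genuinely different route from the paper. The paper first restricts to compactly supported test functions $\psi \in \cD_{\RadonOp} \coloneqq \RadonOp(\cD(\R^d))$, where only finitely many Poisson points fall in $\spt\psi$; the characteristic functional is then computed explicitly by conditioning on that Poisson count $N_\psi$ and using the uniform distribution of the points on $\spt\psi$, summing the resulting Poisson-weighted series into the exponential form. This yields a process on $\cD_{\RadonOp}'$, and the extension to $\Sch_{\RadonOp}'$ is then outsourced to an adaptation of \citet[Theorem~3]{fageot2014continuity}, which supplies the compatibility of the two laws under the admissibility hypotheses. By contrast, you work directly on $\Sch_{\RadonOp}$ from the outset: Campbell's formula together with $\E[\abs{V}] < \infty$ and the integrability of Schwartz functions controls the infinite sum and produces the a.s.\ seminorm bound, and the exponential formula for Poisson random measures identifies the characteristic functional in one stroke. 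Your argument is more self-contained---no external extension theorem is invoked---and makes transparent that only the first-moment part of admissibility is actually used at this stage; the paper's approach keeps the compound-Poisson computation entirely explicit but pays for that by deferring the passage from compactly supported to Schwartz test functions to a cited result.
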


\begin{proof}
    Let $\cD(\R^d) \subset \Sch(\R^d)$ denote the space of infinitely differentiable and compactly supported functions on $\R^d$. Let $\cD_{\RadonOp} \coloneqq \RadonOp\paren*{\cD(\R^d)}$ denote the range of the Radon transform on $\cD(\R^d)$. We now summarize the properties of $\cD_{\RadonOp}$ that are relevant for our problem~\citep[cf.,][]{LudwigRadon}. First, $\cD_{\RadonOp}$ is a closed subspace of $\cD(\cyl)$, the nuclear space of infinitely differentiable and compactly supported functions on $\cyl$ and is therefore nuclear. Furthermore, $\cD_{\RadonOp}$ is dense in $\Sch_{\RadonOp}$, which implies that $\Sch_{\RadonOp}'$ is continuously embedded in $\cD_{\RadonOp}'$. In particular, $\cD_{\RadonOp}$ is the subspace of compactly supported functions in $\Sch_{\RadonOp}$.

    Next, we shall prove that $w_{\Poi}$ can be viewed as a measurable mapping
    \begin{equation}
        w_\Poi: (\Omega, \F, \Pr) \to (\cD_{\RadonOp}', \cB_c(\cD_{\RadonOp}'))
    \end{equation}
    by computing its characteristic functional $\hat{\PrQ}_{w_\Poi}$ on $\cD_{\RadonOp}$.\footnote{Note that $\cD_{\RadonOp}$ is not Fréchet so we use the cylindrical $\sigma$-algebra as opposed to the Borel $\sigma$-algebra.} Let $\psi \in \cD_{\RadonOp}$ and let
    \begin{equation}
        N_\psi = \abs{\curly{(\vec{w}_k, b_k) \st (\vec{w}_k, b_k) \in \spt \psi}}.
    \end{equation}
    We have, by definition, that
    \begin{equation}
        \ang{w_{\Poi}, \psi}_{\cD_{\RadonOp}'\times \cD_{\RadonOp}} = \sum_{k = 1}^{N_\psi} v_k' \psi(\vec{w}_k', b_k'),
    \end{equation}
    where we use an appropriate relabeling of $\curly{v_k, \vec{w}_k, b_k \st (\vec{w}_k, b_k) \in \spt \psi}$. Therefore,
    \begin{align*}
        \hat{\PrQ}_{w_\Poi}(\psi)
        &= \E\sq{\e^{\imag\ang{w_\Poi, \psi}_{\cD_{\RadonOp}'\times \cD_{\RadonOp}}}} \\
        &= \E\sq{\e^{\imag\sum_{k = 1}^{N_\psi} v_k' \psi(\vec{w}_k', b_k')}} \\
        &= \E\sq*{\E\sq*{\prod_{k = 1}^{N_\psi}\e^{\imag v_k' \psi(\vec{w}_k', b_k')} \given N_\psi}} \\
        &= \E\sq*{\prod_{k = 1}^{N_\psi}\E\sq*{\e^{\imag v_k' \psi(\vec{w}_k', b_k')} \given N_\psi}} \numberthis \label{eq:indep} \\
        &= \E\sq*{\prod_{k = 1}^{N_\psi}\E\sq*{\e^{\imag v_k' \psi(\vec{w}_k', b_k')}}} \\
        &= \E\sq*{\prod_{k = 1}^{N_\psi}\E\sq*{\E\sq*{\e^{\imag v_k' \psi(\vec{w}_k', b_k')} \given v_k}}} \\
        &= \E\sq*{\prod_{k = 1}^{N_\psi}\E\sq*{\frac{1}{\abs{\spt \psi}} \int_{\spt \psi} \e^{\imag v_k' \psi(\vec{w}, b)} \dd(\vec{w}, b)}} \numberthis \label{eq:uniform} \\
        &= \E\sq*{\prod_{k = 1}^{N_\psi}\frac{1}{\abs{\spt \psi}} \int_\R\int_{\spt \psi} \e^{\imag v \psi(\vec{w}, b)} \dd(\vec{w}, b) \dd\Pr_V(v)}, \numberthis
    \end{align*}
    where \cref{eq:indep} holds by the mutual independence of the $(\vec{w}_k, b_k)$ and \cref{eq:uniform} holds since the random variables 
    \begin{equation}
        (\vec{w}_k', b_k') \given (\vec{w}_k', b_k') \in \spt \psi
    \end{equation}
    are uniformly distributed on $\spt \psi$. Next, define the auxiliary functional
    \begin{equation}
        M(\psi) = \int_\R \int_{\spt \psi} \e^{\imag v \psi(\vec{w}, b)} \dd(\vec{w}, b) \dd\Pr_V(v).
    \end{equation}
    We have that
    \begin{align*}
        \hat{\PrQ}_{w_\Poi}(\psi)
        &= \E\sq*{\prod_{k = 1}^{N_\psi}\frac{M(\psi)}{\abs{\spt \psi}}} \\
        &= \E\sq*{\paren*{\frac{M(\psi)}{\abs{\spt \psi}}}^{N_\psi}} \\
        &= \sum_{n=0}^\infty \paren*{\frac{M(\psi)}{\abs{\spt \psi}}}^n  \frac{(\lambda \abs{\spt \psi})^n}{n!} \e^{-\lambda \abs{\spt \psi}} \numberthis \label{eq:Poisson-rv} \\
        &= \e^{-\lambda \abs{\spt \psi}} \sum_{n=0}^\infty \frac{\paren*{\lambda M(\psi)}^n}{n!}  \\
        &= \e^{-\lambda \abs{\spt \psi}} \e^{\lambda M(\psi)} \numberthis \label{eq:taylor} \\
        &= \exp\paren*{\lambda (M(\psi) - \abs{\spt \psi}} \numberthis \label{eq:int-1-spt} \\
        &= \exp\paren*{\lambda \int_{\R} \int_{\cyl} \paren*{\e^{\imag v \psi(\vec{z})} - 1} \dd\vec{z} \dd\Pr_V(v)}, \numberthis \label{eq:vanish} 
    \end{align*}
    where \cref{eq:Poisson-rv} holds since $N_\psi$ is a Poisson random variable with mean $\lambda \abs{\spt \psi}$, \cref{eq:taylor} holds by the Taylor series expansion of $t \mapsto \e^t$, \cref{eq:int-1-spt} holds since $\abs{\spt \psi} = \int_{\spt \psi} 1 \dd\vec{z}$, and \cref{eq:vanish} holds since $\vec{z} \mapsto \e^{\imag v \psi(\vec{z})} - 1$ vanishes outside $\spt \psi$. 
    At this point, we remark that, since $\Pr_V$ is a L\'evy measure (\cref{defn:admissible}), it is well-known that the form of \cref{eq:vanish} is continuous, positive definite, and satisfies $\hat{\PrQ}_{w_\Poi}(0) = 1$ \citep[see, e.g.,][Theorem~2, p.~275]{gelfand1964generalized}. This implies that $w_\Poi$ is indeed a generalized stochastic process that takes values in $\cD_{\RadonOp}'$.

    To prove the lemma, it remains to extend the domain of $\hat{\PrQ}_{w_\Poi}$ to $\Sch_{\RadonOp}$. To that end, let
    \begin{equation}
        \hat{\Pr}_{w_\Poi}(\psi) = \exp\paren*{\lambda \int_{\R} \int_{\cyl} \paren*{\e^{\imag v \psi(\vec{z})} - 1} \dd\vec{z} \dd\Pr_V(v)}, \quad \psi \in \Sch_{\RadonOp}.
    \end{equation}
    We now invoke an adaption of \citet[Theorem~3]{fageot2014continuity} which investigates impulsive white noise defined on $\R^d$ as a special case. Their theorem implies that, thanks to the admissibility conditions on $\Pr_V$ (\cref{defn:admissible}), the probability measures $\PrQ_{w_\Poi}$ and $\Pr_{w_\Poi}$ are compatible on $\cB(\Sch_{\RadonOp}') = \cB_c(\Sch_{\RadonOp}') \subset \cB_c(\cD_{\RadonOp}')$ in the sense that
    \begin{equation}
        \PrQ_{w_{\Poi}}(B) = \Pr_{w_{\Poi}}(B), \quad \text{for all } B \in \cB(\Sch_{\RadonOp}')
    \end{equation}
    and $\PrQ_{w_\Poi}(\cD_{\RadonOp}' \setminus \Sch_{\RadonOp}') = 0$, which proves the lemma.
\end{proof}

Let $\Sch_\Delta(\R^d) \coloneqq \Delta\paren*{\Sch(\R^d)}$ denote the range of the Laplacian operator on $\Sch(\R^d)$. This is a closed subspace of $\Sch(\R^d)$. Observe that its dual $\Sch_\Delta'(\R^d)$ can be identified with the quotient space $\Sch'(\R^d) / \Null_\Delta$, where
\begin{equation}
    \Null_\Delta = \curly{f \in \Sch'(\R^d) \st \Delta f = 0 \Leftrightarrow \ang{f, \phi}_{\Sch'(\R^d) \times \Sch(\R^d)} = 0 \text{ for all } \phi \in \Sch_\Delta(\R^d)}.
\end{equation}
is the null space of the Laplacian operator. It is well-known that $\Null_\Delta$ is infinite-dimensional and that its members are necessarily polynomials, the so-called \emph{harmonic polynomials}. Therefore, the members of $\Sch_\Delta'(\R^d)$ are actually equivalence classes of the form
\begin{equation}
    [f] = \curly{f + h \st h \in \Null_\Delta} \in \Sch_\Delta'(\R^d),
\end{equation}
where $f \in \Sch'(\R^d)$. With this notation, we now prove \cref{thm:well-defined}.

\begin{proof}[Proof of \Cref{thm:well-defined}]
   Recall that $\TOp_{\ReLU} = \KOp \RadonOp \Delta$ and so $\TOp_{\ReLU}^* = \Delta \RadonOp^* \KOp$. Observe that, by \cref{prop:Radon},
    \begin{equation}
        \TOp_{\ReLU}^*: \Sch_{\RadonOp} \to \Sch_\Delta(\R^d)
    \end{equation}
    is a continuous bijection, where we equip the closed subspaces $\Sch_{\RadonOp} \subset \Sch(\cyl)$ and $\Sch_\Delta(\R^d) \subset \Sch(\R^d)$ with the subspace topology from their respective parent Fréchet spaces. By the open mapping theorem for Fr\'echet spaces~\cite[see, e.g.,][Theorem~2.11]{RudinFA}, there exists a continuous inverse operator
    \begin{equation}
        \TOp_{\ReLU}^{*-}: \Sch_\Delta(\R^d) \to \Sch_{\RadonOp}
    \end{equation}
    with the properties that that $\TOp_{\ReLU}^* \TOp_{\ReLU}^{*-} = \Id$ on $\Sch_\Delta(\R^d)$ and $\TOp_{\ReLU}^{*-}\TOp_{\ReLU}^* = \Id$ on $\Sch_{\RadonOp}$. Therefore, by duality, we have the continuous bijections
    \begin{align*}
        &\TOp_{\ReLU}: \Sch_\Delta'(\R^d) \to \Sch_{\RadonOp} \\
        &\TOp_{\ReLU}^{-}: \Sch_{\RadonOp}' \to \Sch_\Delta'(\R^d), \numberthis
    \end{align*}
    where we recall that $\Sch_\Delta'(\R^d) \cong \Sch'(\R^d) / \Null_\Delta$.
    
    Next, we note that the operator 
    \begin{equation}
        \TOp_{\ReLU}^{\dagger\varepsilon*}: \varphi \mapsto \int_{\R^d} k^\varepsilon_\vec{x}(\dummy) \varphi(\vec{x}) \dd\vec{x} 
    \end{equation}
    specified in \cref{eq:inverse-op-adjoint} continuously maps $\Sch_\Delta(\R^d) \to \Sch_{\RadonOp}$ \citep[cf.,][Equation~(A.3)]{parhi2025function}. Observe that, by \cref{prop:inverse}, its extension by duality $\TOp_{\ReLU}^{\dagger\varepsilon}: \Sch_{\RadonOp}' \to \Sch_\Delta'(\R^d)$ coincides with $\TOp_{\ReLU}^{-}$. In particular, $\TOp_{\ReLU}^{\dagger\varepsilon}$ imposes the boundary conditions from \cref{eq:boundary} on the affine component of the harmonic polynomials in the equivalence classes in $\Sch_\Delta'(\R^d)$. Said differently, the range space $\TOp_{\ReLU}^{\dagger\varepsilon}\paren*{\Sch_{\RadonOp}'}$ is the closed subspace of $\Sch_\Delta'(\R^d)$ whose equivalence class members $[s] \in \Sch_\Delta'(\R^d)$ additionally satisfy 
    \begin{equation}
        \sq*{(\partial^\vec{m} g_d^\varepsilon) * s_0}(\vec{0}) = 0, \abs{\vec{m}} \leq 1 \label{eq:boundary-proof}
    \end{equation}
    for all $s_0 \in [s]$. Therefore, we can rewrite the SDE \cref{eq:SDE-thm-mol} as
    \begin{equation}
        s \overset{\Law}{=} \TOp_{\ReLU}^{\dagger\varepsilon} w_\Poi, \label{eq:SDE-rewrite}
    \end{equation}
    where the equality is understood in $\Sch_\Delta'(\R^d)$, i.e.,
    \begin{equation}
        \ang{s, \phi}_{\Sch_\Delta'(\R^d) \times \Sch_\Delta(\R^d)} \overset{\Law}{=} \ang{\TOp_{\ReLU}^{\dagger\varepsilon} w_\Poi, \phi}_{\Sch_\Delta'(\R^d) \times \Sch_\Delta(\R^d)} = \ang{w_\Poi, \TOp_{\ReLU}^{\dagger\varepsilon*} \phi}_{\Sch_{\RadonOp}' \times \Sch_{\RadonOp}},
    \end{equation}
    for all $\phi \in \Sch_\Delta(\R^d)$. The above equality implies that the characteristic functional of \emph{any solution} $s$ to \cref{eq:SDE-rewrite} (and, subsequently, the original SDE \cref{eq:SDE-thm-mol}) takes the form
    \begin{equation}
        \hat{\Pr}_s(\phi) = \hat{\Pr}_{\TOp_{\ReLU}^{\dagger\varepsilon} w_\Poi}(\phi) = \hat{\Pr}_{w_\Poi}(\TOp_{\ReLU}^{\dagger\varepsilon*} \phi). \label{eq:cf-quotient}
    \end{equation}
    This characteristic functional is well-defined for any $\phi \in \Sch_\Delta(\R^d)$ since $\TOp_{\ReLU}^{\dagger\varepsilon*} \phi \in \Sch_{\RadonOp}$, which ensures that the right-hand side is well-defined by \cref{lemma:Poi-cf}.
    
    Since $s_{\ReLU}^\varepsilon \coloneqq \TOp_{\ReLU}^{\dagger\varepsilon} w_\Poi$ via the computation in \cref{eq:Poi-to-ReLU}, we see that $s_{\ReLU}^\varepsilon$ is one member in an equivalence class in $\Sch'(\R^d) / \Null_\Delta$. In particular, this implies that $s_{\ReLU} \in \Sch'(\R^d)$ and that the equivalence class $[s_{\ReLU}^\varepsilon] = \curly{s_{\ReLU}^\varepsilon + h \st h \in \Null_\Delta}$ is a well-defined stochastic process that takes values in $\Sch_{\Delta}'(\R^d) \cong \Sch'(\R^d) / \Null_\Delta$ whose characteristic functional on $\Sch_\Delta$ is given by \cref{eq:cf-quotient}. Equivalently stated, the full set of tempered weak solutions of the SDE \cref{eq:SDE-thm-mol} has members that necessarily take the form $s_{\ReLU}^\varepsilon + h$, where $h \in \Null_\Delta$ is a harmonic polynomial of degree $\geq 2$ (since boundary conditions of the SDE, imposed by $\TOp_{\ReLU}^{\dagger\varepsilon}$, force the affine component of all solutions to be the same). Consequently, from these boundary conditions, we readily see that the only CPwL solution to the SDE is $s_{\ReLU}^\varepsilon$.

    To complete the proof we need to derive the form of the characteristic functional of $s_{\ReLU}^\varepsilon$ on the larger space $\Sch(\R^d) \supset \Sch_\Delta(\R^d)$. For any $\varphi \in \Sch(\R^d)$, we have that
    \begin{equation}
        \ang{s_{\ReLU}^\varepsilon, \varphi}_{\Sch'(\R^d) \times \Sch(\R^d)} \overset{\Law}{=} \ang{\TOp_{\ReLU}^{\dagger\varepsilon} w_\Poi, \varphi}_{\Sch'(\R^d) \times \Sch(\R^d)} \label{eq:asdf}
    \end{equation}
    From the expression of the kernel $(\Sphvar, t) \mapsto k^\varepsilon_\vec{x}(\Sphvar, t)$ in \cref{eq:kernel-op} we see that (i) it is continuous in the variables $(\Sphvar, t) \in \cyl$ and (ii) it decays faster than any polynomial in the $t$-variable. Therefore, for every $\varphi \in \Sch(\R^d)$, the function $\TOp_{\ReLU}^{\dagger\varepsilon*}\curly{\varphi}$ is a continuous function in $(\Sphvar, t) \in \cyl$ that decays faster than any polynomial in the $t$-variable. In particular, this ensures that, for any $1 \leq p \leq \infty$, the map
    \begin{equation}
        \TOp_{\ReLU}^{\dagger\varepsilon*}: \Sch(\R^d) \to L^p(\cyl)
        \label{eq:Lp-eps}
    \end{equation}
    is continuous.
    
    The right-hand side of \cref{eq:asdf} is, by definition, the integration of $\TOp_{\ReLU}^{\dagger\varepsilon*}\curly{\varphi}$ against the locally finite Radon measure $w_\Poi$, i.e., for any $\varphi \in \Sch(\R^d)$ we have that
    \begin{align*}
        \ang{s_{\ReLU}^\varepsilon, \varphi}_{\Sch'(\R^d) \times \Sch(\R^d)}
        &\overset{\mathclap{\Law}}{=} \int_{\cyl} \TOp_{\ReLU}^{\dagger\varepsilon*}\curly{\varphi} \dd\paren*{\sum_{k \in \Z} v_k \, \deltae_{(\vec{w}_k, b_k)}} \\
        &= \sum_{k \in \Z} v_k \int_{\cyl} \TOp_{\ReLU}^{\dagger\varepsilon*}\curly{\varphi} \dd\deltae_{(\vec{w}_k, b_k)} \\
        &= \sum_{k \in \Z} v_k \TOp_{\ReLU}^{\dagger\varepsilon*}\curly{\varphi}(\vec{w}_k, b_k), \numberthis \label{eq:sum-L1}
    \end{align*}
    where interchanging of the integral and sum in the second line is well-defined due to the regularity of $\TOp_{\ReLU}^{\dagger\varepsilon*}\curly{\varphi}$ and the third line uses the fact that the range of $\TOp_{\ReLU}^{\dagger\varepsilon*}$ on $\Sch(\R^d)$ is a space of even functions. This proves that
    \begin{align*}
        &\phantom{{}={}}\hat{\Pr}_{s_{\ReLU}^\varepsilon}(\varphi) \\
        &= \hat{\Pr}_{w_\Poi}(\TOp_{\ReLU}^{\dagger\varepsilon*} \varphi) = \exp\paren*{\lambda \int_{\R} \int_\R \int_{\Sph^{d-1}} \paren*{\e^{\imag v \TOp_{\ReLU}^{\dagger\varepsilon*}\curly{\varphi}(\Sphvar, t)} - 1} \dd\Sphvar \dd t \dd\Pr_V(v)}, \numberthis \label{eq:cf-form-proof}
    \end{align*}
    for all $\varphi \in \Sch(\R^d)$, where the last equality comes from \cref{lemma:Poi-cf}. We shall now verify that $\hat{\Pr}_{s_{\ReLU}^\varepsilon}$ is a valid characteristic functional on $\Sch(\R^d)$. This then implies that $s_{\ReLU}^\varepsilon: (\Omega, \F, \Pr) \to (\Sch'(\R^d), \cB(\Sch'(\R^d))$ is a measurable mapping and therefore a well-defined stochastic process. 
    
    Observe that the second admissibility condition on $\Pr_V$ (\cref{item:FAM} in \cref{defn:admissible}) states that $\Pr_V$ has a finite absolute moment. This is a sufficient condition to ensure that this characteristic functional \cref{eq:cf-form-proof} is well-defined for every $\varphi \in \Sch(\R^d)$. Indeed, we have that
    \begin{equation}
        \Psi(\xi) \coloneqq \lambda \int_{\R} \paren*{\e^{\imag v\xi} - 1} \dd\Pr_V(v) \leq \lambda \, \abs{\xi} \, \E\sq{\abs{V}}, \label{eq:Psi}
    \end{equation}
    where $V \sim \Pr_V$~\citep[cf.,][p.~1952]{unser2014unified}. Therefore,
    \begin{align*}
        \hat{\Pr}_{s_{\ReLU}^\varepsilon}(\varphi)
        &= \exp\paren*{\lambda \int_{\R} \int_\R \int_{\Sph^{d-1}} \paren*{\e^{\imag v \TOp_{\ReLU}^{\dagger\varepsilon*}\curly{\varphi}(\Sphvar, t)} - 1} \dd\Sphvar \dd t \dd\Pr_V(v)} \\
        &= \exp\paren*{\int_{\cyl} \Psi\paren*{\TOp_{\ReLU}^{\dagger\varepsilon*}\curly{\varphi}} \dd\vec{z}} \\
        &\leq \exp\paren*{C\, \norm{\TOp_{\ReLU}^{\dagger\varepsilon*}\curly{\varphi}}_{L^1}} \\
        &< \infty, \numberthis
    \end{align*}
    for any $\varphi \in \Sch(\R^d)$, where $C = \lambda \, \E\sq{\abs{V}} < \infty$, where in the last line we used \cref{eq:Lp-eps} with $p=1$. Since $\TOp_{\ReLU}^{\dagger\varepsilon*}: \Sch(\R^d) \to L^1(\cyl)$ linearly and continuously, Proposition~3.1 of \citet{fageot2019scaling} then guarantees that $\hat{\Pr}_{s_{\ReLU}^\varepsilon}$ is continuous, positive definite, and satisfies $\hat{\Pr}_{s_{\ReLU}^\varepsilon}(0) = 1$. Therefore, the Bochner--Minlos theorem ensures that $\hat{\Pr}_{s_{\ReLU}^\varepsilon}$ is the characteristic functional of the well-defined stochastic process $s_{\ReLU}^\varepsilon$.

    In the limiting scenario of $\varepsilon \to 0$, we see that the random neural network $s_{\ReLU} \sim \ReLUP(\lambda; \Pr_V)$ is a measurable mapping $(\Omega, \F, \Pr) \to (\Sch'(\R^d), \cB(\Sch'(\R^d))$
    whose characteristic functional is
    \begin{equation}
        \hat{\Pr}_{s_{\ReLU}}(\varphi) = \exp\paren*{\lambda \int_{\R} \int_\R \int_{\Sph^{d-1}} \paren*{\e^{\imag v \TOp_{\ReLU}^{\dagger*}\curly{\varphi}(\Sphvar, t)} - 1} \dd\Sphvar \dd t \dd\Pr_V(v)}, \quad \varphi \in \Sch(\R^d),
    \end{equation}
    where we observe that this limiting characteristic functional remains to be valid in the sense of the Bochner--Minlos theorem since the property that, for any $1 \leq p \leq \infty$, the map
    \begin{equation}
        \TOp_{\ReLU}^{\dagger*}: \Sch(\R^d) \to L^p(\cyl)
        \label{eq:Lp}
    \end{equation}
    is continuous, remains to be true since $k_\vec{x} = \lim_{\varepsilon \to 0} k^\varepsilon_\vec{x}$ (pointwise limt) is compactly supported. Consequently, $s_{\ReLU}$ is the unique CPwL solution to SDE \cref{eq:SDE-thm-new}.\footnote{The mollifier argument is necessary in order to make sense of the boundary conditions \cref{eq:SDE-thm-new} for elements of $\Sch'(\R^d)$ that are not regular enough for the derivatives to exist.}
\end{proof}

\section{Proof of \cref{th:properties}}
\label{app:properties}

\begin{proof}
    \begin{enumerate}
        \item Thanks to the moment generating properties of the characteristic functional \citep{gelfand1964generalized}, the mean functional of $s_{\ReLU}$ can be obtained as
\begin{equation}
    \mu_{s_{\ReLU}}(\varphi) = \E\sq{\ang{s_{\ReLU}, \varphi}_{\Sch'(\R^d) \times \Sch(\R^d)}} = (-\imag) \frac{\dd}{\dd \xi} \hat{\Pr}_{s_{\ReLU}}(\xi \varphi) \Big\rvert_{\xi=0},
\end{equation}
where $\varphi \in \Sch(\R^d)$. First, observe that the characteristic functional of $s_{\ReLU}$ can be written as
\begin{equation}
    \hat{\Pr}_{s_{\ReLU}}(\varphi) = \exp\left( \int_{\cyl} \Psi\left(\TOp_{\ReLU}^{\dagger*}\curly{\varphi}(\vec{z})\right) \dd\vec{z} \right)
\end{equation}
with $\Psi$ defined as in \cref{eq:Psi}. Here, note that we have
\begin{equation}
    \Psi'(x) = \imag \lambda \int_{\R} v \e^{\imag vx} \dd\Pr_V(v).
\end{equation}
Let us denote $h(\vec{z}) = \TOp_{\ReLU}^{\dagger*}\curly{\varphi}(\vec{z})$. By applying the chain rule, we can write
\begin{equation}
    \de{\xi} \hat{\Pr}_{s_{\ReLU}}(\xi \varphi) = \exp\left( \int_{\R \times \Sph^{d-1}} \Psi\left(\xi h(\vec{z})\right) \dd\vec{z} \right) \cdot \int_{\R \times \Sph^{d-1}} \Psi'\left(\xi h(\vec{z})\right) h(\vec{z}) \dd\vec{z}.
\end{equation}
On setting $\xi = 0$, we get
\begin{equation}
    \de{\xi} \hat{\Pr}_{s_{\ReLU}}(\xi \varphi) \Big\rvert_{\xi=0} = \Psi'(0) \int_{\R \times \Sph^{d-1}} h(\vec{z}) \dd\vec{z}
\end{equation}
as $\Psi(0) = 0$. Therefore, the mean functional is
\begin{align*}
    \mu_{s_{\ReLU}}(\varphi) &= (-\imag) \frac{\dd}{\dd \xi} \hat{\Pr}_{s_{\ReLU}}(\xi \varphi) \Big\rvert_{\xi=0} \\
    &= \lambda \E\sq{V} \int_{\cyl} \TOp_{\ReLU}^{\dagger*}\curly{\varphi}(\vec{z}) \dd\vec{z} \\
    &= \lambda \E\sq{V} \int_{\R^d} \int_{\R} \int_{\Sph^{d-1}} k_{\vec{x}}(\vec{u}, t) \varphi(\vec{x}) \dd \vec{u} \dd t \dd \vec{x}. \numberthis \label{eq:dervn_mean_functional}
\end{align*}
Next, we establish a link between the mean functional of $s_{\ReLU}$ and the quantity $\E\sq{s_{\ReLU}(\vec{x})}$. Since $s_{\ReLU}$ has a pointwise interpretation, we have
\begin{equation}
    \ang{s_{\ReLU}, \varphi}_{\Sch'(\R^d) \times \Sch(\R^d)} = \int_{\R^d} s_{\ReLU}(\vec{x}) \varphi(\vec{x}) \dd \vec{x}.
\end{equation}
Consequently, the mean functional can also be computed as
\begin{align*}
    \mu_{s_{\ReLU}}(\varphi) &= \E\sq{\ang{s_{\ReLU}, \varphi}_{\Sch'(\R^d) \times \Sch(\R^d)}} = \E\sq*{\int_{\R^d} s_{\ReLU}(\vec{x}) \varphi(\vec{x}) \dd \vec{x}} \\
    &= \int_{\R^d} \E\sq{s_{\ReLU}(\vec{x})} \varphi(\vec{x}) \dd \vec{x}, \numberthis \label{eq:dervn_mean_functional_pw}
\end{align*}
where exchanging the expectation and the integral is justified by the Fubini--Tonelli theorem since the integrand in \cref{eq:dervn_mean_functional} is absolutely integrable by \cref{eq:Lp} with $p = 1$. On comparing \cref{eq:dervn_mean_functional_pw} with \cref{eq:dervn_mean_functional}, we see that 
\begin{equation}
    \E\sq{s_{\ReLU}(\vec{x})} = \lambda \E\sq{V} \int_{\R} \int_{\Sph^{d-1}} k_{\vec{x}}(\vec{u}, t) \dd\vec{u} \dd t.
\end{equation}

\item The covariance functional of $s_{\ReLU}$ is given by
\begin{align*}
    &\phantom{{}={}}\Sigma_{s_{\ReLU}}(\varphi_1, \varphi_2) \\
    &= \E\sq[\Big]{\left(\ang{s_{\ReLU}, \varphi_1}_{\Sch'(\R^d) \times \Sch(\R^d)} - \mu_{s_{\ReLU}}(\varphi_1)\right) \left(\ang{s_{\ReLU}, \varphi_2}_{\Sch'(\R^d) \times \Sch(\R^d)} - \mu_{s_{\ReLU}}(\varphi_2)\right)} \\
    &= \mathcal{R}_{s_{\ReLU}}(\varphi_1, \varphi_2) - \mu_{s_{\ReLU}}(\varphi_1)\mu_{s_{\ReLU}}(\varphi_2), \numberthis
\end{align*}
where $\varphi_1, \varphi_2 \in \Sch(\R^d)$ and 
\begin{equation}
    \mathcal{R}_{s_{\ReLU}}(\varphi_1, \varphi_2) = \E\sq*{\ang{s_{\ReLU}, \varphi_1}_{\Sch'(\R^d) \times \Sch(\R^d)} \ \ang{s_{\ReLU}, \varphi_2}_{\Sch'(\R^d) \times \Sch(\R^d)}}
\end{equation}
is the correlation functional of $s_{\ReLU}$. This quantity can be computed from its characteristic functional \citep[cf.,][]{gelfand1964generalized} as
\begin{equation}
    \mathcal{R}_{s_{\ReLU}}(\varphi_1, \varphi_2) = -\frac{\ddd^{2}}{\ddd \xi_1 \ddd \xi_2} \hat{\Pr}_{s_{\ReLU}}(\xi_1 \varphi_1 + \xi_2 \varphi_2) \Big\rvert_{\xi_1=0, \xi_2=0}.
\end{equation}
Let us first define the quantity $f(\xi_1, \xi_2)$ as 
\begin{align*}
    f(\xi_1, \xi_2) &= \int_{\cyl} \Psi\left(\TOp_{\ReLU}^{\dagger*}\curly{\xi_1\varphi_1 + \xi_2 \varphi_2}(\vec{z})\right) \dd\vec{z} \\
    &= \int_{\cyl} \Psi\left(\xi_1 \TOp_{\ReLU}^{\dagger*}\curly{\varphi_1}(\vec{z}) + \xi_2 \TOp_{\ReLU}^{\dagger*}\curly{\varphi_2}(\vec{z})\right) \dd\vec{z}. \numberthis
\end{align*}
Further, let us denote $h_1(\vec{z}) = \TOp_{\ReLU}^{\dagger*}\curly{\varphi_1}(\vec{z})$ and $h_2(\vec{z}) = \TOp_{\ReLU}^{\dagger*}\curly{\varphi_2}(\vec{z})$. By applying the chain rule twice, we write
\begin{align*}
    \frac{\ddd^{2}}{\ddd \xi_1 \ddd \xi_2} \hat{\Pr}_{s_{\ReLU}}(\xi_1 \varphi_1 &+ \xi_2 \varphi_2) \\
    &= \exp(f(\xi_1, \xi_2)) \left( \frac{\ddd^2}{\ddd \xi_1 \ddd \xi_2} f(\xi_1, \xi_2) + \frac{\ddd}{\ddd \xi_1} f(\xi_1, \xi_2) \frac{\ddd}{\ddd \xi_2} f(\xi_1, \xi_2)\right), \numberthis
\end{align*}
where
\begin{equation}
    \frac{\ddd}{\ddd \xi_k} f(\xi_1, \xi_2) = \int_{\cyl} \Psi'\left(\xi_1 h_1(\vec{z}) + \xi_2 h_2(\vec{z})\right) h_{k}(\vec{z}) \dd \vec{z}
\end{equation}
and 
\begin{equation}
    \frac{\ddd^2}{\ddd \xi_1 \ddd \xi_2} f(\xi_1, \xi_2) = \int_{\cyl} \Psi''\left(\xi_1 h_1(\vec{z}) + \xi_2 h_2(\vec{z})\right) h_1(\vec{z}) h_2(\vec{z}) \dd \vec{z}. 
\end{equation}
On setting $\xi_1 = 0$ and $\xi_2 = 0$, we get
\begin{align*}
    \frac{\ddd^{2}}{\ddd \xi_1 \ddd \xi_2} \hat{\Pr}_{s_{\ReLU}}(\xi_1 \varphi_1 + \xi_2 \varphi_2) &\Big\rvert_{\xi_1=0, \xi_2=0} = \ \Psi''(0) \int_{\cyl} h_1(\vec{z}) h_2(\vec{z}) \dd \vec{z} \\
    &+ \left( \Psi'(0) \int_{\cyl} h_1(\vec{z}) \ddd\vec{z}\right)\left( \Psi'(0) \int_{\cyl} h_2(\vec{z}) \dd\vec{z}\right). \numberthis
\end{align*}
Note that we have
\begin{equation}
    \Psi''(x) = - \lambda \int_{\R} v^2 \e^{\imag vx} \dd\Pr_V(v).
\end{equation}
Thus, the correlation functional is of the form
\begin{align*}
    \mathcal{R}_{s_{\ReLU}}(\varphi_1, \varphi_2) &= -\frac{\ddd^{2}}{\ddd \xi_1 \ddd \xi_2} \hat{\Pr}_{s_{\ReLU}}(\xi_1 \varphi_1 + \xi_2 \varphi_2) \Big\rvert_{\xi_1=0, \xi_2=0} \\
    &= \lambda \E\sq{V^2} \left(\int_{\cyl} \TOp_{\ReLU}^{\dagger*}\curly{\varphi_1}(\vec{z}) \TOp_{\ReLU}^{\dagger*}\curly{\varphi_2}(\vec{z}) \dd\vec{z} \right) \\
    & \ \ \ \ + \mu_{s_{\ReLU}}(\varphi_1) \mu_{s_{\ReLU}}(\varphi_2) \numberthis.
\end{align*}
Consequently, the covariance functional is given by
\begin{align*}
    &\phantom{{}={}}\Sigma_{s_{\ReLU}}(\varphi_1, \varphi_2) \\
    &= \lambda \E\sq{V^2} \int_{\cyl} \TOp_{\ReLU}^{\dagger*}\curly{\varphi_1}(\vec{z}) \TOp_{\ReLU}^{\dagger*}\curly{\varphi_2}(\vec{z}) \dd\vec{z} \\
    &= \lambda \E\sq{V^2} \int_{\R^d} \int_{\R^d} \int_{\R} \int_{\Sph^{d-1}} k_{\vec x}(\vec u, t) k_{\vec y}(\vec u, t) \varphi_1(\vec x) \varphi_2(\vec y) \dd \vec{u} \dd t \dd \vec{x} \dd \vec{y}. \numberthis \label{eq:dervn_cov_functional}
\end{align*}
Next, we derive the connection between the covariance functional of $s_{\ReLU}$ and the autocovariance $\E\sq{(s_{\ReLU}(\vec{x}) - \E\sq{s_{\ReLU}(\vec{x})}) (s_{\ReLU}(\vec{y}) - \E\sq{s_{\ReLU}(\vec{y})})}$. Since $s_{\ReLU}$ has a pointwise interpretation, the covariance functional can also be computed as 
\begin{align*}
        &\phantom{{}={}}\Sigma_{s_{\ReLU}}(\varphi_1, \varphi_2) \\
        &= \E\sq[\Big]{\left(\ang{s_{\ReLU}, \varphi_1}_{\Sch'(\R^d) \times \Sch(\R^d)} - \mu_{s_{\ReLU}}(\varphi_1)\right)
        \left(\ang{s_{\ReLU}, \varphi_2}_{\Sch'(\R^d) \times \Sch(\R^d)} - \mu_{s_{\ReLU}}(\varphi_2)\right)} \\
        &= \E\left[\left(\int_{\R^d} \left(s_{\ReLU}(\vec{x}) - \E\sq{s_{\ReLU}(\vec{x})} \right) \varphi_1(\vec{x}) \dd \vec{x}\right) \left(\int_{\R^d} \left(s_{\ReLU}(\vec{y}) - \E\sq{s_{\ReLU}(\vec{y})} \right) \varphi_2(\vec{y}) \dd \vec{y}\right) \right] \\
        &= \int_{\R^d}\int_{\R^d} \E\sq{(s_{\ReLU}(\vec{x}) - \E\sq{s_{\ReLU}(\vec{x})}) (s_{\ReLU}(\vec{y}) - \E\sq{s_{\ReLU}(\vec{y})})} \varphi_1(\vec{x}) \varphi_2(\vec{y}) \dd \vec{x} \dd \vec{y}, \numberthis \label{eq:dervn_cov_functional_pw}
    \end{align*}
where exchanging the expectation and the integral is justified by the Fubini--Tonelli theorem since the integrand in \cref{eq:dervn_cov_functional} is absolutely integrable from \cref{eq:Lp} with $p = 2$. If we compare \cref{eq:dervn_cov_functional_pw} with \cref{eq:dervn_cov_functional}, we see that
\begin{align*}
    C_{s_{\ReLU}}(\vec x, \vec y) &= \E\sq{(s_{\ReLU}(\vec{x}) - \E\sq{s_{\ReLU}(\vec{x})}) (s_{\ReLU}(\vec{y}) - \E\sq{s_{\ReLU}(\vec{y})})} \\
    &= \lambda \E\sq{V^2} \int_{\R} \int_{\Sph^{d-1}} k_{\vec{x}}(\vec{u},t) k_{\vec{y}}(\vec{u},t) \dd \vec{u} \dd t. \numberthis \label{eq:dervn_cov_integral_form}
\end{align*}
To simplify the double integral in \cref{eq:dervn_cov_integral_form}, we first observe that, by definition,
\begin{equation}
    (\vec{x}, \vec{y}) \mapsto \int_{\R} \int_{\Sph^{d-1}} k_{\vec{x}}(\vec{u},t) k_{\vec{y}}(\vec{u},t) \dd \vec{u} \dd t, \quad (\vec{x}, \vec{y}) \in \R^d \times \R^d,
        \label{eq:double-int}
\end{equation}
is the (Schwartz) kernel of the operator $\TOp_{\ReLU}^{\dagger}\TOp_{\ReLU}^{\dagger*}$. Next, we note that the right-inverse operator can be equivalently specified as the composition of operators $\TOp_{\ReLU}^{\dagger} = (\Id - \P)\Delta^{-1}\RadonOp^*$ \citep[cf.][Equation~(57)]{UnserRidges}, where $\Delta^{-1}$ is the Riesz potential of order $2$, i.e., it is the Fourier multiplier
\begin{equation}
    \reallywidehat{(-\Delta)^{-\frac{\gamma}{2}} f}(\vec{\omega}) = \norm{\vec{\omega}}_2^{-\gamma} \, \hat{f}(\vec{\omega}), \quad \vec{\omega} \in \R^d,
\end{equation}
with $\gamma = 2$,
and $\P$ is the projection onto the space of affine functions adapted to the boundary conditions of the SDE \cref{eq:SDE-thm-new}. Concretely,
\begin{equation}
    \P\curly{f} = \sum_{n=0}^d \ang{\phi_n, f} p_n,
\end{equation}
where $p_0(\vec{x}) = 1$ and $p_n(\vec{x}) = x_n$, $n = 1, \ldots, d$ is a basis for the space of affine function on $\R^d$ and $\phi_0 = \delta$ (Dirac distribution) and $\phi_n = -\delta_n' \coloneqq -\partial_{x_n}\delta$, $n = 1, \ldots, d$, is the linear functional that evaluates the partial derivative in the $n$th component at $\vec{0}$, i.e., $\ang{\phi_n, f} = \partial_{x_n} f(\vec{0})$, $n = 1, \ldots, d$. Consequently, the adjoint projector is given by
\begin{equation}
    \P^*\curly{f} = \sum_{n=0}^d \ang{p_n, f} \phi_n.
\end{equation}
With this notation, we have that
\begin{align*}      \TOp_{\ReLU}^{\dagger}\TOp_{\ReLU}^{\dagger*}
    &= (\Id - \P)\Delta^{-1}\RadonOp^*\RadonOp\Delta^{-1}(\Id - \P^*) \\
    &= (\Id - \P)\Delta^{-1}(-\Delta)^{-\frac{d-1}{2}}\Delta^{-1}(\Id - \P^*) \\
    &= (\Id - \P)(-\Delta)^{-\frac{d+3}{2}}(\Id - \P^*), \numberthis
\end{align*}
where the second line follows from \cref{prop:Radon}. The (Schwartz) kernel of the operator (generalized impulse response) can be identified with $(\vec{x}, \vec{y}) \mapsto \TOp_{\ReLU}^{\dagger}\TOp_{\ReLU}^{\dagger*}\curly{\delta(\dummy - \vec{y})}(\vec{x})$. We have that
\begin{equation}
    (\Id - \P^*)\curly{\delta(\dummy - \vec{y})} = \delta(\dummy - \vec{y}) - \sum_{k=0}^d \ang{p_k, \delta(\dummy - \vec{y})} \phi_n = \delta(\dummy - \vec{y}) - \delta + \sum_{n=1}^d y_1 \delta_n',
\end{equation}
where we used the property that the shifted Dirac distribution is the sampling functional. Next,
\begin{align*}
    (-\Delta)^{-\frac{d+3}{2}} (\Id - \P^*)\curly{\delta(\dummy - \vec{y})}(\vec{x})
    &= A\paren*{\norm{\vec{x} - \vec{y}}_2^3 - \norm{\vec{x}}_2^3 + \sum_{n=1}^d y_1 (3 x_n \norm{\vec{x}}_2)} \\
    &= A\paren*{\norm{\vec{x} - \vec{y}}_2^3 - \norm{\vec{x}}_2^3 + 3 \vec{x}^\T\vec{y} \norm{\vec{x}}_2}, \numberthis
\end{align*}
where $A = \frac{\Gamma(-3/2)}{2^{d+3} \pi^{d/2} \Gamma((d+3)/2)}$ and we used the fact that $\vec{x} \mapsto A \norm{\vec{x}}_2^3$ is the radially symmetric Green's function of $(-\Delta)^{\frac{d+3}{2}}$~\citep{GelfandV1}. Finally,
\begin{align*}
    &\phantom{{}={}} (\Id - \P) (-\Delta)^{-\frac{d+3}{2}} (\Id - \P^*) \curly{\delta(\dummy - \vec{y})}(\vec{x}) \\
    &= A\paren*{\norm{\vec{x} - \vec{y}}_2^3 - \norm{\vec{x}}_2^3 + 3 \vec{x}^\T\vec{y} \norm{\vec{x}}_2} - A \paren*{\norm{\vec{y}}_2^3 - \sum_{n=1}^d 3 y_n \norm{\vec{y}}_2 x_n} \\
    &= A\paren*{\norm{\vec{x} - \vec{y}}_2^3 - \norm{\vec{x}}_2^3 - \norm{\vec{y}}_2^3 + 3 \vec{x}^\T\vec{y}\paren*{\norm{\vec{x}}_2 + \norm{\vec{y}}_2}}. \numberthis
\end{align*}
Putting everything together, we find that the autocovariance takes the form
\begin{equation}
    C_{s_{\ReLU}}(\vec{x}, \vec{y}) = \lambda A \E\sq{V^2} \paren*{\norm{\vec{x} - \vec{y}}_2^3 - \norm{\vec{x}}_2^3 - \norm{\vec{y}}_2^3 + 3 \vec{x}^\T\vec{y}\paren*{\norm{\vec{x}}_2 + \norm{\vec{y}}_2}}.
\end{equation}

\item In order to show that $s_{\ReLU}$ is isotropic, we will show that its characteristic functional satisfies
    \begin{equation}
        \hat{\Pr}_{s_{\ReLU}}(\varphi) = \hat{\Pr}_{s_{\ReLU}}\left(\varphi\left(\mat{U} \cdot \right)\right)
    \end{equation}
    for any $\varphi \in \Sch(\R^d)$ and any $(d \times d)$ rotation matrix $\mat{U}$. First, we note that the kernel of $\TOp_{\ReLU}^{\dagger}$ can be written as
\begin{align*}
    k_\vec{x}(\vec{u}, t)
        &= \ReLU(\vec{u}^\T\vec{x} - t) - \frac{(\vec{u}^\T\vec{x} - t)}{2} - \frac{\abs{t}}{2} + (\vec{u}^\T\vec{x})\frac{\sgn(t)}{2} \\
        &= \ReLU(\vec{u}^\T\vec{x} - t) + (\vec{u}^\T\vec{x}) h_1(t) + h_2(t), \numberthis
\end{align*}
where $h_1(t) = \frac{\sgn(t)-1}{2}$ and $h_2(t) = \frac{t-\abs{t}}{2}$. Let $\mat{U}$ be a $(d \times d)$ rotation matrix. Then, we have
\begin{align}
\TOp_{\ReLU}^{\dagger*}\curly{\varphi(\mat{U} \cdot)}(\vec{u}, t) &= \int_{\R^d} k_{\vec{x}}(\vec{u}, t) \varphi(\mat{U} \vec{x}) \dd\vec{x} \nonumber \\
    &= \int_{\R^d} k_{\mat{U}^\T \vec{\tilde{x}}}(\vec{u}, t) \varphi(\vec{\tilde{x}}) \dd\vec{\tilde{x}} \label{eq:t1} \\
    &= \int_{\R^d} k_{\vec{\tilde{x}}}(\mat{U} \vec{u}, t) \varphi(\vec{\tilde{x}}) \dd\vec{\tilde{x}} \label{eq:t2} \\
    &= \TOp_{\ReLU}^{\dagger*}\curly{\varphi}(\mat{U} \vec{u}, t) \label{eq:t3}.
\end{align}
The transition from \cref{eq:t1} to \cref{eq:t2} is possible because
\begin{align*}
    k_{\mat{U}^\T \vec{\tilde{x}}}(\vec{u}, t) &= \ReLU(\vec{u}^\T \mat{U}^\T \vec{\tilde{x}} - t) + (\vec{u}^\T \mat{U}^\T \vec{\tilde{x}}) h_1(t) + h_2(t) \\
    &= \ReLU((\mat{U} \vec{u})^\T \vec{\tilde{x}} - t) + ((\mat{U} \vec{u})^\T \vec{\tilde{x}}) h_1(t) + h_2(t) \\
    &= k_\vec{\tilde{x}}(\mat{U} \vec{u}, t).
\end{align*}
From \cref{thm:well-defined}, the characteristic functional of $s_{\ReLU}$ is given by
\begin{equation}
    \hat{\Pr}_{s_{\ReLU}}(\varphi) = \exp\left( \int_{\R} \int_{\Sph^{d-1}} \Psi\left(\TOp_{\ReLU}^{\dagger*}\curly{\varphi}(\vec{u}, t)\right) \dd\vec{u} \dd t \right) \label{eq:t4}
\end{equation}
with $\Psi$ defined as in \cref{eq:Psi}. Thus, based on \cref{eq:t4,eq:t3}, we can write
\begin{align}
    \hat{\Pr}_{s_{\ReLU}}(\varphi(\mat{U} \cdot)) &= \exp\left( \int_{\R} \int_{\Sph^{d-1}} \Psi\left(\TOp_{\ReLU}^{\dagger*}\curly{\varphi(\mat{U} \cdot)}(\vec{u}, t)\right) \dd\vec{u} \dd t \right) \nonumber \\
    &= \exp\left( \int_{\R} \int_{\Sph^{d-1}} \Psi\left(\TOp_{\ReLU}^{\dagger*}\curly{\varphi}(\mat{U} \vec{u}, t)\right) \dd\vec{u} \dd t \right) \nonumber \\
    &= \exp\left( \int_{\R} \int_{\Sph^{d-1}} \Psi\left(\TOp_{\ReLU}^{\dagger*}\curly{\varphi}(\vec{\tilde{u}}, t)\right) \dd\vec{\tilde{u}} \dd t \right) \nonumber \\
    &= \hat{\Pr}_{s_{\ReLU}}(\varphi).
\end{align}    

\item In order to show that $s_{\ReLU}$ (when $\Pr_V$ has zero mean and a finite second moment) is wide-sense self-similar with Hurst exponent $H=3/2$, we will show that for $a > 0$, 
\begin{equation}
    a^{2H} \E\sq{s_{\ReLU}(\vec{x}/a) s_{\ReLU}(\vec{y}/a)} = \E\sq{s_{\ReLU}(\vec{x}) s_{\ReLU}(\vec{y})}.
\end{equation}
Since $\Pr_V$ has zero mean, based on \cref{eq:mean}, we have that $\E\sq{s_{\ReLU}(\vec{x})} = 0$. Thus, using \cref{eq:autocov}, we immediately see that 
\begin{equation}
    \E\sq{s_{\ReLU}(\vec{x}/a) s_{\ReLU}(\vec{y}/a)} = a^{-3} \E\sq{s_{\ReLU}(\vec{x}) s_{\ReLU}(\vec{y})}. \label{eq:dervn_wsss_1}
\end{equation}

\item From the mean and covariance functionals in \cref{eq:dervn_mean_functional,eq:dervn_cov_functional}, respectively, and the form of the characteristic functional \cref{eq:cf_srelu}, we deduce from \cref{defn:Gauss-process} that $s_{\ReLU}$ is \emph{non-Gaussian}, even when $\Pr_V$ has a finite second moment

    \end{enumerate}
\end{proof}

\section{Asymptotic Results}
\label{app:asymptotic}

To prove \cref{th:asymptotic}, we rely on a generalized version of the L\'evy continuity theorem from \citet[Theorem 2.3]{bierme2018generalized}, which we state below.

\begin{theorem}[Generalized L\'{e}vy continuity theorem] \label{thm:levy-cont}
    Let $(s_n)_{n \in \N}$ be a sequence of generalized stochastic processes that take values in $\Sch'(\R^d)$ with characteristic functionals $\paren{\hat{\Pr}_{s_n}}_{n \in \N}$. If $\hat{\Pr}_{s_n}$ converges pointwise to a functional $\hat{\PrQ}: \Sch(\R^d) \to \mathbb{C}$ that is continuous at $\vec{0}$, then there exists a generalized stochastic process $s$ such that its characteristic functional satisfies $\hat{\Pr}_{s} = \hat{\PrQ}$ and $s_n \xrightarrow[n \to \infty]{\Law} s$.
\end{theorem}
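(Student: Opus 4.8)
The plan is to split the argument into an \emph{existence} step and a \emph{convergence} step. First I would show that the pointwise limit $\hat{\PrQ}$ is itself a bona fide characteristic functional, so that the Bochner--Minlos theorem (stated in \cref{sec:gsp}) produces a generalized stochastic process $s$ with $\hat{\Pr}_s = \hat{\PrQ}$; then I would promote the pointwise convergence $\hat{\Pr}_{s_n} \to \hat{\PrQ}$ to weak convergence of the laws $\Pr_{s_n} \to \Pr_s$ on $\Sch'(\R^d)$ via a tightness argument. Throughout, the nuclear Fréchet structure of $\Sch(\R^d)$ is exactly what makes the infinite-dimensional analogue of the classical Lévy continuity theorem go through.

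\textbf{Existence step.} I must verify the three hypotheses of Bochner--Minlos for $\hat{\PrQ}$. That $\hat{\PrQ}(\vec{0}) = 1$ is immediate since $\hat{\Pr}_{s_n}(\vec{0}) = 1$ for every $n$, and positive-definiteness passes to pointwise limits, so $\hat{\PrQ}$ inherits it from the $\hat{\Pr}_{s_n}$. The only nontrivial point is continuity, which I would obtain by upgrading the assumed continuity at $\vec{0}$ to continuity everywhere through the elementary inequality
\begin{equation}
    \abs{p(\varphi) - p(\psi)}^2 \leq 2\bigl(1 - \Re\, p(\varphi - \psi)\bigr),
\end{equation}
valid for any positive-definite functional $p$ with $p(\vec{0}) = 1$ (it follows from the positive-semidefiniteness of the Gram matrix built from $p$ at the nodes $\vec{0}, \varphi, \psi$). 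Applying this to $p = \hat{\PrQ}$ and letting $\varphi - \psi \to \vec{0}$, continuity of $\hat{\PrQ}$ at the origin forces the right-hand side to vanish, giving (uniform) continuity of $\hat{\PrQ}$ on all of $\Sch(\R^d)$. Bochner--Minlos then yields the process $s$ with $\hat{\Pr}_s = \hat{\PrQ}$.

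\textbf{Convergence step.} Here I would establish that $\{\Pr_{s_n}\}$ is uniformly tight and then identify the limit. The same inequality, now applied to each $p = \hat{\Pr}_{s_n}$, shows that the modulus of continuity of $\hat{\Pr}_{s_n}$ near the origin is controlled by $\varphi \mapsto 1 - \Re\, \hat{\Pr}_{s_n}(\varphi)$; integrating this quantity against a Gaussian-type measure concentrated near $\vec{0}$ and invoking dominated convergence (recall $\abs{\hat{\Pr}_{s_n}} \leq 1$) turns the pointwise convergence together with the continuity of $\hat{\PrQ}$ at $\vec{0}$ into an equicontinuity-at-$\vec{0}$ estimate that is \emph{uniform in $n$}. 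A Minlos-type tightness criterion for nuclear spaces (the quantitative form of Bochner--Minlos, cf.\ \citealt{minlos1959generalized}) then converts this equicontinuity into uniform tightness of $\{\Pr_{s_n}\}$, using that the relevant sublevel sets of a Hilbertian seminorm are compact in $\Sch'(\R^d)$ by nuclearity. Finally, for any fixed $\varphi_1, \ldots, \varphi_k \in \Sch(\R^d)$, the pointwise convergence of $\hat{\Pr}_{s_n}$ evaluated on linear combinations $\sum_j \xi_j \varphi_j$ is precisely pointwise convergence of the joint characteristic functions of $(\ang{s_n, \varphi_1}_{\Sch' \times \Sch}, \ldots, \ang{s_n, \varphi_k}_{\Sch' \times \Sch})$, so the classical finite-dimensional Lévy continuity theorem gives convergence of all finite-dimensional marginals to those of $s$. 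Tightness plus convergence of finite-dimensional distributions, via Prokhorov's theorem, yields $\Pr_{s_n} \to \Pr_s$ weakly, i.e.\ $s_n \xrightarrow[n \to \infty]{\Law} s$.

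\textbf{Main obstacle.} The hard part is the tightness in the infinite-dimensional dual $\Sch'(\R^d)$: unlike in finite dimensions, pointwise convergence of characteristic functionals does not by itself control the mass of $\Pr_{s_n}$ ``at infinity,'' and converting the single hypothesis (continuity of the limit at $\vec{0}$) into a bound that is uniform over the whole sequence requires the integrated form of the positive-definiteness inequality together with the nuclearity of $\Sch(\R^d)$ to guarantee that the confining sets in $\Sch'(\R^d)$ are compact. Everything else — the existence step and the finite-dimensional convergence — is a routine transcription of the classical argument.
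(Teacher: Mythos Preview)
The paper does not prove this theorem at all: it is quoted verbatim from \citet[Theorem~2.3]{bierme2018generalized} and used as a black box in the proof of \cref{th:asymptotic}. So there is no ``paper's own proof'' to compare against; your proposal is an independent sketch of a result the authors simply import.

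That said, your outline is the standard route and is essentially sound. The existence step is routine: positive-definiteness and the normalization $\hat{\PrQ}(\vec{0})=1$ pass to pointwise limits, and the inequality $\abs{p(\varphi)-p(\psi)}^2 \le 2\bigl(1-\Re\,p(\varphi-\psi)\bigr)$ correctly promotes continuity at $\vec{0}$ to global continuity, so Bochner--Minlos applies. The convergence step is where the real content lies, and you have identified the right mechanism: one needs a uniform-in-$n$ equicontinuity estimate at the origin to feed into a Minlos/Sazonov-type tightness criterion exploiting nuclearity of $\Sch(\R^d)$, after which Prokhorov plus convergence of finite-dimensional distributions closes the argument. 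The one place your sketch is thin is the passage from ``pointwise convergence of $1-\Re\,\hat{\Pr}_{s_n}(\varphi)$ plus continuity of the limit at $\vec{0}$'' to ``equicontinuity uniform in $n$'': dominated convergence of an integral against a fixed measure does not by itself give a bound that is uniform in $n$ for each individual $\varphi$ near $\vec{0}$; what is actually needed is an averaged (Sazonov-type) bound of the form $\Pr_{s_n}\bigl(\{u:\abs{\ang{u,\varphi}}>R\}\bigr)\lesssim$ an integral of $1-\Re\,\hat{\Pr}_{s_n}$ over a small neighborhood, and then the limit controls this integral uniformly. This is exactly the argument in \citet{bierme2018generalized} (building on Fernique and Meyer), so your plan is correct in spirit but would need that quantitative step spelled out to be complete.
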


\begin{proof}[Proof of \cref{th:asymptotic}]
By \cref{thm:levy-cont}, we need to show that
\begin{enumerate}
    \item for every $\varphi \in \Sch(\R^d)$, the sequence $\paren*{\hat{\Pr}_{s_{\ReLU}^n}(\varphi)}_{n \in \N}$ converges to
    \begin{equation}
        \hat{\Pr}_{s_{\ReLU}^\infty}(\varphi) \coloneqq \exp\left( - |b|^{\alpha} \|\TOp_{\ReLU}^{\dagger*}\curly{\varphi} \|_{L^{\alpha}}^{\alpha} \right)
    \end{equation}
    and
    \item the functional $\hat{\Pr}_{s_{\ReLU}^\infty}$ is continuous on $\Sch(\R^d)$.
\end{enumerate}
We first show that, for every $\varphi \in \Sch(\R^d)$,
\begin{equation}\label{eq:conv_cf}
    \lim_{n \rightarrow \infty} \hat{\Pr}_{s_{\ReLU}^n}(\varphi) = \hat{\Pr}_{s_{\ReLU}^\infty}(\varphi).
\end{equation}
Our derivation is inspired from the proof of Lemma~2 of \cite{fageot2020gaussian}. Since $s_{\ReLU}^{n}$ is a \emph{bona fide} generalized stochastic process that takes values in $\Sch'(\R^d)$, the functional $\hat{\Pr}_{s_{\ReLU}^n}(\varphi)$ is well-defined for $\varphi \in \Sch(\R^d)$. On the other hand, we observe that $\hat{\Pr}_{s_{\ReLU}^\infty}(\varphi)$ is also well-defined for $\varphi \in \Sch(\R^d)$ due to \cref{eq:Lp}. Next, we prove the convergence. The characteristic functional of $s_{\ReLU}^n$ is
\begin{equation}
    \hat{\Pr}_{s_{\ReLU}^n}(\varphi) = \exp\left( \int_{\R} \int_{\Sph^{d-1}} \Psi_n\left(\TOp_{\ReLU}^{\dagger*}\curly{\varphi}(\vec{u}, t)\right) \dd\vec{u} \dd t \right),
\end{equation}
where
\begin{equation}
    \Psi_n(\xi) \coloneqq n \left( \e^{-\frac{|b \xi|^{\alpha}}{n}} - 1 \right).
\end{equation}
For a fixed $\vec{z} \in \cyl$, we have that
\begin{equation}
    \Psi_n(\phi(\vec{z})) = n \left( \e^{-\frac{|b \phi(\vec{z})|^{\alpha}}{n}} - 1 \right) \xrightarrow[n \rightarrow \infty]{} -|b \phi(\vec{z}) |^{\alpha},
\end{equation}
where $\phi = \TOp_{\ReLU}^{\dagger*}\curly{\varphi}$. Thus, we need to show that
\begin{equation}\label{eq:conv_integral}
    \int_{\cyl} \Psi_n(\phi(\vec{z})) \dd \vec{z} \xrightarrow[n \rightarrow \infty]{} \int_{\cyl} -|b \phi(\vec{z}) |^{\alpha} \dd \vec{z}.
\end{equation}
From p. 1058 in \cite{fageot2020gaussian}, we have that
\begin{equation}
    |\Psi_n(\phi(\vec{z}))| \leq \sqrt{2} |b \phi(\vec{z}) |^{\alpha}.
\end{equation}
The function $\vec z \mapsto |b \phi(\vec{z}) |^{\alpha}$ is in $L^1(\cyl)$ due to \cref{eq:Lp}. Thus, we can apply the Lebesgue dominated convergence theorem to show that \cref{eq:conv_integral}, and consequently \cref{eq:conv_cf}, holds. Finally, the continuity of $\hat{\Pr}_{s_{\ReLU}^\infty}(\varphi)$ on $\Sch(\R^d)$ follows from the fact that the operator $\TOp_{\ReLU}^{\dagger*}$ continuously maps $\Sch(\R^d)$ to $L^p(\cyl)$ for $p \in [1,2]$ (cf., \Cref{eq:Lp}).
\end{proof}

\paragraph{Gaussianity of $s_{\ReLU}^{\infty}$} When $\alpha=2$, the characteristic functional of $s_{\ReLU}^{\infty}$ can be written as
\begin{equation} \label{eq:dervn_widelimit_gaussian_cf}
    \hat{\Pr}_{s_{\ReLU}^{\infty}}(\varphi) = \exp\left( \int_{\R} \int_{\Sph^{d-1}} \Psi_{\infty}\left(\TOp_{\ReLU}^{\dagger*}\curly{\varphi}(\vec{u}, t)\right) \dd\vec{u} \dd t \right),
\end{equation}
where $\varphi \in \Sch(\R^d)$ and $\Psi_{\infty}(\xi) = -|b \xi|^{2}$ for $\xi \in \R$. Using the moment generating properties of the characteristic functional (as in \cref{app:properties}), we get that the mean functional is 
\begin{equation}\label{eq:dervn_widelimit_gaussian_mean_functional}
    \mu_{s_{\ReLU}^{\infty}}(\varphi) = 0, \quad \varphi \in \Sch(\R^d),
\end{equation}
as $\Psi_{\infty}'(0) = 0$, and the covariance functional is
\begin{equation}\label{eq:dervn_widelimit_gaussian_cov_functional}
    \Sigma_{s_{\ReLU}^{\infty}}(\varphi_1, \varphi_2) = 2|b|^{2} \int_{\cyl} \TOp_{\ReLU}^{\dagger*}\curly{\varphi_1}(\vec{z}) \TOp_{\ReLU}^{\dagger*}\curly{\varphi_2}(\vec{z}) \dd\vec{z}, \quad \varphi_1, \varphi_2 \in \Sch(\R^d),
\end{equation}
as $\Psi_{\infty}''(0) = -2|b|^{2}$. Thus, from \cref{eq:dervn_widelimit_gaussian_cf,eq:dervn_widelimit_gaussian_mean_functional,eq:dervn_widelimit_gaussian_cov_functional,defn:Gauss-process}, we see that $s_{\ReLU}^{\infty}$ is a Gaussian process when $\alpha=2$.

\section{Discussion of the Numerical Examples} \label{app:more-figs}

We generated realizations of the random neural networks by taking advantage of the property that Poisson points are uniformly distributed in each finite volume (cf., \Cref{eq:Poi-unif}) combined with the fact that the width of a random neural network observed on a compact domain is a Poisson random variable with mean proportional to the rate parameter $\lambda$ multiplied by a property related to the geometry of the domain (cf., \cref{subsec:bounded-domain}).
In particular, the random neural network realizations in \cref{fig:Gauss-2D,fig:stable-2D} were plotted on the compact domain $\Omega = [-1, +1]^d$ and were generated according to the following procedure.
\begin{enumerate}
    \item Generate a Poisson random variable $N_{\lambda, \Omega}$ with mean $\lambda \abs{\mathcal{Z}_{\Omega}}$, where $\mathcal{Z}_{\Omega}$ was defined in \cref{eq:Z-Omega}.

    \item Generate $N_{\lambda, \Omega}$ points i.i.d.\ uniformly on the finite volume $\mathcal{Z}_{\Omega} \subset \cyl$, which we denote by $\curly{(\vec{w}_k, b_k)}_{k=1}^{N_{\lambda, \Omega}}$.

    \item Generate $N_{\lambda, \Omega}$ i.i.d.\ random variables according to the law $\Pr_V$, which we denote by $\curly{v_k}_{k=1}^{N_{\lambda, \Omega}}$.

    \item Construct the random neural network
    \begin{equation}
        s_{\ReLU}^\text{numeric}(\vec x) = \sum_{k=1}^{N_{\lambda, \Omega}} v_k \sq*{\ReLU(\vec{w}_k^\T\vec x - b_k) + \vec{c}_k^{\T}\vec x + c_{0, k}}
    \end{equation}
    according to the computation in \cref{eq:Poi-to-ReLU} with $\varepsilon \to 0$.
\end{enumerate}

The resulting random neural network $s_{\ReLU}^\text{numeric}$ is, up to an affine function, a realization of $\ReLUP(\lambda; \Pr_V)$. Finally, in order to highlight the linear regions of the generated networks, we color the top-down plots in \cref{fig:Gauss-2D,fig:stable-2D} according to the magnitude of the gradient of $s_{\ReLU}^\text{numeric}$. As the color map choice is arbitrary, the resulting plots are thus realizations of $\ReLUP(\lambda; \Pr_V)$ (since the magnitude of the gradient of an affine function is a constant, and therefore simply shifts the color map). We include some additional plots of the random neural networks in \cref{fig:Gauss-3D,fig:stable-3D}. These figures are surface plots of the random neural networks in \cref{fig:Gauss-2D,fig:stable-2D}, respectively.

\begin{figure}
    \centering
    \begin{minipage}[b]{0.24\linewidth}
        \centering
        \centerline{\includegraphics[width=\textwidth]{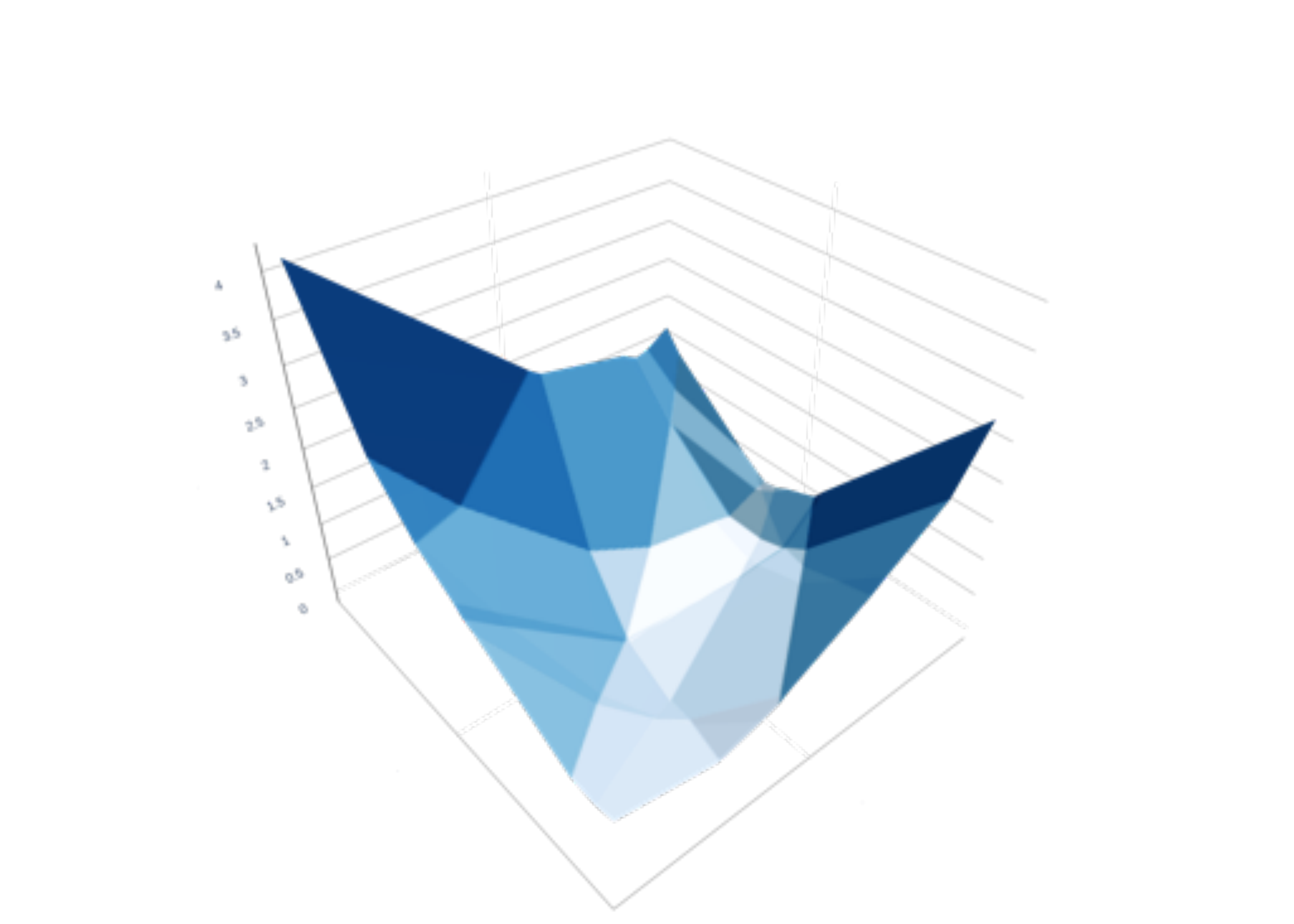}}
        (a) $\lambda = 1$
    \end{minipage}
    \begin{minipage}[b]{0.24\linewidth}
        \centering
        \centerline{\includegraphics[width=\textwidth]{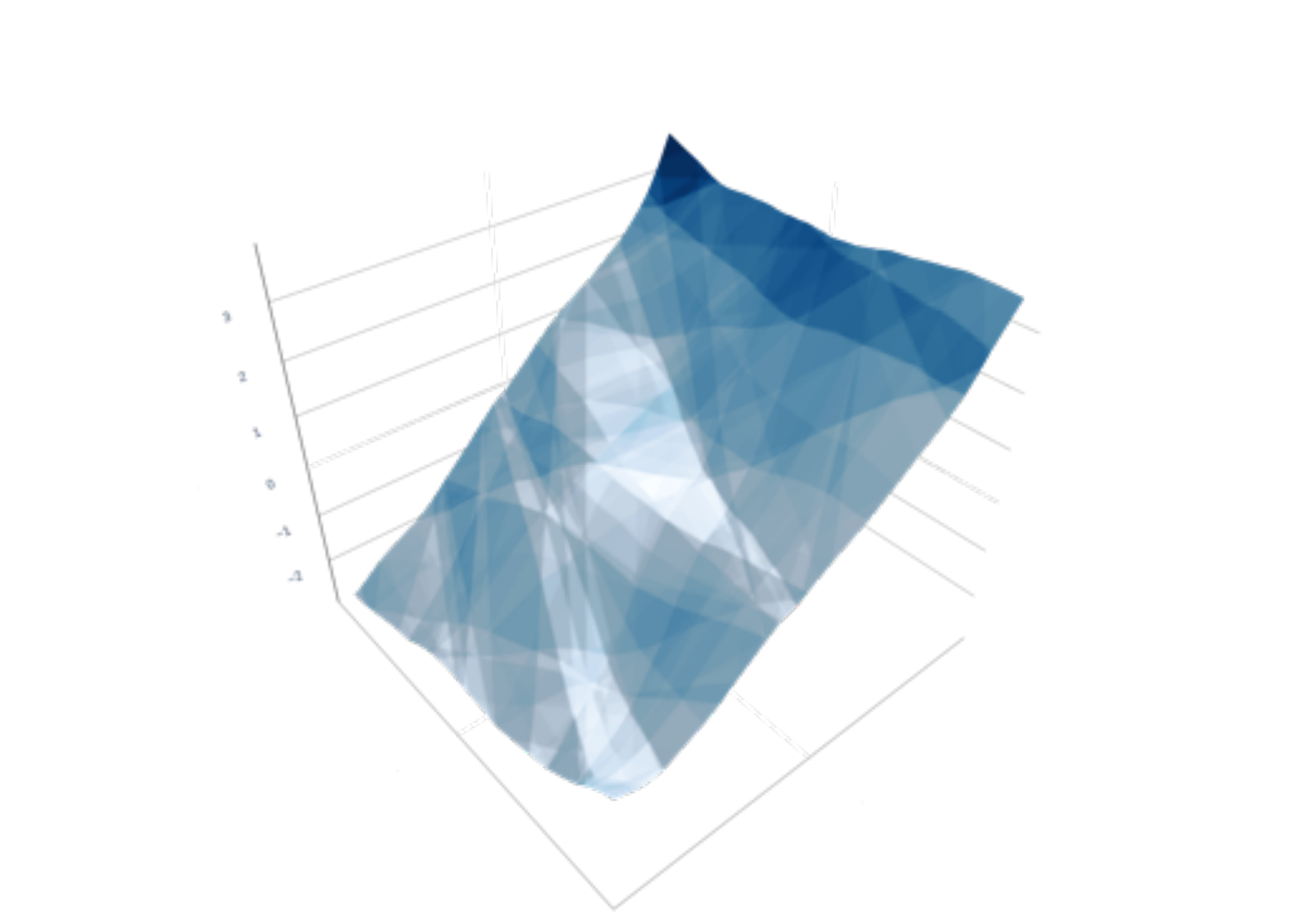}}
        (b) $\lambda = 10$
    \end{minipage}
    \begin{minipage}[b]{0.24\linewidth}
        \centering
        \centerline{\includegraphics[width=\textwidth]{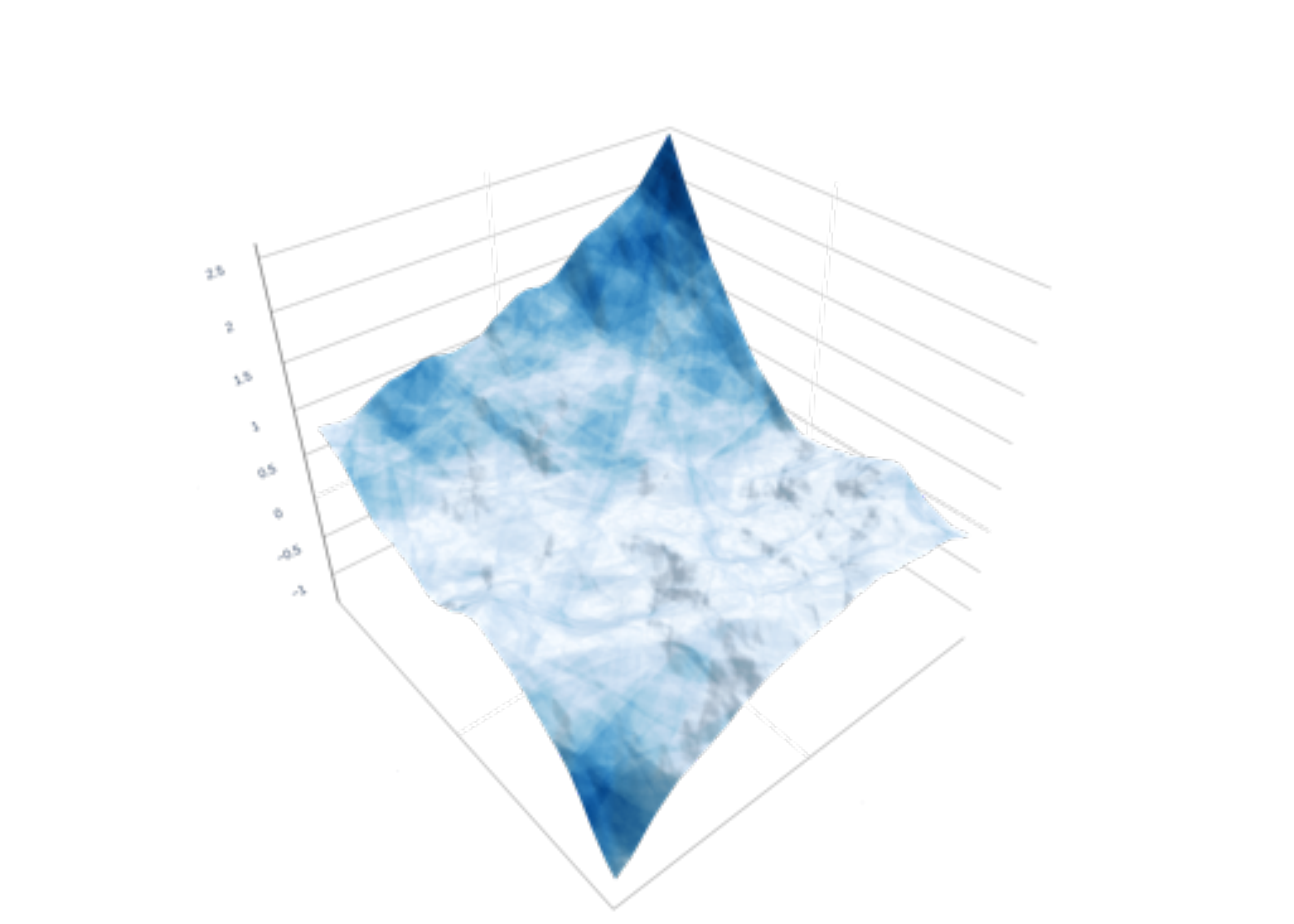}}
        (c) $\lambda = 100$
    \end{minipage}
    \begin{minipage}[b]{0.24\linewidth}
        \centering
        \centerline{\includegraphics[width=\textwidth]{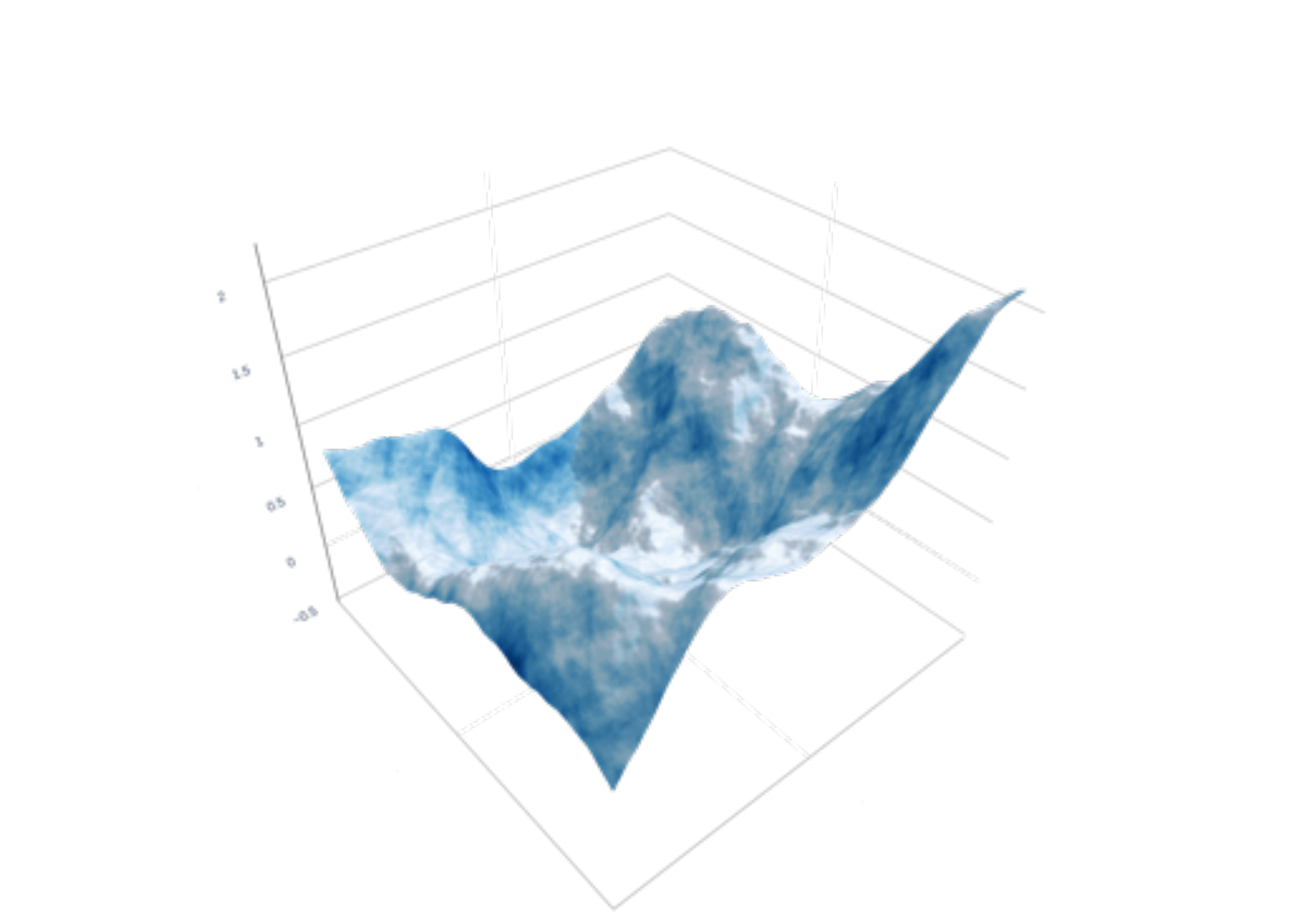}}
        (d) $\lambda = 1000$
    \end{minipage}
    \caption{$\Pr_V$ is Gaussian.}
    \label{fig:Gauss-3D}
\end{figure}

\begin{figure}
    \centering
    \begin{minipage}[b]{0.24\linewidth}
        \centering
        \centerline{\includegraphics[width=\textwidth]{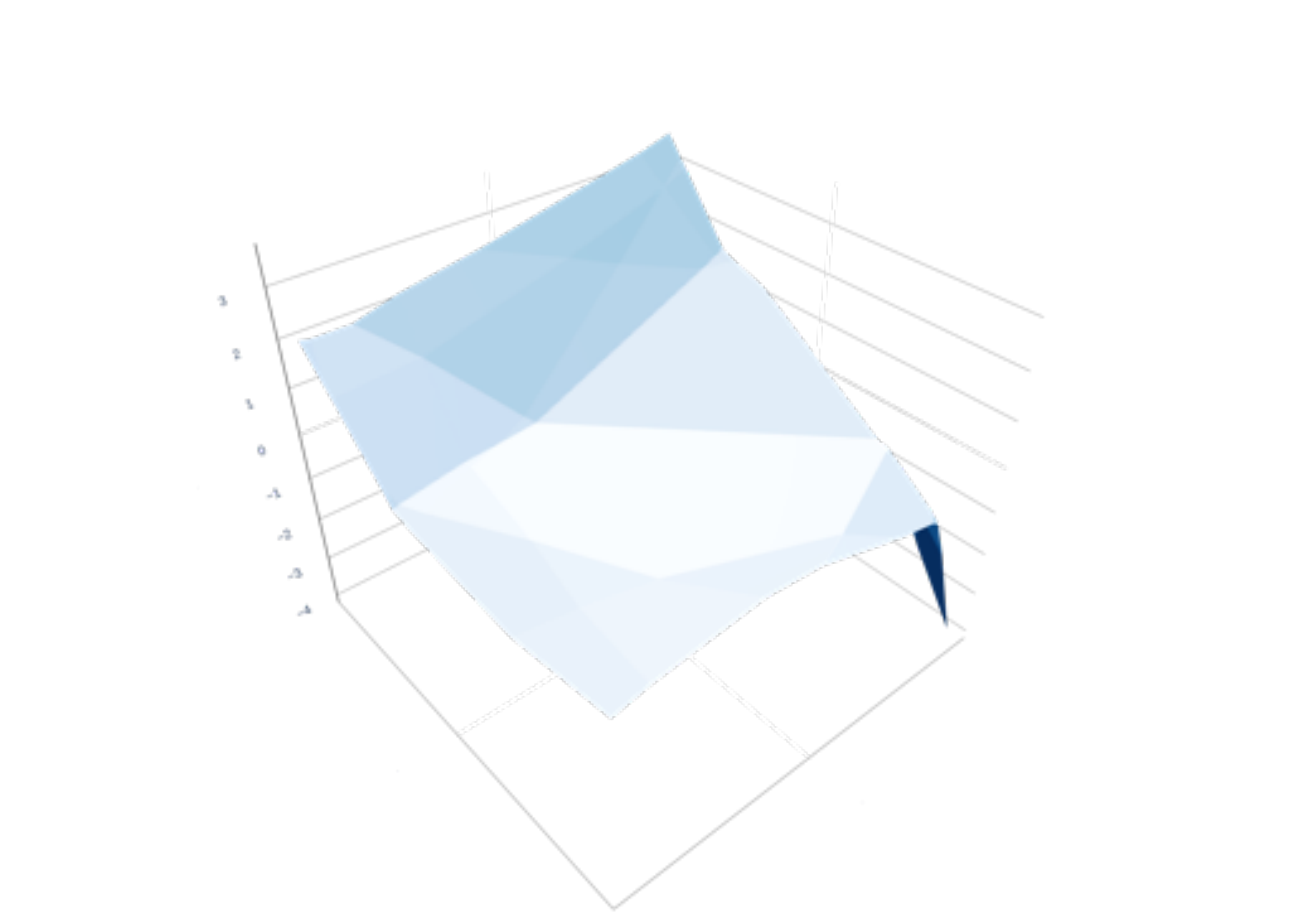}}
        (a) $\lambda = 1$
    \end{minipage}
    \begin{minipage}[b]{0.24\linewidth}
        \centering
        \centerline{\includegraphics[width=\textwidth]{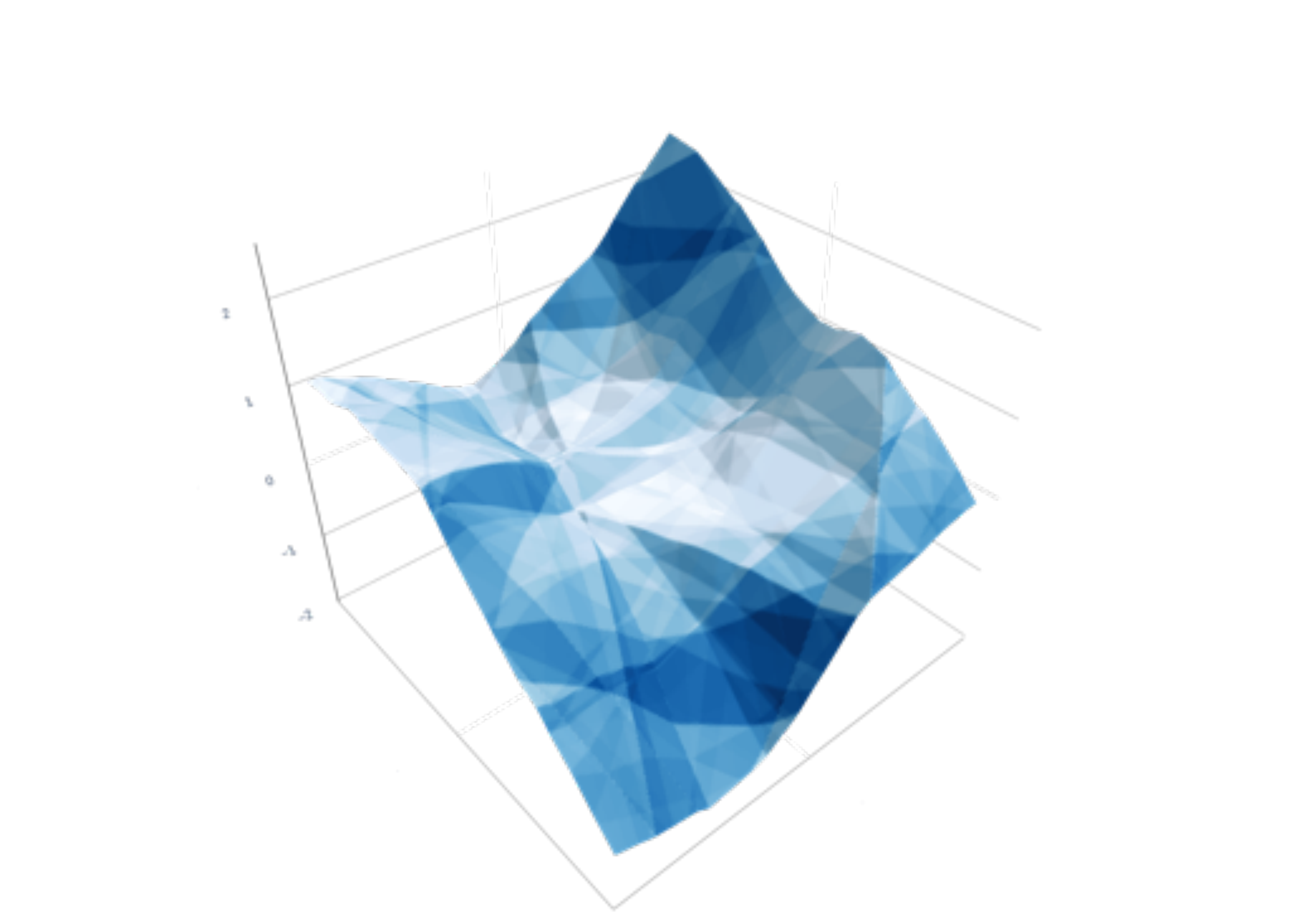}}
        (b) $\lambda = 10$
    \end{minipage}
    \begin{minipage}[b]{0.24\linewidth}
        \centering
        \centerline{\includegraphics[width=\textwidth]{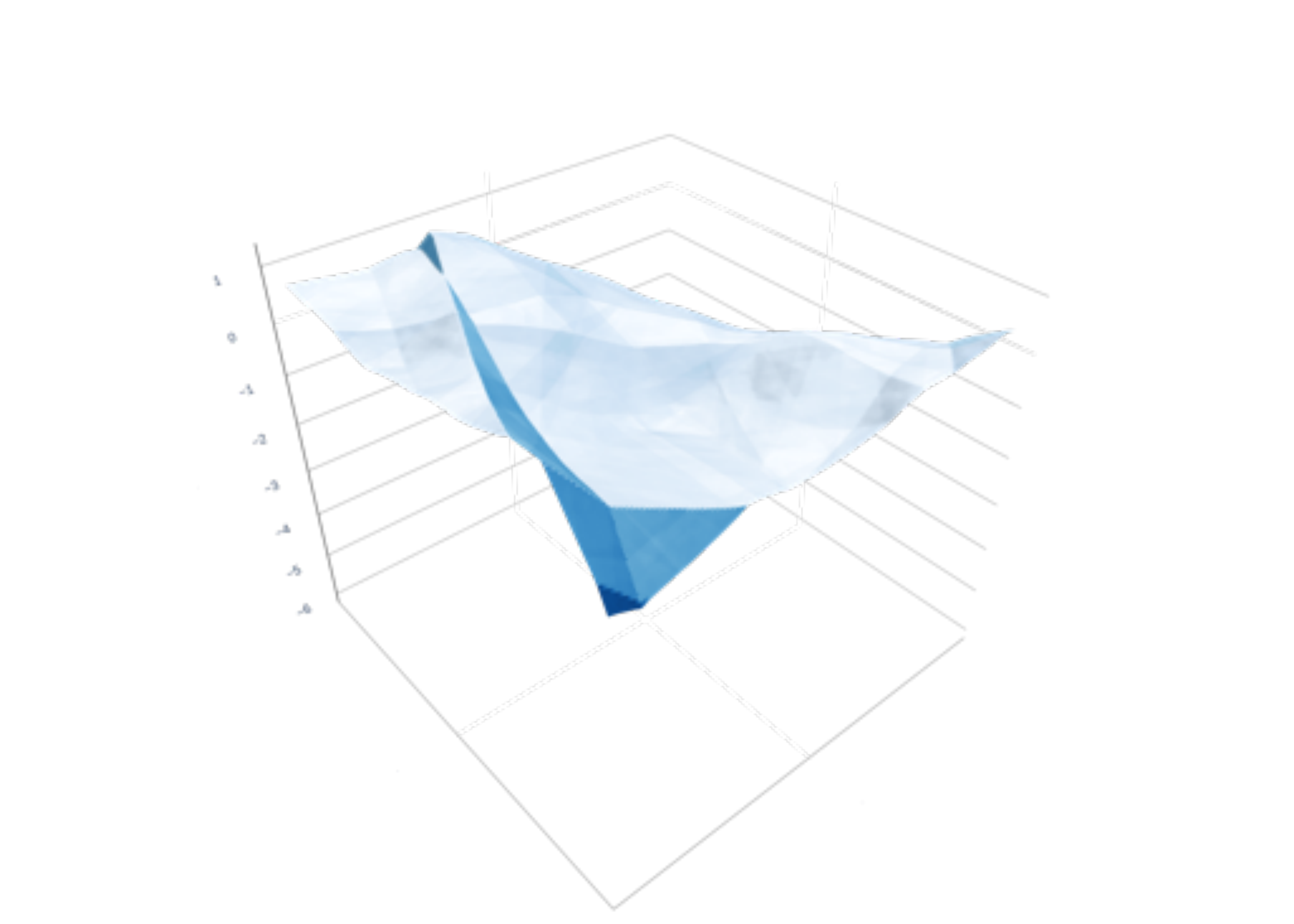}}
        (c) $\lambda = 100$
    \end{minipage}
    \begin{minipage}[b]{0.24\linewidth}
        \centering
        \centerline{\includegraphics[width=\textwidth]{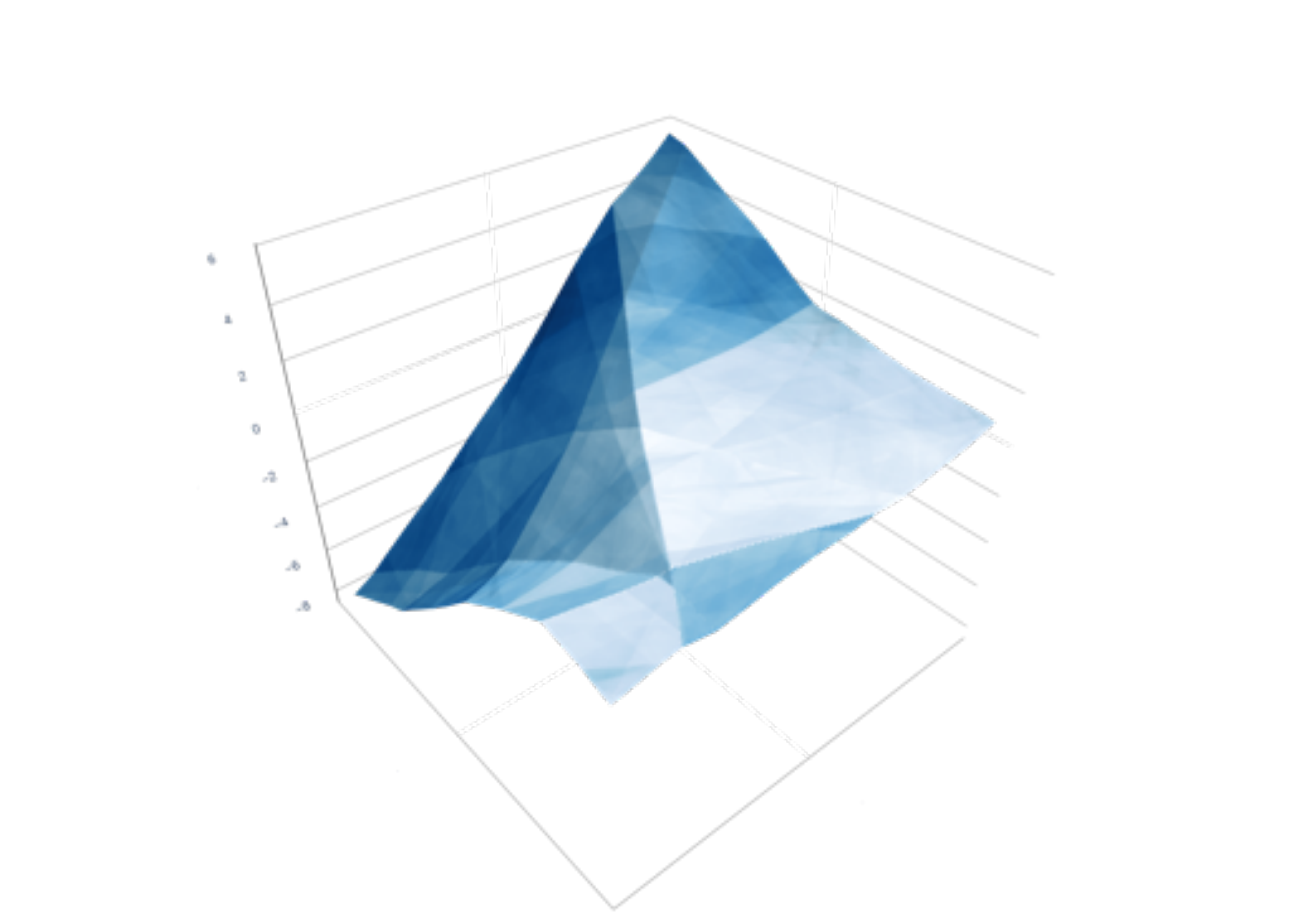}}
        (d) $\lambda = 1000$
    \end{minipage}
    \caption{$\Pr_V$ is symmetric ($\alpha = 1.25$)-stable.}
    \label{fig:stable-3D}
\end{figure}

\bibstyle{plain}
\bibliography{ref}

\end{document}